\newtheorem{theorem}{Theorem}[section]
\newtheorem{proposition}[theorem]{Proposition}
\theoremstyle{definition}
\newtheorem{definition}{Definition}[section]
\newcommand{\f}{f^{\mathcal{I}}_{y-c}}
\newcommand{\an}{\alpha_c^{\mathcal{I}, \mathcal{A}}}
\newcommand{\A}{\mathcal{A}}
\title{How Sampling Impacts the Robustness of Stochastic Neural Networks}
\author{%
  Sina Däubener and Asja Fischer\\
  Department of Computer Science\\
  Ruhr University Bochum, Germany \\
  \texttt{\{sina.daeubener, asja.fischer\}@rub.de} \\
}
\begin{document}

\maketitle

\begin{abstract}
Stochastic neural networks (SNNs) are random functions 
whose predictions are gained by averaging over multiple realizations. 
Consequently, a gradient-based adversarial example is calculated based on one set of samples and its classification on another set. 
In this paper, we derive a sufficient condition for such a stochastic prediction 
to be robust against a given sample-based attack. 
This allows us to identify the factors that lead to an increased robustness of SNNs and gives theoretical explanations for: 
(i) the well known observation, that increasing the amount of samples drawn for the estimation of adversarial examples increases the attack's strength,
(ii) why increasing the number of samples during an attack can not fully reduce the effect of stochasticity, 
(iii) why the sample size during inference does not influence the robustness, and
(iv) why a higher gradient variance and a shorter expected value of the gradient relates to a higher robustness. 
Our theoretical findings give a unified view on the mechanisms underlying previously proposed approaches for increasing attack strengths or model robustness 
and are verified by an extensive empirical analysis.
\end{abstract}

\section{Introduction}
Since the discovery of adversarial examples~\citep{biggio_adv_att, intruding_Szegedy},
a significant amount 
of research was dedicated to hinder
attacks~\citep[e.g.][]{ madry2018towards,papernot_distillation, trades}, to enhance attack strategies~\citep[e.g.][]{ obfuscated_grad, survey_adv_vision, carlini_bypassing, obscurity_robustness} or to derive ways to certify model robustness~\citep[e.g.][]{cohen2019certified, lecuyer2019certified}. 
Robustness guarantees often
specify an $\epsilon$-ball around input points in which perturbations do not lead to a label change \citep{hein_formal_gurantees, Croce2020Provable}. 
The maximal possible radius of such  an $\epsilon$-ball corresponds to the distance of the input point to the nearest decision boundary, which on the other hand is equal to the length of the smallest perturbation vector that leads to a misclassification (c.f. figure~\ref{fig:decision_boundaries}a)).
Such a robustness analysis assumes 
that the decision boundaries are fixed and
that the attacker is able to estimate (at least approximately) this minimal perturbation vector, which is a reasonable assumption for deterministic networks but usually does not hold for stochastic neural networks (SNNs).

Stochastic neural networks, and stochastic classifiers more generally, are random functions and predictions are given by the expected value %
of the random function for the given input. In practice, this expectation is usually
not tractable and hence it is approximated by averaging over multiple realizations of the random function. This approximation leads to the challenging setting where predictions, decision boundaries, and gradients become random variables themselves. 
Hence, under an adversarial attack, the decision boundaries used for calculating the %
adversarial  example
and those used when predicting the label %
of
the resulting %
adversarial  example differ. 
This means that the attacker can not estimate the optimal perturbation direction 
i.e., the direction to the closest %
decision boundary %
of
the network that will be sampled during inference c.f. figure~\ref{fig:decision_boundaries} b), c).

In this paper, %
we study how robustness  of SNNs arises from this misalignment of the attack direction and the optimal perturbation direction during inference that results from the stochasticity inherent to stochastic classifiers. 

We make the following contributions:
First, we derive a %
sufficient condition for a SNN prediction relying on one set of samples to be robust against %
an attack that was calculated on a second set of samples. 
Second, we discuss how %
model properties and sample sizes impact this condition
which does not only allows us to answer the questions stated in the abstract but also to explain the success of recently proposed defense mechanism from a simple unifying geometric perspective.  
Lastly, we conduct an empirical analysis that demonstrates that the novel theoretical insights perfectly match what we observe in practice.

\begin{figure}[]
\centering
\subfloat[]{\includegraphics[width=0.29\columnwidth]{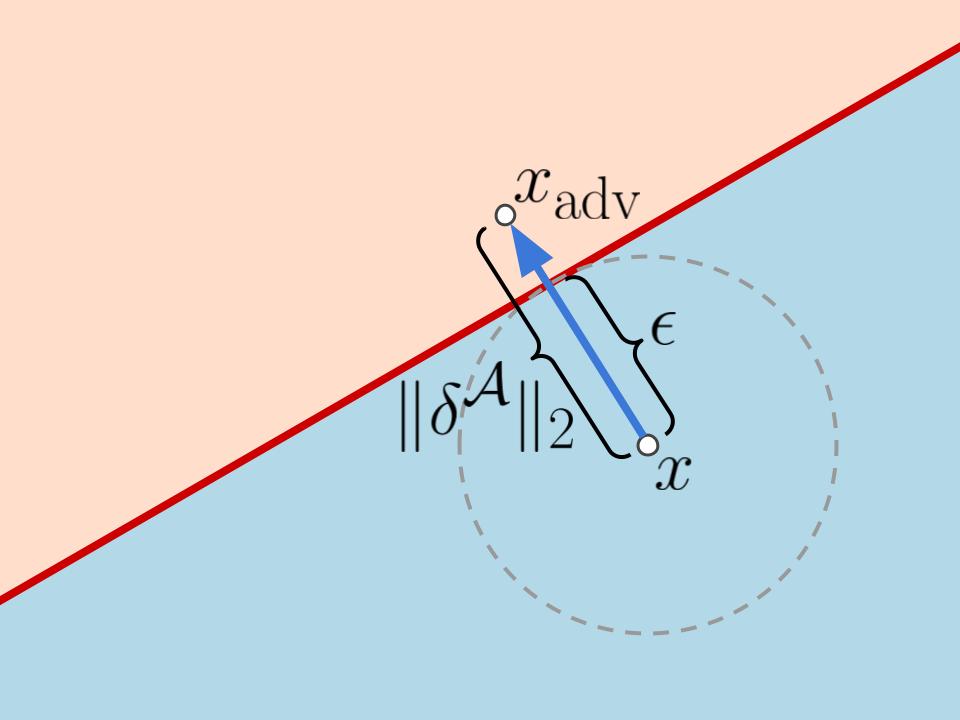} }%
\qquad
\subfloat[]{\includegraphics[width=0.29\columnwidth]{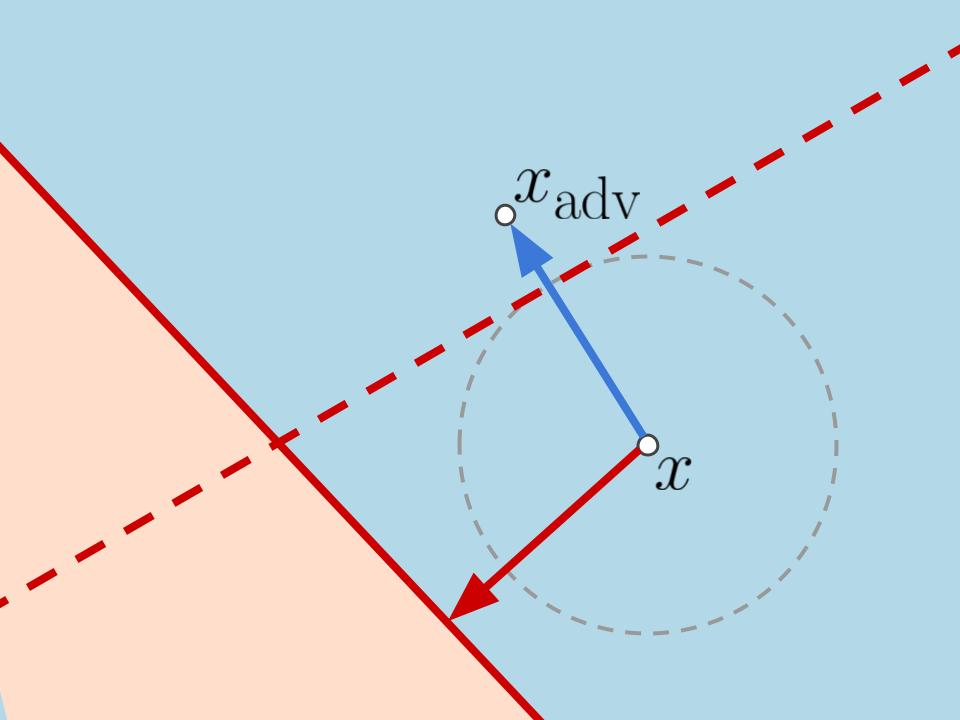} }%
\qquad
\subfloat[]{\includegraphics[width=0.29\columnwidth]{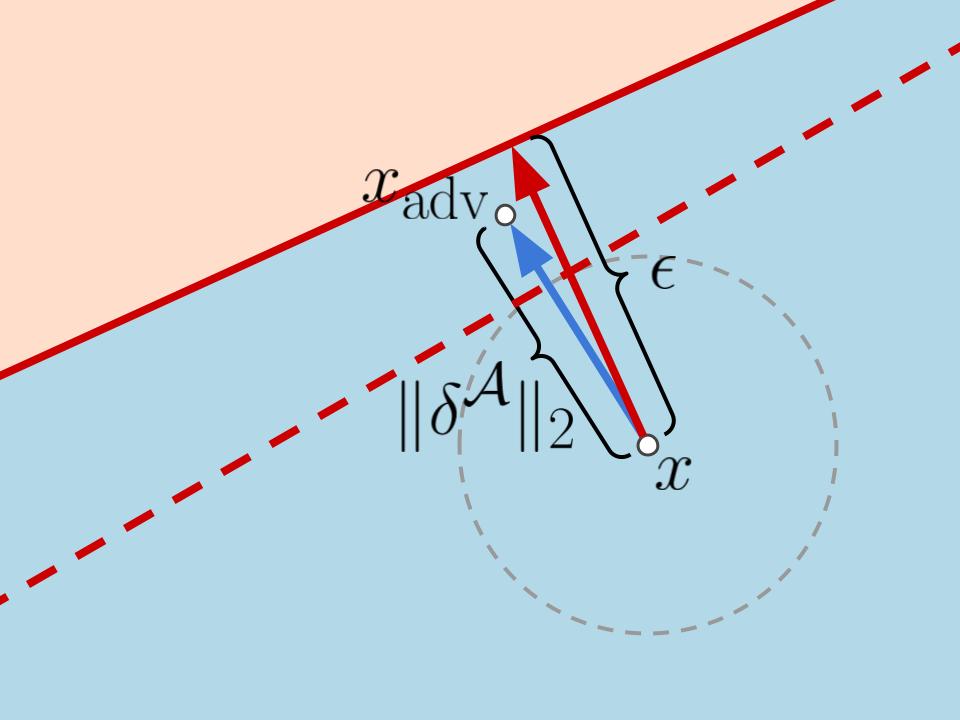} }%
\caption{
(Un-)successful attacks on a binary stochastic classifier with a  linear decision boundary.
a) an adversarial example
$x_{\text{adv}}=x+\delta^{\mathcal{A}}$ is created by shifting $x$ in the direction of 
the closest
decision boundary, indicated by the blue arrow. $\epsilon$ is the minimal needed attack length and %
the attack can
only be an adversarial example if $ \| \delta^{\mathcal{A}} \|_2 \geq \epsilon$ holds. 
b) and c) show 
the stochastic
decision boundaries during attack (dashed)  
and inference (solid). The red arrows indicate the shortest direction to the latter, respectively.
In b) $\delta^{\mathcal{A}}$  moves $x$ even further away from the decision boundary used during inference, while in c) the magnitude 
of $\delta^{\mathcal{A}}$ is too short to result in a successful attack.}
\label{fig:decision_boundaries}
\end{figure}
\section{Related work}
Several works proposed stochastic defense mechanism to increase adversarial robustness~\citep[e.g.][]{BART, xie2018mitigating}. \citet{obfuscated_grad} linked their success to gradient obfuscating during the attack and showed
that increasing the number of samples for approximating the gradient during attack  leads to a sever decrease in adversarial accuracy.

However, SNNs were still found 
to have an increased robustness even w.r.t.~stronger attacks~\citep{simpleSNN, He_2019_CVPR, weight_covariance-eustratiadis21a,learn2perturb}.
Their success 
was attributed to different effects of stochasticity, e.g.~model smoothing~\citep{liu_selfensemble, Addepalli_2021_CVPR} or
 diversification of the gradients~\citep{GradDiv, diverse_directions_bender20a}.
While a lot of work analyzed the robustness of deterministic neural networks \cite[e.g.][]{madry2018towards, croce_max_lin_regions, Croce2020Provable, cvpr_robust, yang2021ensemble_robust},
the robustness of %
SNNs
is less well understood. One line of research %
focused on the robustness of Bayesian neural networks (BNNs).
\citet{wicker2020probabilistic} certified robustness of BNNs using 
interval bound propagation techniques which they later also employed 
to derive guarantees for the robustness of BNNs 
with modified adversarial training \citep{wicker_adv_train_BNN}.
Moreover, \citet{carbone2020robustness} investigate robustness 
in the infinite-width infinite-sample limit.
Lastly, \citet{nips19_theory_robustness_randomization} derived theoretical robustness guarantees for randomized networks, %
where the randomization
is based on additive noise from an exponential family distribution.
This leads to a generalization of the robustness guarantees derived previously by~\citet{lecuyer2019certified} and~\citet{cohen2019certified}. Those certified robustness guarantees specify the radius of an $\ell_2$-ball in which the prediction does not change. In contrast, our results specify the robustness that results from the difficulty to identify the ideal attack direction and imply that even for perturbations outside this confidence ball, a gradient-based attack has a chance of not being successful.

To the best of our knowledge, no 
existing theoretical analysis of SNNs  explicitly discusses either the effect of stochasticity during inference nor the %
impact
of the sample size during attack and prediction on the robustness.

\section{Preliminaries}
\label{sec:preliminaries}
We first clarify the terminology before we state the main theoretical results of our paper.%
\paragraph{Stochastic classifiers}
\label{subsec:stoch_classifiers}
We use the term \textit{stochastic classifiers} for all classifiers which have an inherent stochasticity 
through the use of random variables in the model. Formally, we define them as follows:

\begin{definition}[Stochastic classifiers]
A stochastic classifier with $k$ classes corresponds to a function
$ f:\mathbb{R}^d\times \Omega^h  \rightarrow \mathbb{R}^k$ that maps a pair $(x, \Theta)$ to the output $f(x, \Theta)=( f_1(x,\Theta), \dots  f_k(x,\Theta))^T$, where $x \in \mathbb{R}^d$ is an input vector, $\Theta \in \Omega^h, \Theta \sim p(\Theta)$ is a random vector, and $f_c(x,\Theta)$ with $c \in \{1,\dots,k\}$ are the discriminant functions for each class. 
The prediction of a stochastic classifier for
an input $x$ %
is given by $%
\mathbb{E}_{\Theta}
[f(x,\Theta)]$ 
and the predicted class by
$
\arg \max_{c} %
\mathbb{E}_{\Theta}
[f_c(x,\Theta)] .
$
\end{definition}
This generic definition of stochastic classifiers %
covers linear models with random 
weights, but also more complicated methods like %
BNNs~\citep{neal}, infinite mixtures~\citep{daubener2020investigating}, Monte Carlo dropout networks \citep{mc_dropout_gal},
randomized smoothing as proposed by \citet{lecuyer2019certified}\footnote{Note, that,
in contrast to \citet{lecuyer2019certified} the variant of randomized smoothing proposed by
\citet{cohen2019certified} 
does not %
define
the 
prediction of the SNN to be given by the
expectation over $\Theta$ %
but %
by
$\arg \max _{c \in \mathcal{Y}} P(f(x+ \epsilon) = c)$, where $\epsilon \sim \mathcal{N}(0, \sigma^2 I)$. %
This makes our results not directly applicable to their networks.
However, %
they can probably be transferred to the decision boundaries and attacks  corresponding to their form of generating  predictions.
},
and any other class
of neural networks which use stochasticity at the input level~\citep[e.g.][]
{%
BART}
or within the network~\citep[e.g.][]{He_2019_CVPR, learn2perturb, liu_selfensemble, simpleSNN, weight_covariance-eustratiadis21a}.
In cases where
$%
\mathbb{E}_{\Theta} [f(x,\Theta)]$
is not tractable --- which is in practice  usually the case ---
the prediction of the stochastic classifier is 
approximated by its Monte Carlo (MC) estimate 
$
f^{\mathcal{S}}(x) := \frac{1}{S}\sum_{s=1}^S  f(x, \theta_s)
$, 
where the
samples in the sample set $\mathcal{S}= \{\theta_1,\dots, \theta_S\}$ 
are drawn from $p(\Theta)$.

\paragraph{Adversarial attacks on stochastic classifiers}
\label{subsec:adv_attacks}

Informally speaking, adversarial examples are inputs that %
are modified such that %
the network predicts wrong classes even though %
the changes to the inputs are not 
perceptible %
for a human. 
More precisely, let $x$ be an %
input with corresponding true label $y \in \{1,\dots,k\}$ that is classified correctly by the multi-class classifier $f(\cdot)$,  that is $\arg \max_c f_c(x)=y.$
We consider the most common attack form %
which
aims at %
misclassifying
$x$ by allowing for some predefined maximum magnitude of perturbation. That is, the attacker targets the optimization problem 
\begin{align} \label{eq:max_allowable_attack}
&\mathit{maximize} \;\;\; \mathcal{L}(f(x+\delta), y) \enspace , \; \; \mathit{s.t.} \;\;\; \|\delta\|_p \leq \eta \enspace, 
\end{align}
where $\mathcal{L}(\cdot, \cdot)$ 
is the loss function, 
$\|\cdot\|_p$ with $p \geq 1$ is the $\ell_p$-norm, 
and $\eta$ is the  \textit{perturbation strength}, i.e.~the maximal allowed magnitude of the attack (see %
figure~\ref{fig:decision_boundaries}a) for an illustration).
Common choices of loss functions include %
the cross-entropy loss and the negative margin loss $\mathcal{L}_{\text{margin}}(f(x+\delta), y) = -(f_y(x+\delta )- \max_{c\neq y} f_c(x+\delta))$.
In practice, targeting the optimization problem in eq.~\eqref{eq:max_allowable_attack} usually involves estimating $\delta$ by performing some kind of gradient-based optimization on the loss function~\citep[e.g.][]{Goodfellow_fgsm, madry2018towards}.
If $f^{\mathcal{S}}(\cdot)$ is a stochastic classifier (as introduced in the previous paragraph) approximated by its MC estimate, the loss gradient with respect to the input is stochastic as well and given by %
\begin{equation}
\label{eq:stochastic_gradient}
    \nabla_x \mathcal{L}(f^{\mathcal{S}}(x), y) =   \nabla_x \mathcal{L}\left( \frac{1}{S}\sum_{s=1}^S f(x,\theta_s), y\right) \enspace .
\end{equation}
Note, that for linear $\mathcal{L}$, as for example the margin loss for a fixed class $c$, the loss gradient of the mean prediction is equivalent to the mean of the loss gradients for single sample predictions. However, in general (e.g.~for the cross entropy loss) this is not the case.

\section{Geometrical robustness analysis}
\label{sec:theory}
After clarifying our understanding of stochastic classifiers and on how gradient-based adversarial attacks are conducted on them, we are now able to present %
a %
simple 
but general geometrical, adversarial robustness analysis for %
stochastic classifiers. It is motivated by the following observation: %
Each time a stochastic classifier is used for
a prediction, another set of realizations from the random vectors %
are drawn, resulting in another set of discriminant functions and corresponding decision boundaries. 
To put it into other words, the classifier gets a random variable itself. 
As a consequence, calculating a gradient-based adversarial attack for an input $x$ is done with respect to a drawn set of realizations
$\mathcal{A}=\{\theta^a_1, \theta^a_2, \dots, \theta^a_{S^{\mathcal{A}}} \}$, and thus $\delta$ from eq.~\eqref{eq:max_allowable_attack} is specific for a  
realization $f^{\mathcal{A}}(x)$ %
of the classifier,
which we specify by writing $\delta^{\mathcal{A}}$.
During inference, the resulting adversarial example $x_{\text{adv}} = x+ \delta^{\mathcal{A}}$ 
is then fed to another random classifier $f^{\mathcal{I}}$, which is based on a different set of realizations from the random vector %
$\mathcal{I}=\{\theta^i_1,\theta^i_2, \dots, \theta^i_{S^{\mathcal{I}}}\}$. 
From this perspective, the prediction %
model
$f^{\mathcal{I}}$ is robust against the attack if the distance from $x$ to the decision boundary (given by $f^{\mathcal{I}}_y(x)- f^{\mathcal{I}}_{c \neq y}(x) =0$ ) in the \emph{direction of $\delta^{\mathcal{A}}$} is larger than the length of $\delta^{\mathcal{A}}$, as illustrated in figures~\ref{fig:decision_boundaries}b) and c). 

\paragraph{Robustness conditions for stochastic attacks}
For a classifier with linear discriminant functions, we are able to turn the previously described observation %
into a theorem in which we 
derive a sufficient and necessary condition for the prediction %
model
to be robust against a given attack.
\begin{theorem}[Sufficient and necessary robustness condition for linear classifiers]
\label{thm:linear}
Let $f:\mathbb{R}^d\times \Omega^h  \rightarrow \mathbb{R}^k$ be a stochastic classifier with linear discriminant functions and $f^{\mathcal{A}}$ and  $f^{\mathcal{I}}$ be two MC estimates of the classifier. %
Let 
$x \in \mathbb{R}^d $ be a data point with label $y \in \{1,\dots,k\}$ and $\arg \max_c f_c^{\mathcal{A}}(x)= \arg \max_c f_c^{\mathcal{I}}(x)=y$, and let $x_{\text{adv}}=x+ \delta^{\mathcal{A}}$ be an adversarial example computed for solving the minimization problem~\eqref{eq:max_allowable_attack}
 for $f^{\mathcal{A}}$. 
It holds that $\arg \max_{c} f^{\mathcal{I}}_c(x+ \delta^{\mathcal{A}}) =y $ if and only if
\begin{equation}\label{eq:r_linear}
     \min_{c\neq y} \tilde{r}^{\mathcal{I}}_c > \|\delta^{\mathcal{A}}\|_2 \enspace,      \text{ with }
\end{equation}
\begin{equation}
      \tilde{r}^{\mathcal{I}}_c = \begin{cases}
      \infty \enspace ,   &\text{if} \;\;   \cos(\alpha_c^{\mathcal{I}, \mathcal{A}})= \frac{\langle -\nabla_x (f^{\mathcal{I}}_y(x) - f^{\mathcal{I}}_c(x)), \delta^{\mathcal{A}} \rangle}{\|\nabla_x (f^{\mathcal{I}}_y(x) - f^{\mathcal{I}}_c(x)) \|_2 \cdot \|\delta^{\mathcal{A}} \|_2}  \leq 0 \\
      \frac{ f^{\mathcal{I}}_y \left(x \right) - f^{\mathcal{I}}_c\left(x \right) }{ \| \nabla_x (f^{\mathcal{I}}_y(x) - f^{\mathcal{I}}_c(x) ) \|_2  \cdot \cos(\alpha_c^{\mathcal{I}, \mathcal{A}})} \enspace ,
        &\text{otherwise \enspace,
        }
        \nonumber
      \end{cases}
\end{equation}
where $\alpha_c^{\mathcal{I}, \mathcal{A}}$ is the angle between $-\nabla_x (f^{\mathcal{I}}_y(x) - f^{\mathcal{I}}_c(x) )$ and $\delta^{\mathcal{A}}$.

\end{theorem}

The proof which is based on Taylor expansion is given in supplement~A. %
The conditions for $\cos(\alpha_c^{\mathcal{I}, \mathcal{A}})$ have a nice geometrical interpretation:
An angle of more than 90° %
(which corresponds to a negative cosine value) 
indicates that the gradient $-\nabla_x (f^{\mathcal{I}}_y(x) - f^{\mathcal{I}}_c(x) )$ and 
the perturbation
$\delta^{\mathcal{A}}$  point into ``opposite'' directions and thus even for infinitely long moves into the direction of $\delta^{\mathcal{A}}$, the predicted label for $x$ will not change to class $c \neq y$.  
For positive cosine values,
$\tilde{r}^{\mathcal{I}}_c$ 
specifies the distance to the decision boundary in the attack direction. It looks similar to the minimal distance to the decision boundary in a deterministic 
setting which is given by $\frac{ f_y \left(x \right) - f_c\left(x \right) }{ \| \nabla_x (f_y(x) - f_c(x) ) \|_2 }$ and which %
is recovered if $\cos(\alpha_c^{\mathcal{I}, \mathcal{A}}) =1$.
A cosine value of one however can only occur if the gradient and perturbation direction are identical
i.e.,~if the margin loss is used for calculating  the attack direction and
if attack and inference model are identical. %
In practice, the latter is almost surely not the case due to
the finite sample approximation, and thus %
$\cos(\alpha_c^{\mathcal{I}, \mathcal{A}}) < 1$. 
This illustrates the robustness advantage induces by stochasticity.

The derived conditions may locally hold for classifiers which can be reasonably well approximated by a first-order Taylor approximation. To derive further guarantees, we can relax the linearity assumption by assuming discriminant functions which are $L$-smooth  as defined in the following.

\begin{definition}[$L$-smoothness~\citep{yang2021ensemble_robust}]
A differentiable function $f:\mathbb{R}^d \rightarrow \mathbb{R}^k$ is $L$-smooth, if for any $x_1, x_2\in \mathbb{R}^d$ and any output %
$c\in \{1,\dots,k\}$: 
\[
\frac{\| \nabla_{x_1} f_c(x_1)- \nabla_{x_2} f_c(x_2)\|_2}{\| x_1 - x_2\|_2} \leq L \enspace.
\]
\end{definition}
For such smooth discriminant function we can derive a sufficient (but not necessary\footnote{The necessity of this condition is not given since for non-linear decision boundaries it is possible that  behind a region of a different class there exists another region of class $y$ that an attack ends in if $\delta^{\mathcal{A}}$ is long enough.}) robustness condition specified by the following theorem, which is proven in supplement~A. %

\begin{theorem}[Sufficient condition for the robustness of a L-smooth stochastic classifier]
\label{thm:robustness_beta}
In the setting of Theorem~\ref{thm:linear}, let $f^{\mathcal{A}}$ and  $f^{\mathcal{I}}$ be $L$-smooth (instead of linear) discriminant functions. %
Then it holds that $\arg \max_c f_c^{\mathcal{I}}(x+ \delta^{\mathcal{A}}) = y$ if $\min_{c \neq y} r_c^{\mathcal{I}}> \| \delta^{\mathcal{A}}\|_2$ with 
\begin{equation*}
      r_c^{\mathcal{I}} =  \begin{cases}
      &\infty  \enspace , \enspace \text{if}  \enspace  \| \nabla_x (f^{\mathcal{I}}_y(x) - f^{\mathcal{I}}_c(x))\|_2  \cdot \cos(\alpha_c^{\mathcal{I}, \mathcal{A}}) + \frac{L}{2}  \cdot \| \delta^{\mathcal{A}} \|_2 \leq 0 \enspace ,  \\
      &\frac{f^{\mathcal{I}}_y(x) - f^{\mathcal{I}}_c(x)}{ \| \nabla_x (f^{\mathcal{I}}_y(x) - f^{\mathcal{I}}_c(x))\|_2  \cdot \cos(\alpha_c^{\mathcal{I}, \mathcal{A}}) + \frac{L}{2} \cdot \| \delta^{\mathcal{A}} \|_2  } \enspace , \enspace \text{otherwise
      } \end{cases}  \\
 \end{equation*}
with $\cos(\alpha^{\mathcal{I}, \mathcal{A}}_c)$ as in theorem~\ref{thm:linear}.
\end{theorem}
\paragraph{Factors influencing the robustness of stochastic classifiers}
Since in practice neither the parameter set $\mathcal{A}$ nor $\mathcal{I}$ is fixed, the quantity %
better
describing the practical robustness of a stochastic classifier with linear discriminant functions is
\begin{equation}\label{eq:probability}
    \mathbb{P}(\min_{c \neq y} \tilde r^{\mathcal{I}}_c >\|\delta^{\mathcal{A}}\|_2) \enspace ,
\end{equation} 
where $\tilde{r}^{\mathcal{I}}_c$ and $\delta^{\mathcal{A}}$ are random variables. 
For $L$-smooth models,
replacing $\tilde r^{\mathcal{I}}_c$ by $ r^{\mathcal{I}}_c$ leads to a lower %
bound on the  robustness.
Deriving an analytic expression for this probability is a hard problem.  
However, based on 
theorems~\ref{thm:linear} and \ref{thm:robustness_beta} %
it becomes clear
that a larger $\min_{c \neq y} \tilde{r}_c^{\mathcal{I}}$  %
relates to increasing the probability in eq.~\eqref{eq:probability} and thus to an increased robustness. 
We note
that
$\tilde{r}_c^{\mathcal{I}}$ with $c \neq y$ grows with i) larger prediction margins $f^{\mathcal{I}}_y(x) -f^{\mathcal{I}}_c(x)$, 
 ii) smaller gradient norms $\|\nabla_x (f^{\mathcal{I}}_y(x) - f^{\mathcal{I}}_c(x) )\|_2$, and 
iii) larger angles $\alpha_c^{\mathcal{I}, \mathcal{A} }$.
Larger prediction margins and smaller gradient norms were also found to positively impact the robustness of deterministic networks~\citep[e.g.][]{Ross_Doshi-Velez_2018}. In contrast, the dependency on the angle is unique to the stochastic setting. We therefore focus on the analysis of this factor in the following.

\paragraph{%
Analyzing the
expected angle} 
The angle $\an$ depends on both terms $-\nabla_x (f_y^{\mathcal{I}}(x)- f_c^{\mathcal{I}}(x))$
(for which we use the shorthand $ -\nabla_x \f(x)$ in the following)
and $\delta^{\A}$.
For further analysis, we first rewrite the gradient as
\begin{equation*}
    -\nabla_x f^{\mathcal{I}}_{y-c}(x) = \nabla_x \left( \frac{1}{S^{\mathcal{I}}}\sum_{s=1}^{S^{\mathcal{I}}} -f_{y-c}(x, \theta_s^i) \right) 
     =  \frac{1}{S^{\mathcal{I}}}\sum_{s=1}^{S^{\mathcal{I}}} -\nabla_x f_{y-c}(x, \theta_s^i) 
    \enspace.
\end{equation*}
Let $\mu= \mathbb{E}_{\Theta}[-\nabla_x f_{y-c}(x, \theta_s^i)]$ and $\Sigma$ be the covariance of $-\nabla_x f_{y-c}(x, \theta_s^i$), then it follows from the central limit theorem that for sufficiently many samples $-\nabla_x f^{\mathcal{I}}_{y-c}(x)
\sim \mathcal{N} \left( \mu, \frac{\Sigma}{S^{\mathcal{I}}} \right)$.
For simplicity let us assume that
the attack is based on the margin loss and only one iteration of gradient ascent. In this case,
$\delta^{\mathcal{A}}= \frac{\eta}{\|\hat{\delta}\|_2} \cdot \hat{\delta}$, with
$\hat{\delta} \sim \mathcal{N} \left( \mu, \frac{\Sigma}{S^{\mathcal{A}}} \right).$ %
Note, that we can neglect the scaling of the attack vector, since the angle only depends on $\hat{\delta}$.
Therefore, estimating the %
distribution of
$\alpha_c^{\mathcal{I}, \mathcal{A}}$ 
corresponds to estimating the 
distribution of the
angle
between two independent multivariate Gaussian random vectors with the same mean and (potentially) differently scaled versions of the same covariance.\footnote{In the case of more complicated attacks the mean will differ as well but we expect the findings of these section to generalize %
also to this scenario.
}
It is known for vectors from normalized standard multivariate Gaussian distributions that the %
mode of the distribution of the angle between two random vectors is equal to 90° and that %
with increased
dimension the concentration around this mode gets tighter ~\citep{cai_dist_angle}.
Unfortunately, deriving a closed form expression for the distribution or expectation in the general case is a challenging task and beyond the scope of this paper. However, we conjecture that the expectation of the angle increases proportionally with the variance and anti-proportionally with the norm of the mean. 
We illustrate %
the intuition behind this hypothesis in
figure~\ref{fig:exp_angle}. %
To empirically verify the correctness of this hypothesis
we conducted an %
experiment where we estimated the expected angle between two identically distributed
1,000-dimensional Gaussian random vectors for different  
choices of
means and diagonal covariances. %
More precisely, we sampled the mean and variances uniformly from $[0, t]$, where $t$ increased from zero to ten with step size $0.2$, and %
estimated
the expected angle based on 10,000 vector pairs drawn from the resulting distributions. Results are shown in figure~\ref{fig:exp_angle} on the right side.
As hypothesized, %
the
smaller the length of $\mu$ and the higher the %
average variances, the higher the expected angle.

\begin{figure}[]
\centering
{\includegraphics[width=0.35\columnwidth, height= 4.cm]{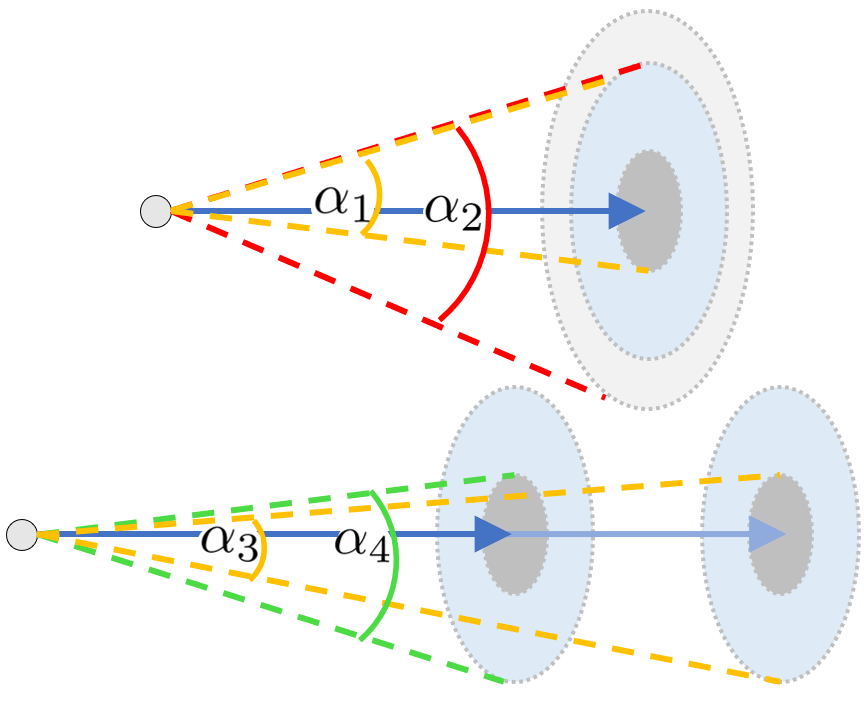} }%
\qquad
{\includegraphics[width=0.5\columnwidth]{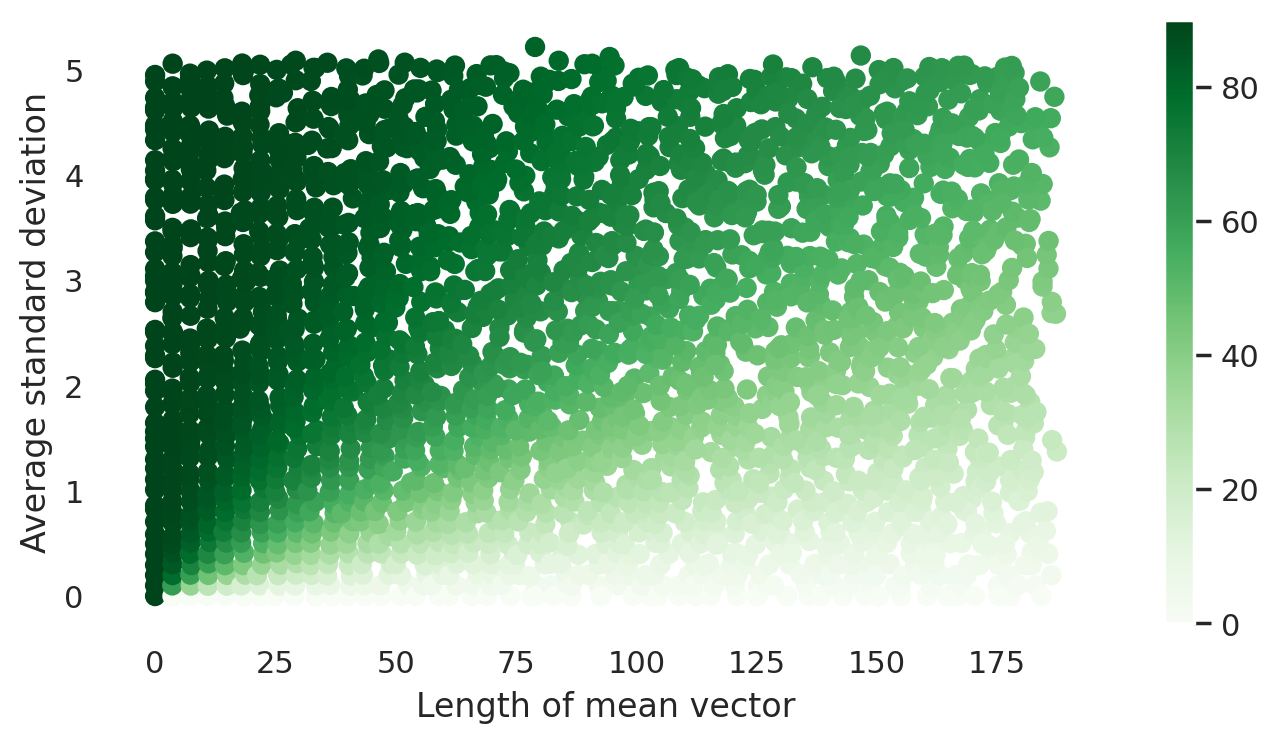} }
\caption{Dependence of the angle w.r.t~mean and variance of the gradient. \textbf{Left, top}: Sketch of decreased variance which leads to smaller angles. The mean of the gradient is shown as the blue arrow. The  circles indicate the areas of high probability for different scales of the covariance matrix. When the covariance is decreased (dark compared to light gray region) the maximum angle between vectors in these region to vectors from the blue region is decreased as well ($\alpha_1$ compared to $\alpha_2$). 
\textbf{Left, bottom}: Sketch of increased mean which leads to smaller maximal angles. When the covariance is fixed and the mean is increased (dark compared to light blue arrow) the maximum angle between vectors from high probability regions decreases ($\alpha_3$ compared to $\alpha_4$).
\textbf{Right}: Expected angle %
between two identically distributed 
1,000-dimensional Gaussian random vectors,
for Gaussian with different means and variances. Expected angles were calculated 
based on 10,000 pairwise draws 
from the respective Gaussian distributions.
}
\label{fig:exp_angle}
\end{figure}

\paragraph{Implications for the attack}
Based on the previous discussion, the only way the attacker can influence
the probability of a successful attack, and in this sense the
robustness of the model against the attack, %
is by increasing the amount of samples and thereby reducing the variance of the attack vector. %
A reduction of the variance
leads to a decrease of the expected angle as described in the previous section. This gives a more elaborated explanation of what was often loosely described as finding the ``correct'' gradient direction in previous work~\citep{obfuscated_grad}.
However, even if  the attacker would be able to take infinitely many samples, and thus the variance %
of the attack direction
would be reduced to zero, %
the expected value of the angle will be larger than zero because of the still existing stochasticity in the inference process. This stochasticity can even lead to an expected angle close to 90° if $\mu$ is short, and/or the %
covariance $\Sigma$ %
is high. That is, the advantage of obfuscating the optimal attack direction %
by incorporating stochasticity into the classifier can be decreased but not fully counterbalanced by taking more samples during the attack.
This might explain the finding in~\citet{He_2019_CVPR}, that the accuracy under attack stagnates at a higher level than the deterministic counterpart when increasing the number of iterations of iterative gradient-based attack methods.

\paragraph{Implications for the model}
From the perspective of the defender, %
the analysis of the angle shows that models with an increased gradient variance (which is often associated to  
a high
prediction variance) and a small norm of the mean gradient
are connected to larger values of %
$\an$ and thus to a higher probability of unsuccessfully attacks. %
This explains why including the norm of the mean gradient, the gradient variance, or the angle between gradients as regularization terms in the training of SNNs, as for example proposed by~\citet{diverse_directions_bender20a} and~\citet{GradDiv}\footnote{In these works the regularization terms were motivated  by maintaining input sensitivity and bounding the expected loss increase, respectively.},
lead to an increased empirical robustness.

It would be naturally to suspect that increasing the number of samples used during inference also decreases the robustness, since it decreases the expected angle. However, this is not the case, as it is counterbalanced by an decrease of the norm of the gradient estimate (i.e.~the second term in the denominator of $\tilde r_c$) with growing sample size.\footnote{%
We present a preposition showing that the
interval incorporating the gradient norm decreases to the norm of $\mu$ with increasing amount of samples in supplement~A.}
This can be seen by
rewriting
the expected denominator with respect to the %
two sample sets  $\mathcal{I}$ and $\mathcal{A}$
as

\begin{align*}
    & \mathbb{E}_{\mathcal{I},\mathcal{A}} \left[\| \nabla_x f^{\mathcal{I}}_{y-c} (x)\|_2 \cdot \cos(\alpha^{\mathcal{I}, \mathcal{A}}_c)\right]  
    = \mathbb{E}_{\mathcal{I},\mathcal{A}}\left[\| \nabla_x f^{\mathcal{I}}_{y-c} (x)\|_2 \cdot \frac{\langle -\nabla_x f^{\mathcal{I}}_{y-c} (x), \delta^{\mathcal{A}}  \rangle}{ \| \nabla_x f^{\mathcal{I}}_{y-c} (x)\|_2 \cdot \| \delta^{\mathcal{A}} \|_2 } \right]\\
    &= \mathbb{E}_{\mathcal{I},\mathcal{A}} \left[ \langle -\nabla_x f^{\mathcal{I}}_{y-c} (x), \frac{\delta^{\mathcal{A}}}{\| \delta^{\mathcal{A}} \|_2}  \rangle \right]  =  \sum_{i=1}^p  \mathbb{E}_{\mathcal{I}} \left[-\nabla_{x_i} f^{\mathcal{I}}_{y-c} (x) \right]\cdot  \mathbb{E}_{\A} \left[\frac{\delta_i^{\mathcal{A}}}{\| \delta^{\mathcal{A}} \|_2} \right]\enspace ,
\end{align*}
where the last equation %
holds
due to the independence of $\delta^{\A}$ and $-\nabla_x \f(x)$.
The expectation of
$\frac{\delta^{\mathcal{A}}}{\| \delta^{\mathcal{A}} \|_2}$ 
does not depend on the samples taken during inference and the expectation of the estimate of the derivative w.r.t.~the $i$-th input $ \mathbb{E}_{\mathcal{I}} \left[-\nabla_{x_i} f^{\mathcal{I}}_{y-c} (x) \right]$  is the same for different amounts of samples. Therefore, the expectation of the denominator does not change when changing the sample size during inference. This observation explains why the robustness of a stochastic classifier does not depend on the amount of samples taken during inference and gives an justification for picking an arbitrary sample size that allows for a good trade-off between efficiency and reduction of the variance of the MC estimate used for prediction.

\section{Experimental robustness analysis}
\label{sec:experiments}
In this section we empirically demonstrate that the findings of our theoretical analysis are transferable to SNNs and help to explain the mechanisms leveraging the experimentally observed robustness of previously proposed SNNs. 
\paragraph{Experimental setup}
Our experiments are conducted on two different image datasets: FashionMNIST~\citep{fashionmnist} and CIFAR10~\citep{cifar10}.\footnote{Additional experiments for CIFAR100 are presented in the supplement~C.5. %
}
For experiments on FashionMNIST we used 
feedforward neural networks (FNN) with  
two stochastic hidden layers, each with 128 neurons.
We trained the FNN as a Variational Matrix Gaussian (VMG, the BNN proposed by~\citet{louizos16}) via variational inference or as an infinite mixture (IM) with the maximum likelihood objective proposed by~\citet{daubener2020investigating}, %
with matrix variate normal distribution placed over the weights. We also trained FNNs of the same architecture, where we added Gaussian noise %
with $\sigma^2 = \{ 0.05, 0.1 \}$ %
to the input,
by minimizing the
cross-entropy. %
We refer to these models as \textit{stochastic input networks} (SINs) 0.05 and SIN 0.1, respectively.
Note, that these networks correspond to the basic networks proposed for randomized smoothing by \citet{lecuyer2019certified}.
For experiments on CIFAR10 %
we trained two wide residual networks (ResNet) with MC dropout layers~\citep{mc_dropout_gal}
applied after  
the convolution blocks
and dropout probabilities
$p=0.3$ and $p=0.6$. If not specified otherwise we used 100 samples of $p(\Theta)$ for inference on all datasets and calculated
adversarial attacks %
with the fast gradient (sign) 
method (FGM)~\citep{Goodfellow_fgsm},
\begin{wrapfigure}{r}{0.5\textwidth}
  \begin{center}
\includegraphics[width=0.5\textwidth]{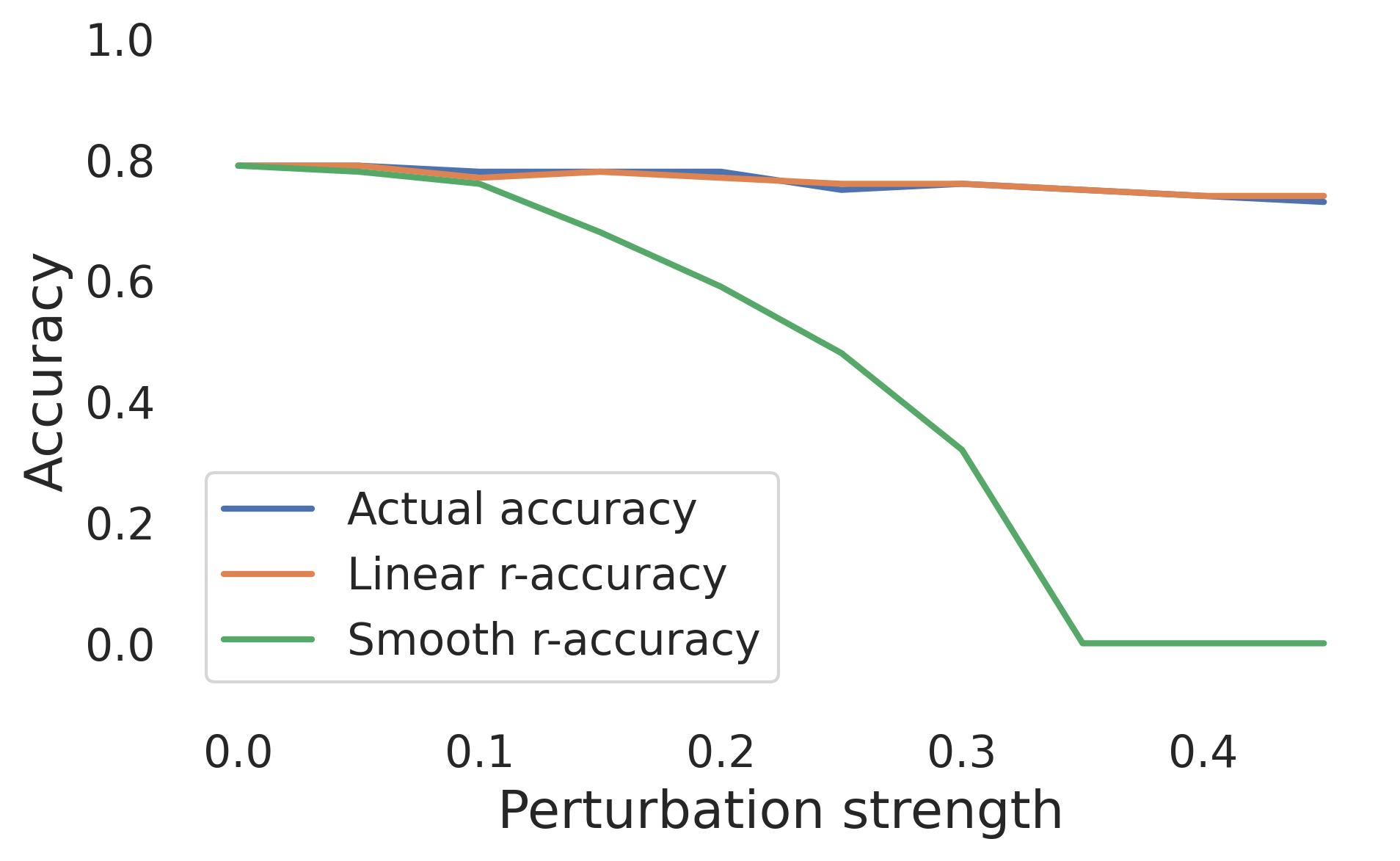}
  \end{center}
  \caption{Adversarial accuracy of the smoothed BNN model for  attacks based on 10 samples  vs percentage of images with
$\min_c r_c^{\mathcal{I}} > \| \delta^{\mathcal{A}} \|_2$ (smooth) and $\min_c \tilde{r}_c^{\mathcal{I}} > \| \delta^{\mathcal{A}} \|_2$ (linear) 
for the first 100 images from the FashionMNIST dataset.}
 \label{fig:r_accuracy_fmnist}
 \vspace{-1cm}
\end{wrapfigure}
 the cross-entropy loss for IMs, BNNs, and ResNets, the margin loss $\mathcal{L}_{\text{margin}}$ specified in section~\ref{sec:preliminaries} for SINs, and the $\ell_2$-norm constraint
based on the CleverHans repository~\citep{papernot2018cleverhans}. All 
experiments were run on a single NVIDIA GeForce RTX 2080 Ti.
We refer the reader to supplement~B for %
more
details on the datasets, models, and the training procedure.

\paragraph{Accuracy of robustness conditions}
\label{subsec:adv_r_acc}
We first investigated the practical transferability of 
the derived theorems. %
For enforcing the smoothness condition used in theorem \ref{thm:robustness_beta},
we built on the 
result from%
~\citet{yang2021ensemble_robust},  who showed that the $L$-smoothness parameter of a classifier $g:x \rightarrow \mathbb{E}_{\epsilon}[f(x+\epsilon)]$ smoothed with random noise  $\epsilon \sim \mathcal{N}(0, \sigma^2)$, is bounded by $L \leq 2/ \sigma^2$.
We therefore applied randomized smoothing during training of the models.
For the BNN, on which we focus in this section due to space restrictions (results for the other networks look qualitatively similar and can be found in supplement~C),
we replaced
each image in the batch by two noisy copies with Gaussian noise $\epsilon \sim \mathcal{N}(0, 0.1)$ which ensures $L\leq 20$. %
During prediction we estimated the expectation under Gaussian noise with
10 %
samples and also used 10 samples for calculating the FGM attack. 
We estimated the percentage of resulting attacks for which
$\min_c r_c^{\mathcal{I}} > \| \delta^{\mathcal{A}} \|_2$ and $\min_c \tilde{r}_c^{\mathcal{I}} > \| \delta^{\mathcal{A}} \|_2$ 
and compared this to the adversarial accuracy (i.e. the percentage of perturbed samples classified correctly) in figure~\ref{fig:r_accuracy_fmnist}. The percentage of samples fulfilling the  condition $\min_c r_c^{\mathcal{I}} > \| \delta^{\mathcal{A}} \|_2$ approaches zero with growing perturbation strength, indicating that the lower bound provided by $r_c^{\mathcal{I}}$ is rather loose due to the high (upper bound of the) L-smoothness constant.  On the other hand
the percentage of samples for which~\eqref{eq:r_linear} is fulfilled is a  good approximation of the real accuracy for small attack length, which suggests that the discriminant functions are approximately linear in a small neighborhood of the input.

\begin{figure}[htb]
\centering
\subfloat[IM]{\includegraphics[width=0.33\linewidth]{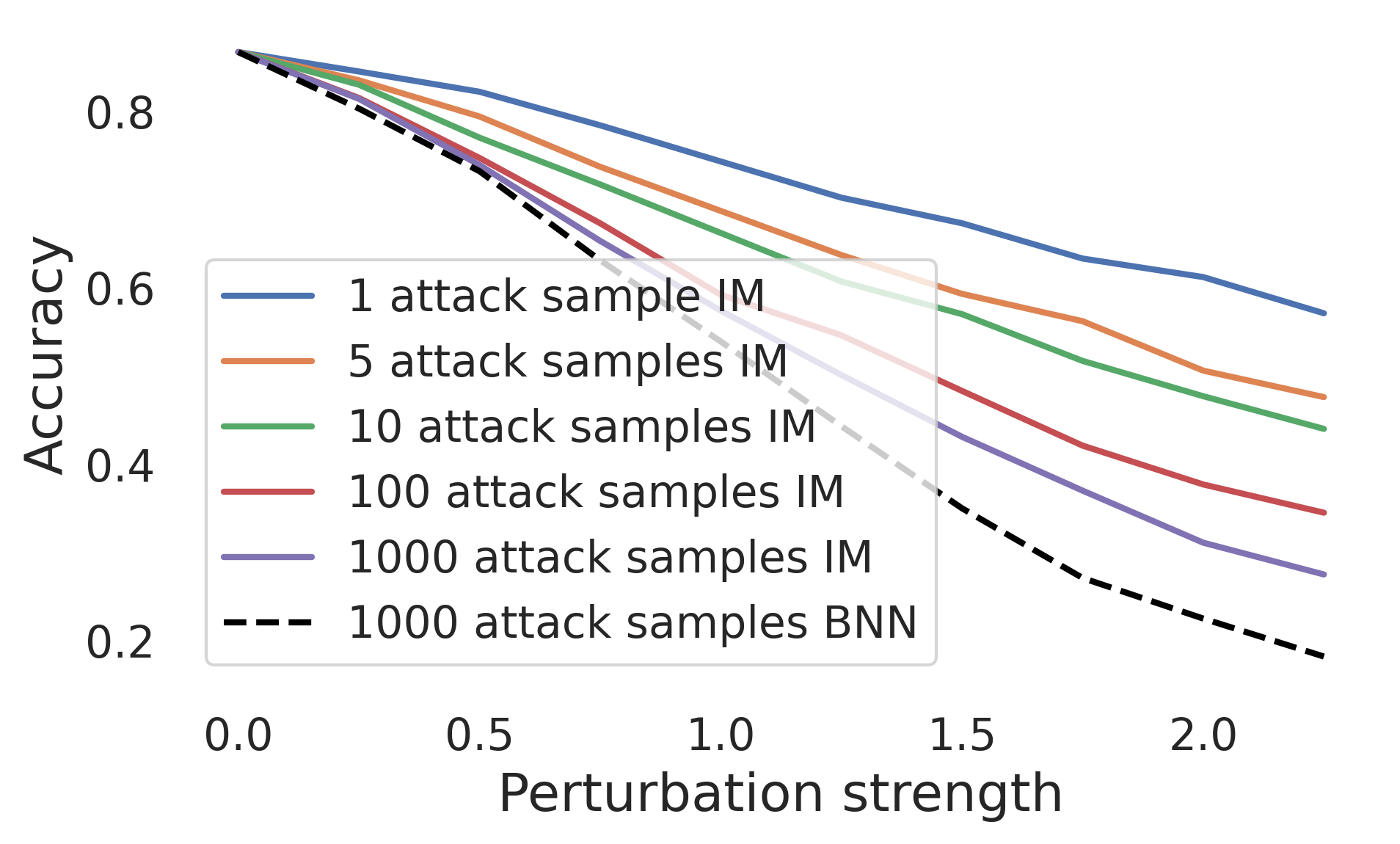}}
\subfloat[SIN]{\includegraphics[width=0.33\linewidth]{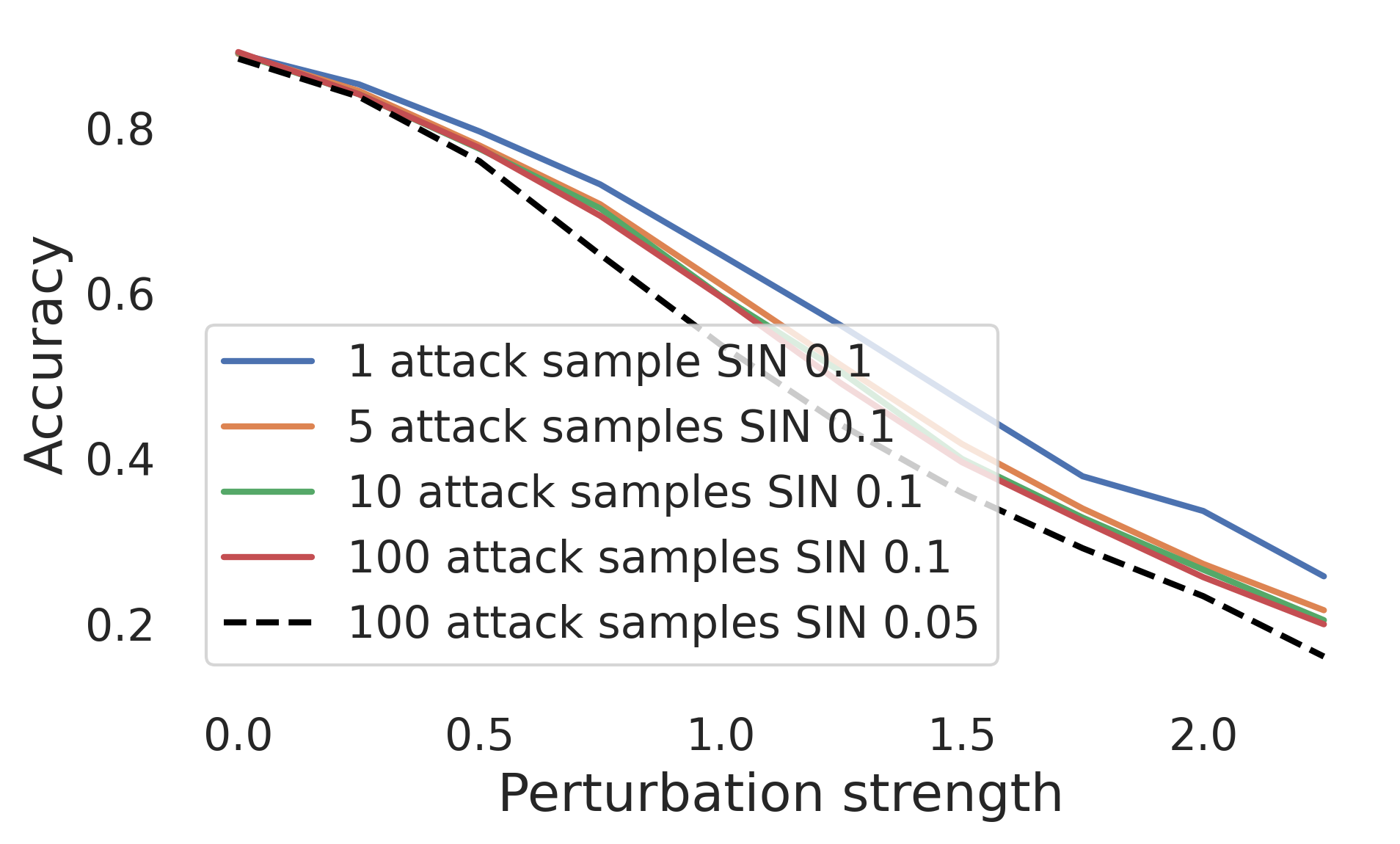}}
\subfloat[ResNet with dropout prob. 0.6 ]{\includegraphics[width=0.33\linewidth]{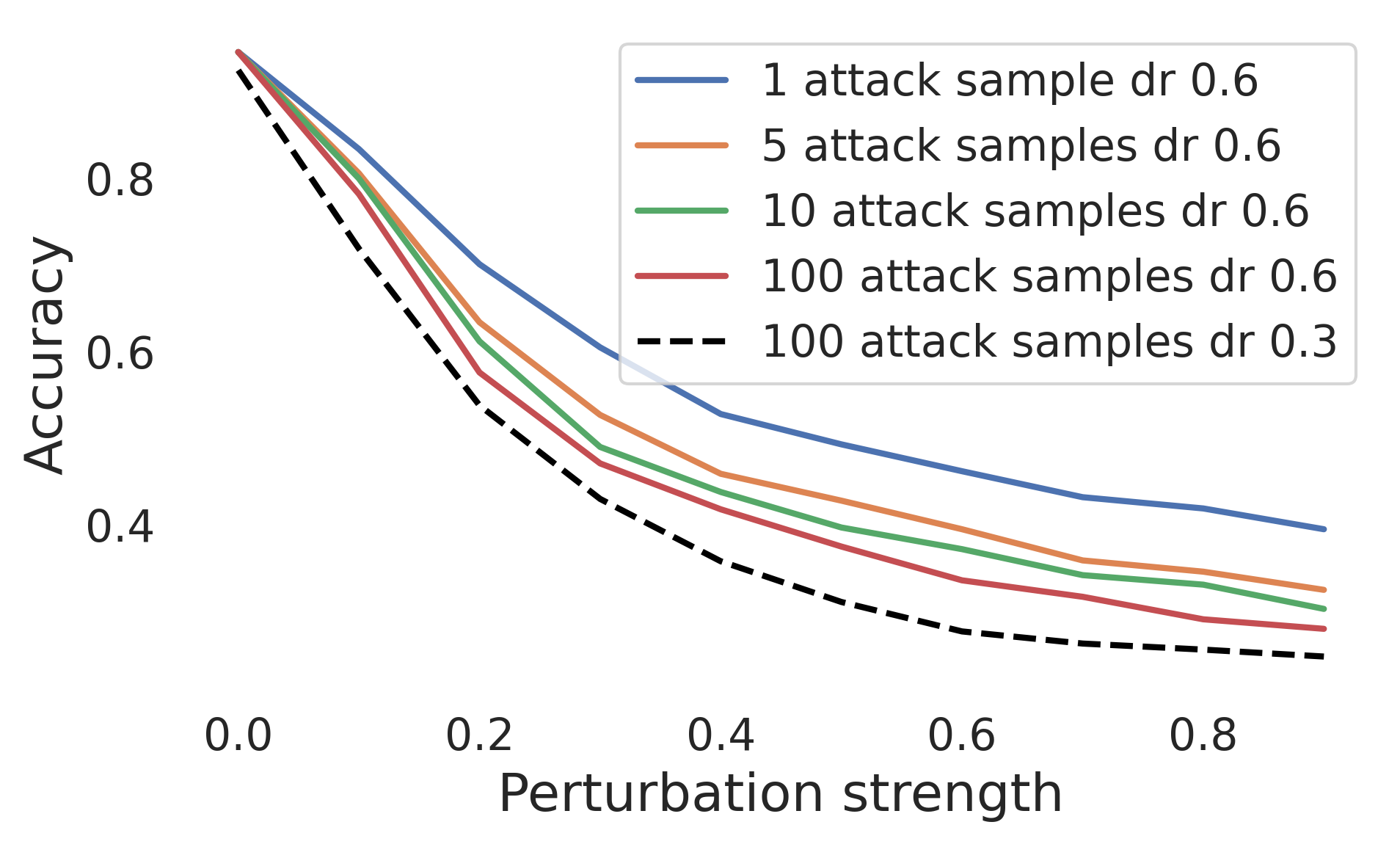}}
\caption{Accuracy under attack 
for different perturbation strength and amount of samples used for calculating the attack. The dashed line shows the adversarial accuracy for the 
models with the same architecture but less prediction variance.} 
\label{fig:acc_under_attack}
\end{figure}

\paragraph{Stronger attacks by increased sample size}

In this section we investigate the effect of varying the amount of samples used for calculating the attack, e.g. for $S^{\mathcal{A}} \in \{ 1, 5, 10, 100, (1000)\}$. 
Figure~\ref{fig:acc_under_attack} shows the resulting adversarial robustness of the
IM, SIN 0.1, and the ResNet with dropout probability 0.6. %
The accuracy under attack decreases for all models 
with increasing amount of samples used for calculating the attack as conjectured. %
We found this to hold also %
for stronger attacks, i.e.~attacks based on logits for IM and BNN (where we observe %
highly
confident softmax predictions), attacks with $L_{\infty}$ constraint, or \textit{projected gradient descent} (PGD)~\citep{madry2018towards} attacks with 100 iterations as shown in supplement~C. Note, that iterative attacks already increase the sample size due to their iterative nature, and therefore only few samples per iteration may %
sufficiently increase attack strength. %

Simultaneously to the accuracy we 
estimated %
the corresponding values of $\cos(\alpha_j^{\mathcal{I}, \mathcal{A}})$, for %
$j= \arg \min_c \tilde{r}_c^{\mathcal{I}}$,
for the first 1,000 test images and depicted them in figure~\ref{fig:angle_attack}.
It can be seen that the cosine values are increasing (which corresponds to decreasing angles) with growing sample size and that the larger the observed values, the lower the %
accuracy under attack as shown in figure~\ref{fig:acc_under_attack}.
This observation 
is in accordance to our theoretical analysis 
which predicts that an increased amount of samples leads to higher cosine values and in turn to less adversarial robustness.
Note however, that 
 even when taking 
 many samples $\cos(\alpha_j^{\mathcal{I}, \mathcal{A}})<1$, which underlines the fact that the optimal attack direction can not be recovered due to the still existing stochasticity in the inference procedure.
 Further results on the angle under different amounts of samples during attack can be found in supplement~C.2.

\begin{figure}[htb]
\centering
\subfloat[IM]{\includegraphics[width=0.33\linewidth]{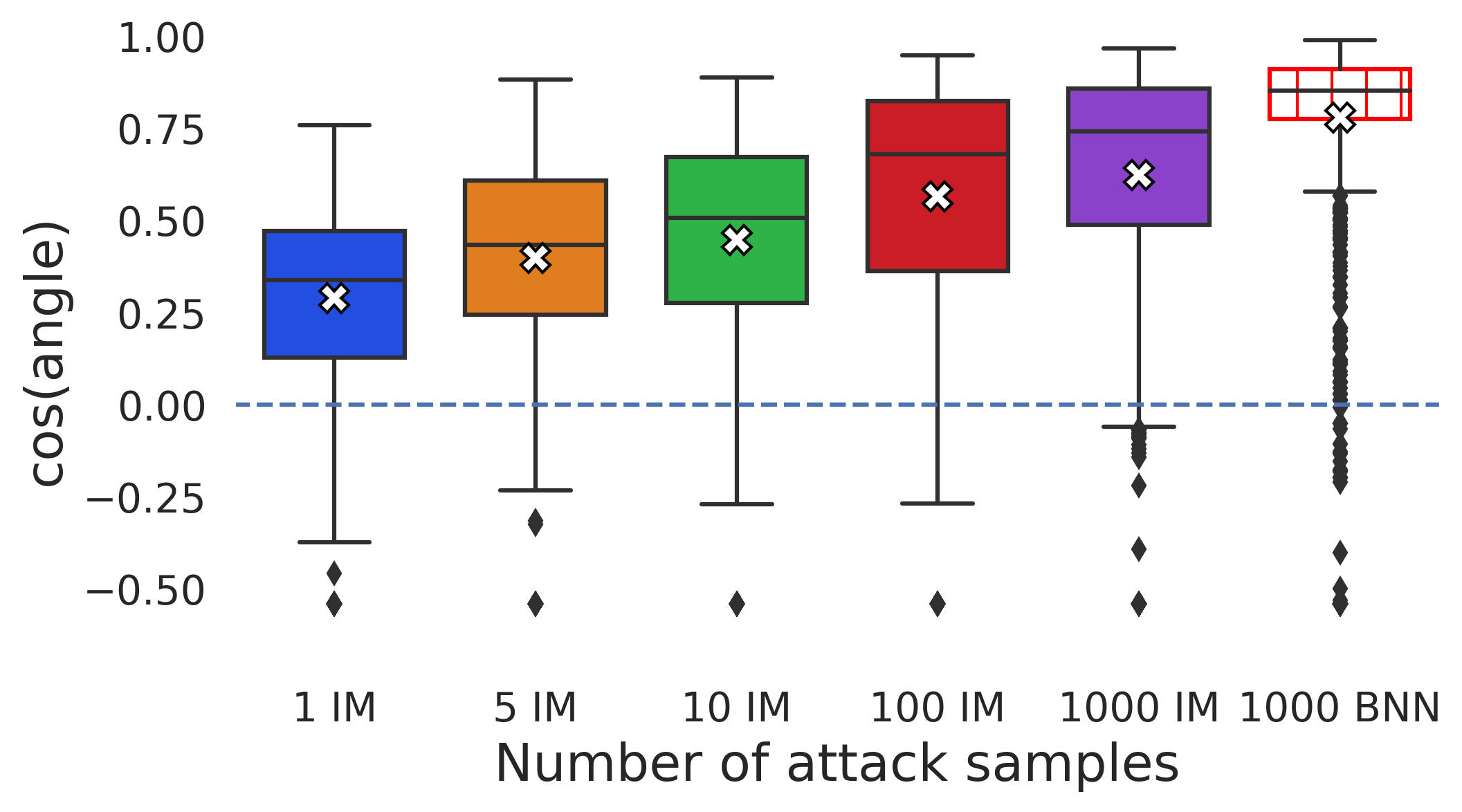}}
\subfloat[SIN 0.1]{\includegraphics[width=0.33\linewidth]{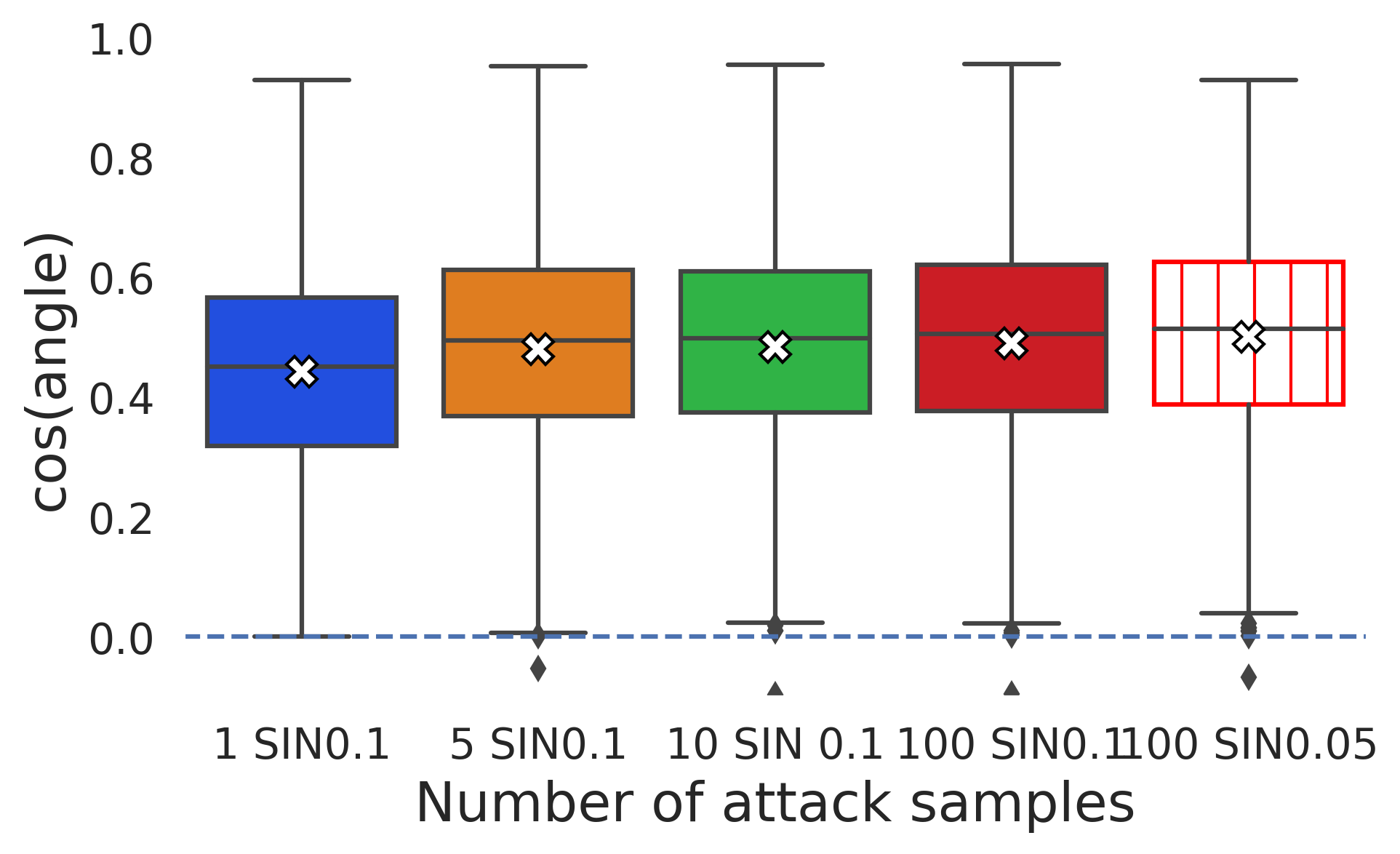}}
\subfloat[ResNet with dropout prob. 0.6 ]{\includegraphics[width=0.33\linewidth]{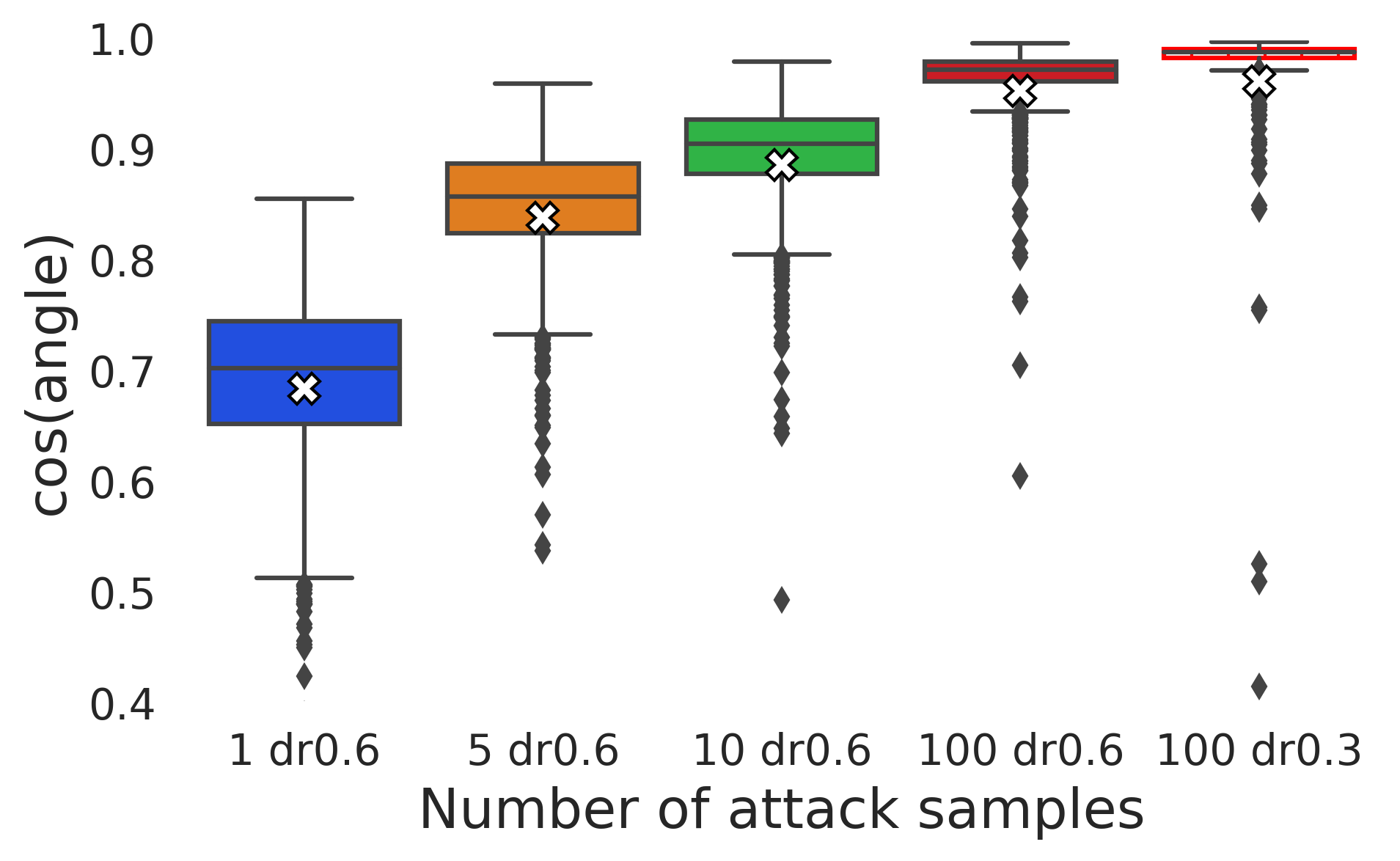}}
\caption{Cosine %
values
for adversarial examples with perturbation strength $1.5$ for a) and b) and $0.3$ for c) %
and
different amounts of samples. White crosses indicate mean values.} 
\label{fig:angle_attack}
\end{figure}

\paragraph{Prediction variance as robustness indicator}

In this section we compare the properties of SNNs that have
the same network architecture and a similar training procedure but different prediction variances. 
That is, we compare the BNN against the IM, SIN 0.05 against SIN 0.1, and %
the two ResNets with different dropout probabilities against each other. 
First, we estimated the standard deviation of the prediction  and the average standard deviation of the gradient entries of the models by 
calculating the average of the empirical estimates %
over the
first 1,000 examples from the respective test sets %
(see table~\ref{tab:pred_variance}).  %
As expected, 
the IM, SIN 0.1, and ResNet with dropout probability 0.6 have a %
larger standard deviation of the prediction and hence prediction variance than their respective counterpart. The higher standard deviation of the prediction %
translates to an increased standard deviation of the gradient. %
This explains why we observe smaller cosine values for these models compared to their counterparts (see right most boxplots in figure~\ref{fig:angle_attack} a) and c))  which in turn translate to an increased robustness (as indicated by the dotted accuracy curves in  figure~\ref{fig:acc_under_attack}).
We also found
that the length of the %
mean of the gradient that we estimated based on 1,000 samples is smaller for the  models with higher variance.\footnote{For SINs, it is known that a higher variance used during training relates to a lower Lipschitz-constant~\cite[e.g.][]{yang2021ensemble_robust, nips_smoothness_bound} which leads to a stronger smoothing effect.} This
might be another reason for the observed smaller angles
as discussed in the previous section.

\begin{table}
\caption{%
Empirical standard deviation %
of the prediction (for the correct class)
based on 1,000 single predictions (each with $1$ inference sample) and %
average length and variance of the corresponding %
gradient. %
We report averages over the first 1,000 images from the respective test set. 
}
\label{tab:pred_variance}
\vskip 0.15in
\begin{center}
\begin{small}
\begin{sc}
\begin{tabular}{l|cc|cc|cc}
\toprule
 & \multicolumn{4}{c|}{FashionMNIST} & \multicolumn{2}{c}{CIFAR10}\\ 
 & BNN & IM & SIN 0.05 & SIN 0.1 & dr 0.3 & dr 0.6  \\
\hline
Avg. std of predictions  & 0.0186 & 0.0473  & 85.4657  &  186.5599 & 0.0146 & 0.0196  \\
\hline
Avg. gradient length  & 0.5218 & 0.5044 & 49.0120 &  $\phantom{1}$48.8312 & 0.0713  & 0.0599  \\
Avg. Std of gradient  &  0.0316& 0.0959  &  $\phantom{1}$1.5741 &  $\phantom{1}\phantom{1}$1.8323 & 0.0000 & 0.0000 \\
\bottomrule
\end{tabular}
\end{sc}
\end{small}
\end{center}
\vskip -0.1in
\end{table}

\paragraph{Robustness in dependence of the amount of samples used during inference}
\label{subsec:ex_sample_inference}

In practice the amount of samples $S^{\mathcal{I}}$
drawn during inference is fixed to an arbitrary number. 
In this section we investigate the impact of varying $S^{\mathcal{I}} \in \{1, 5, 10, 100 \}$ which are values frequently used. %
First, we observed that only few samples 
are necessary to get reliable predictions on clean data for all models as shown by the test accuracies in  table~\ref{tab:accuracy_inference_samples}. 
Second, we found that increasing the amount of samples did not affect the adversarial accuracy. This can be explained by
the observation that the decrease of $\an$ caused by increasing the sample set is counterbalanced by an simultaneous decrease of the average norm of the gradient estimate, as can be seen by inspecting the results in figure \ref{fig:num_samples_inference}. That is, 
the product $\|  \nabla_x \f \|_2 \cdot \cos(\an)$ stays approximately the same %
regardless of the inference sample size as predicted by our theoretical analysis.
Interestingly, the effect of increasing the number of samples during inference has almost no effect on the prediction margin (more results are shown in 
supplement~C.4).

\begin{table}[h]
\caption{Test set and adversarial accuracy with 100 samples during the attack and allowed perturbation strength of $1.5$ on FashionMNIST and $0.3$ on CIFAR10 for increasing number of samples used during prediction. We estimated the average accuracy 10 times and report the average and standard deviation.}
\label{tab:accuracy_inference_samples}
\vskip 0.15in
\begin{center}
\begin{small}
\begin{sc}
\begin{tabular}{r|cc|cc|cc}
\toprule
 & \multicolumn{4}{c|}{FashionMNIST} & \multicolumn{2}{c}{CIFAR10} \\ 
$|\mathcal{I}|$& BNN & IM & SIN 0.05 & SIN 0.1 & dr 0.3 & dr 0.6  \\
\hline
\multicolumn{7}{c}{Test set accuracy}\\ 
\hline
1  &  $83.03 \pm 0.12$ & $79.15 \pm 0.27$ & $86.98 \pm 0.17$ &  $85.76 \pm 0.17$ & $92.08 \pm 0.13$ &  $92.68 \pm 0.15$ \\
5   &  $84.81 \pm 0.14$ & $84.29 \pm 0.21$ & $88.21 \pm 0.09$ &  
$87.49 \pm 0.15$ & $92.57 \pm 0.05$ &  $93.51 \pm 0.09$  \\
10   & $85.03 \pm 0.13$  & $85.02 \pm 0.18$ & $88.47 \pm 0.06$ &  $87.92 \pm 0.10$ & $92.66 \pm 0.06$ &  $93.60 \pm 0.06$  \\
100  & $85.21 \pm 0.06$  & $85.72 \pm 0.08$ & $88.63 \pm 0.07$ &  $88.27 \pm 0.06$ & $92.74 \pm 0.03$ &  $93.69 \pm 0.04$ \\
\hline
\multicolumn{7}{c}{Adversarial accuracy}\\ 
\hline
1   & $ 35.76 \pm 0.92 $ &  $ 44.67 \pm 1.19 $& $ 36.15 \pm 0.79 $  & $ 40.63 \pm 0.79 $  & $42.64 \pm 0.39$ &  $ 47.18 \pm 0.70 $ \\
5    & $ 36.19 \pm 0.47 $ &$ 47.08 \pm 0.73 $    & $ 35.92 \pm 0.51 $  & $ 40.57 \pm 0.44 $  & $42.73 \pm 0.31$ &  $ 47.00 \pm 0.38 $  \\
10    & $ 36.30 \pm 0.50 $ & $ 47.23 \pm 0.62 $ & $ 36.06 \pm 0.41 $  &  $ 39.93 \pm 0.66 $ & $42.69 \pm 0.37$ &  $ 47.01 \pm 0.33 $   \\
100    &  $ 36.50 \pm 0.39 $ &  $ 47.96 \pm 0.24 $& $ 35.91 \pm 0.23 $  &  $ 39.67 \pm 0.29 $ & $ 42.82 \pm 0.17 $ & $ 47.02 \pm 0.20 $    \\
\bottomrule
\end{tabular}
\end{sc}
\end{small}
\end{center}
\vskip -0.1in
\end{table}

\begin{figure}[htb]
\centering
\subfloat[$\cos(\an)$]{\includegraphics[width=0.33\linewidth]{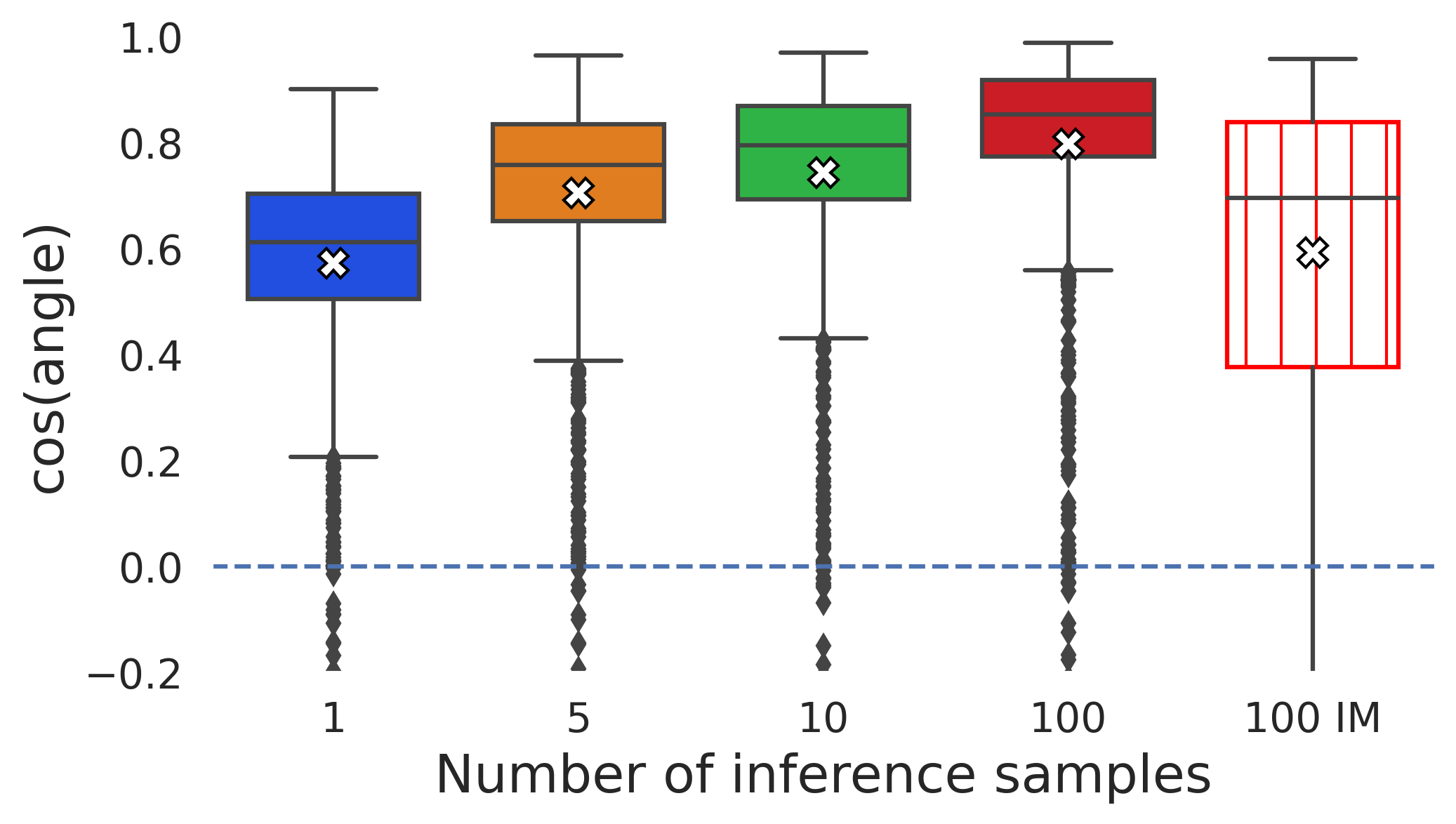}}
\subfloat[$\| \nabla_x f_{y-c}(x)\|_2$]{\includegraphics[width=0.33\linewidth]{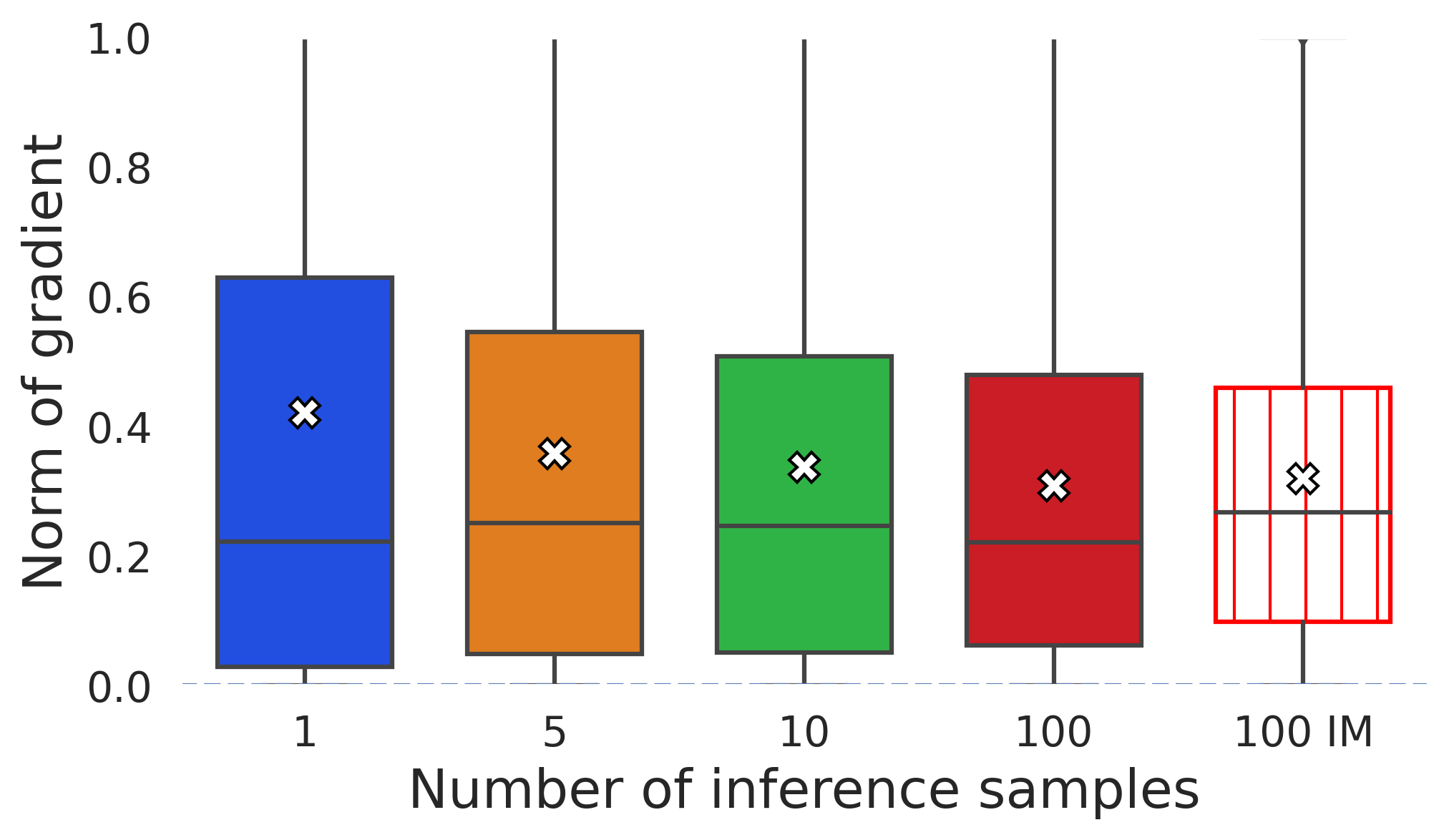}}
\subfloat[$f_{y-c}(x)$ ]{\includegraphics[width=0.33\linewidth]{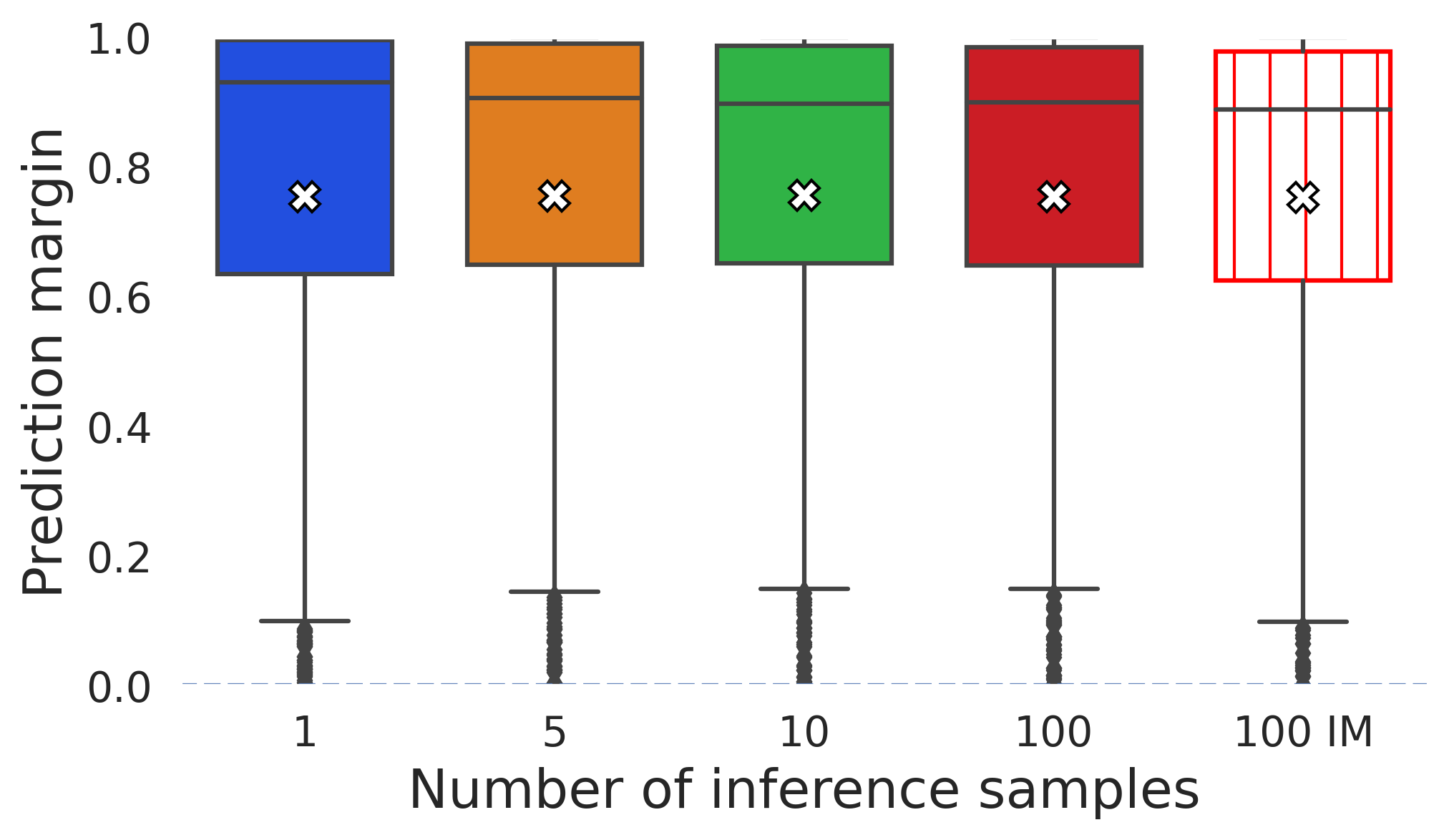}}
\caption{ %
Values of single factors
of $\tilde{r}^{\mathcal{I}}_c$ in dependence %
of varying number of samples used during inference for the BNN trained on FashionMNIST.} 
\label{fig:num_samples_inference}
\end{figure}

\section{Conclusion}
\label{sec:conclusion}
In this work we have stressed the fact that
stochastic neural networks (SNNs) (and stochastic classifiers in general) often depend on samples and thus their predictions are random variables themselves. For gradient-based adversarial attacks this means that the attack is calculated based on one realization of the stochastic network (which depends on multiple samples %
of the random variables used in the network) and applied to another which is used for inference. 
We derived a sufficient condition for this inference network to be robust against the calculated attack. This allowed us to identify the factors that lead to an increased robustness of stochastic classifiers: 
i) larger prediction margins
ii) a smaller norm of the gradient estimates %
and %
iii) higher angles between the attack direction and the direction to the closest decision boundary during inference.
The observed angles depend inverse proportionally on the norm of the expected gradient and proportionally on the variance of the gradient estimates. 
This variance can be reduced by increasing the sample size.
These insights enable us to explain previously %
reported empirical findings for SNNs from a geometrical perspective,
e.g.~that the robustness of SNNs is higher than the robustness of their deterministic counterparts %
even for strong attacks that are based on several samples~\citep{He_2019_CVPR, weight_covariance-eustratiadis21a}, why regularization of the gradient variance~\citep{diverse_directions_bender20a}, norm of the mean gradient, and angle~\citep{GradDiv} improves the adversarial robustness, and last but not least why increasing the sample size during attack is important to exploit its potential~\citep{obfuscated_grad}.
Therefore, our work poses a general applicable and simple framework,  %
that helps understanding the mode of operation of existing stochastic defense mechanisms, even if they were motivated from a different point of view. 
Moreover, we derived a justification for the common practice of choosing the sample size during inference in a way that balances its prediction certainty against the computational cost.
Finally, we %
believe 
our findings will be useful to evaluate and compare %
the robustness of different models, since they point out that they might require different amounts of samples during attack to sufficiently reduce variance and hope that they will help to improve the robustness of stochastic classifiers in future.
\paragraph{Potential negative societal impact and limitations.} Since we do not propose a new attack strategy, but %
contribute to a better
understanding of the robustness of SNNs and advise to cautiousness when determining sample sizes, we do not see negative impact.

\section*{Acknowledgments}
We would like to thank Denis Lukovnikov for his useful comments on our work and beautifying our graphics.
This work is funded by the Deutsche Forschungsgemeinschaft (DFG, German Research Foundation) under Germany's Excellence Strategy - EXC 2092 CASA - 390781972.
\bibliographystyle{plainnat}
\bibliography{library}

\newpage
\section{Checklist}

\begin{enumerate}

\item For all authors...
\begin{enumerate}
  \item Do the main claims made in the abstract and introduction accurately reflect the paper's contributions and scope?
    \answerYes{}
  \item Did you describe the limitations of your work?
    \answerYes{The theoretical results are limited by their assumptions in section~\ref{sec:theory}.}
  \item Did you discuss any potential negative societal impacts of your work?
    \answerYes{ Below the conclusion.}
  \item Have you read the ethics review guidelines and ensured that your paper conforms to them?
    \answerYes{}
\end{enumerate}

\item If you are including theoretical results...
\begin{enumerate}
  \item Did you state the full set of assumptions of all theoretical results?
    \answerYes{In section~\ref{sec:theory}.}
  \item Did you include complete proofs of all theoretical results?
    \answerYes{Proofs are given in supplement~A.}
\end{enumerate}

\item If you ran experiments...
\begin{enumerate}
  \item Did you include the code, data, and instructions needed to reproduce the main experimental results (either in the supplemental material or as a URL)?
    \answerYes{We extended the description of datasets, models and training procedure from section~\ref{sec:experiments} in supplement~B and provided the code used in the main paper in the supplemental material.}
  \item Did you specify all the training details (e.g., data splits, hyperparameters, how they were chosen)?
    \answerYes{Details for the experiments which are not mention in section~\ref{sec:experiments} are given in supplement~B.}
  \item Did you report error bars (e.g., with respect to the random seed after running experiments multiple times)?
    \answerYes{Partially, where it was applicable and interesting (c.f. table~\ref{tab:accuracy_inference_samples}).}
  \item Did you include the total amount of compute and the type of resources used (e.g., type of GPUs, internal cluster, or cloud provider)?
    \answerYes{All experiments are run on one NVIDIA GeForce RTX 2080 Ti and package version requirements for the virtual environment are given in the code base.}
\end{enumerate}

\item If you are using existing assets (e.g., code, data, models) or curating/releasing new assets...
\begin{enumerate}
  \item If your work uses existing assets, did you cite the creators?
    \answerYes{Main python packages and models are cited in the paper as well as highlighted in the code.}
  \item Did you mention the license of the assets?
    \answerYes{The used code bases from others were published under MIT license. }
  \item Did you include any new assets either in the supplemental material or as a URL?
    \answerNA{Not applicable.}
  \item Did you discuss whether and how consent was obtained from people whose data you're using/curating?
    \answerNA{Not applicable.}
  \item Did you discuss whether the data you are using/curating contains personally identifiable information or offensive content?
    \answerNA{Not applicable.}
\end{enumerate}

\item If you used crowdsourcing or conducted research with human subjects...
\begin{enumerate}
  \item Did you include the full text of instructions given to participants and screenshots, if applicable? \answerNA{Not applicable.}
  \item Did you describe any potential participant risks, with links to Institutional Review Board (IRB) approvals, if applicable?
  \answerNA{Not applicable.}
  \item Did you include the estimated hourly wage paid to participants and the total amount spent on participant compensation?
    \answerNA{Not applicable.}
\end{enumerate}
\end{enumerate}

\newpage
\appendix
\section*{Supplement}
\section{Proofs}
\label{app:proofs}
In this section we proof the theorems of the main paper and recap the needed definitions to do so.

\begin{definition}[Maximum magnitude attack]
Given a multi-class classifier $f(\cdot)$, a loss function $\mathcal{L}(\cdot, \cdot)$, $\ell_p$-norm $\| \cdot \|_p$ and a given perturbation strength $\eta$ the optimization problem for a maximum magnitude attack can be written as
\begin{equation}\label{eq:app_max_allowable_attack}
    \textit{maximize} \enspace \mathcal{L}(f(x+ \delta), y) \ , \ w.r.t. \enspace \delta  \enspace s.t.  \ \| \delta\|_p \leq \eta \enspace.
\end{equation}
\end{definition}

\begin{theorem}[Sufficient and necessary robustness condition for linear classifiers]
\label{thm:app_linear}
Let $f:\mathbb{R}^d\times \Omega^h  \rightarrow \mathbb{R}^k$ be a stochastic classifier with linear discriminant functions and $f^{\mathcal{A}}$ and  $f^{\mathcal{I}}$ be two MC estimates of the classifier. %
Let 
$x \in \mathbb{R}^d $ be a data point with label $y \in \{1,\dots,k\}$ and $\arg \max_c f_c^{\mathcal{A}}(x)= \arg \max_c f_c^{\mathcal{I}}(x)=y$, and let $x_{\text{adv}}=x+ \delta^{\mathcal{A}}$ be an adversarial example computed for solving the minimization problem~\eqref{eq:app_max_allowable_attack}
 for $f^{\mathcal{A}}$. 
It holds that $\arg \max_{c} f^{\mathcal{I}}_c(x+ \delta^{\mathcal{A}}) =y $ if and only if
\begin{equation}\label{eq:app_r_linear}
     \min_{c \neq y} \tilde{r}^{\mathcal{I}}_c > \|\delta^{\mathcal{A}}\|_2 \enspace,      \text{ with }
\end{equation}
\begin{equation}
      \tilde{r}^{\mathcal{I}}_c = \begin{cases}
      \infty \enspace ,   &\text{if} \;\;   \cos(\alpha_c^{\mathcal{I}, \mathcal{A}})= \frac{\langle -\nabla_x (f^{\mathcal{I}}_y(x) - f^{\mathcal{I}}_c(x)), \delta^{\mathcal{A}} \rangle}{\|\nabla_x (f^{\mathcal{I}}_y(x) - f^{\mathcal{I}}_c(x)) \|_2 \cdot \|\delta^{\mathcal{A}} \|_2}  \leq 0 \\
      \frac{ f^{\mathcal{I}}_y \left(x \right) - f^{\mathcal{I}}_c\left(x \right) }{ \| \nabla_x (f^{\mathcal{I}}_y(x) - f^{\mathcal{I}}_c(x) ) \|_2  \cdot \cos(\alpha_c^{\mathcal{I}, \mathcal{A}})} \enspace ,
        &\text{otherwise \enspace,
        }
        \nonumber
      \end{cases}
\end{equation}
where $\alpha_c^{\mathcal{I}, \mathcal{A}}$ is the angle between $-\nabla_x (f^{\mathcal{I}}_y(x) - f^{\mathcal{I}}_c(x) )$ and $\delta^{\mathcal{A}}$.
\end{theorem}

\begin{proof}
An adversarial attack on $f^{\mathcal{I}}$ with the adversarial example $x+\delta^{\mathcal{A}}$ is not successful iff $\forall c \in \{1,2,\dots,k\}, c \neq y:$
\begin{equation*} \label{eq:app_unsucc_attack}
    f^{\mathcal{I
    }}_y \left(x +  \delta^{\mathcal{A}} \right) - f^{\mathcal{I}}_c\left(x + \delta^{\mathcal{A}} \right) >0 \enspace .
\end{equation*}
With Taylor expansion around $x$ we can rewrite 
$ f^{\mathcal{I
    }}_y \left(x +  \delta^{\mathcal{A}} \right) - f^{\mathcal{I}}_c\left(x + \delta^{\mathcal{A}} \right) $
as
\begin{align}
    & f^{\mathcal{I}}_y \left(x \right)  +\langle \nabla_x f^{\mathcal{I}}_y(x),  \delta^{\mathcal{A}} \rangle  -
    f^{\mathcal{I}}_c\left(x \right)
    - \langle \nabla_x
     f^{\mathcal{I}}_c(x), {\delta^{\mathcal{A}}} \rangle \nonumber \\
    =& f^{\mathcal{I}}_y \left(x \right)
    -  f^{\mathcal{I}}_c\left(x \right)
    +\langle \nabla_x f^{\mathcal{I}}_y(x)- \nabla_x
     f^{\mathcal{I}}_c(x),  \delta^{\mathcal{A}} \rangle \nonumber \\
    =& f^{\mathcal{I}}_y \left(x \right)
    -  f^{\mathcal{I}}_c\left(x \right) - \| \nabla_x (f^{\mathcal{I}}_y(x) - f^{\mathcal{I}}_c(x) ) \|_2 \cdot \| \delta^{\mathcal{A}} \|_2 \cdot \cos(\alpha_c^{\mathcal{I}, \mathcal{A}}) \label{eq:app_taylor-lin}
\end{align}
where $\alpha_c^{\mathcal{I}, \mathcal{A}} := \angle (- \nabla_x (f^{\mathcal{I}}_y(x) - f^{\mathcal{I}}_c(x) ),  \delta^{\mathcal{A}} ) $.
We can distinguish two cases for each $c$.

\paragraph{Case 1: $\cos(\alpha_c^{\mathcal{I}, \mathcal{A}}) \leq 0$.}
In this case last term
of eq.~\eqref{eq:app_taylor-lin} is negative or zero and thus
\begin{equation*}
    f^{\mathcal{I
    }}_y \left(x +  \delta^{\mathcal{A}} \right) - f^{\mathcal{I}}_c\left(x + \delta^{\mathcal{A}} \right) \geq  f^{\mathcal{I}}_y \left(x \right)
    -  f^{\mathcal{I}}_c\left(x \right) > 0 \enspace,
\end{equation*}
where the second inequality holds since 
$ \arg \max_c f_c^{\mathcal{I}}(x)=y$. 
\paragraph{Case 2: $\cos(\alpha_c^{\mathcal{I}, \mathcal{A}}) > 0$.}
In this case the last term of 
equation eq.~\eqref{eq:app_taylor-lin} is positive and thus
\begin{equation*}\label{eq:app_cos_alpha<1}
    f^{\mathcal{I
    }}_y \left(x +  \delta^{\mathcal{A}} \right) - f^{\mathcal{I}}_c\left(x + \delta^{\mathcal{A}} \right) \leq  f^{\mathcal{I}}_y \left(x \right)  -  f^{\mathcal{I}}_c\left(x \right) \enspace .
\end{equation*}
As we see from rearranging eq.~\eqref{eq:app_taylor-lin}, it holds $f^{\mathcal{I}}_y \left(x +  \delta^{\mathcal{A}} \right) - f^{\mathcal{I}}_c\left(x + \delta^{\mathcal{A}} \right) >0$ if 
\begin{equation}
     \tilde{r}_c^{\mathcal{I}} :=  \frac{ f^{\mathcal{I}}_y \left(x \right) - f^{\mathcal{I}}_c\left(x \right) }{ \| \nabla_x (f^{\mathcal{I}}_y(x) - f^{\mathcal{I}}_c(x) ) \|_2 \cdot \cos(\alpha_c^{\mathcal{I}, \mathcal{A}})} > \| \delta^{\mathcal{A}} \|_2 \label{eq:app_eta_value} \enspace  .
\end{equation}
For each class $c$ either case 1 holds and we define $\tilde{r}_c^{\mathcal{I}}:=\infty$, or condition \eqref{eq:app_eta_value} is fulfilled, which yields the condition stated in the theorem.
\end{proof}

In the main part of the paper we relaxed the linear classifier assumption by $L$-smoothness, which was defined as follows:

\begin{definition}[$L$-smoothness~\citep{yang2021ensemble_robust}]
A differentiable function $f:\mathbb{R}^d \rightarrow \mathbb{R}^k$ is $L$-smooth, if for any $x_1, x_2\in \mathbb{R}^d$ and any output dimension $c\in \{1,\dots,k\}$: 
\[
\frac{\| \nabla_{x_1} f_c(x_1)- \nabla_{x_2} f_c(x_2)\|_2}{\| x_1 - x_2\|_2} \leq L \enspace.
\]
\end{definition}
Next we restate one property of L-smooth functions which we will use in our proof of theorem 2. 

\begin{proposition}[\cite{bubeck2015convex}]
\label{prop:app_smooth}
Let $f$ be an $L$-smooth function on $\mathbb{R}^n$. For any $x,y \in \mathbb{R}^n$ it holds:
\begin{equation*}\label{eq:app_decent_lemma}
    |f(y) -f(x) - \langle \nabla_x f(x) , y-x \rangle | \leq  \frac{L}{2} \| y-x\|_2^2 \enspace.
\end{equation*}
\end{proposition}
\begin{proof}
From the fundamental theorem of calculus we know that for a differentiable function $f$ it holds that $f(y)- f(x) = \int_x^y \nabla_t f(t) dt$. By substituting  $x_t = x + t(y-x)$ we see that $x_0 = x$ and $x_1 = y$ and thus we can write $f(y)- f(x) = \int_0^1 \nabla f(x + t(y-x)) ^T \cdot (y-x) dt$. This allows the following approximations
\begin{align*}
    &|f(y) -f(x) - \langle \nabla_x f(x) , y-x \rangle | \\
    = &| \int_0^1 \nabla  f(x + t(y-x))^T \cdot (y-x) dt - (\nabla_x f(x))^T  (y-x)  |  \\
    \leq&  \int_0^1 | (\nabla f(x + t(y-x)) - \nabla_x f(x))^T ) \cdot (y-x) | dt  \\
    \overset{\text{Cauchy-Schwarz}}{\leq }&  \int_0^1 \| \nabla f(x + t(y-x)) - \nabla_x f(x)  \|_2 \cdot \| y-x \|_2 dt\\
    \overset{L \text{-smoothness}}{\leq }& L \cdot \| y-x \|_2^2  \cdot \int_0^1 t dt \\
    =& \frac{L}{2} \cdot \| y-x \|_2^2 \enspace .
\end{align*}
\end{proof}

\begin{theorem}[Sufficient condition for the robustness of an L-smooth stochastic classifier]
Let $f:\mathbb{R}^d\times \Omega^h \rightarrow \mathbb{R}^k$ be a stochastic classifier with $L$-smooth discriminant functions and $f^{\mathcal{A}}$ and  $f^{\mathcal{I}}$ be two MC estimates of the prediction. Let 
$x\in \mathbb{R}^d$ be a data point with label $y \in \{1,\dots ,k \}$ and $\arg \max_c f_c^{\mathcal{A}}(x)=\arg \max_c f_c^{\mathcal{I}}(x)=y$, and let $x_{\text{adv}}=x+ \delta^{\mathcal{A}}$ be an adversarial example computed for solving the minimization problem~\eqref{eq:app_max_allowable_attack}
 for $f^{\mathcal{A}}$. 
It holds that $\arg \max_{c} f^{\mathcal{I}}_c(x+ \delta^{\mathcal{A}}) =y $ if 
\begin{equation*}
     \min_{c \neq y} r_c^{\mathcal{I}} > \|\delta^{\mathcal{A}}\|_2 \enspace,
\end{equation*}
 with
\begin{equation*}\label{eq:app_ri_beta}
      r_c^{\mathcal{I}} =  \begin{cases}
      &\infty  \enspace \text{, if}  \enspace  \| \nabla_x (f^{\mathcal{I}}_y(x) - f^{\mathcal{I}}_c(x))\|_2  \cdot \cos(\alpha_c^{\mathcal{I}, \mathcal{A}}) + \frac{L}{2}  \cdot \| \delta^{\mathcal{A}} \|_2 \leq 0 ,  \\
      &\frac{f^{\mathcal{I}}_y(x) - f^{\mathcal{I}}_c(x)}{ \| \nabla_x (f^{\mathcal{I}}_y(x) - f^{\mathcal{I}}_c(x))\|_2  \cdot \cos(\alpha_c^{\mathcal{I}, \mathcal{A}}) + \frac{L}{2} \cdot \| \delta^{\mathcal{A}} \|_2  } \enspace  \text{, else} \end{cases}  \\
 \end{equation*}
and
\begin{equation*}
    \cos(\alpha^{\mathcal{I}, \mathcal{A}}_c)= \frac{\langle - \nabla_x (f^{\mathcal{I}}_y(x) - f^{\mathcal{I}}_c(x)), \delta^{\mathcal{A}} \rangle}{\|\nabla_x (f^{\mathcal{I}}_y(x) - f^{\mathcal{I}}_c(x)) \|_2 \cdot \|\delta^{\mathcal{A}} \|_2} \nonumber \enspace .
\end{equation*}
\end{theorem}

\begin{proof}
For better readability we write $f^{\mathcal{I}}_{y-c}(x):=f^{\mathcal{I}}_y(x) - f^{\mathcal{I}}_c(x)$.
Using the result from  proposition~\ref{prop:app_smooth} and reordering the terms,
we get the following lower bound: %
\begin{align}
     &  f_{y-c}^{\mathcal{I}}(x + \delta^{\mathcal{A}})  \nonumber \\
    & \geq f^{\mathcal{I}}_{y-c}(x) + \langle  \nabla_x ( f^{\mathcal{I}}_{y-c}(x) ),  \delta^{\mathcal{A}} \rangle - \frac{L}{2} \cdot \| \delta^{\mathcal{A}} \|_2^2  \nonumber  \\
     &= f^{\mathcal{I}}_{y-c}(x) - \langle - \nabla_x ( f^{\mathcal{I}}_{y-c}(x) ),  \delta^{\mathcal{A}} \rangle - \frac{L}{2} \cdot \| \delta^{\mathcal{A}} \|_2^2  \nonumber  \\
    &=f^{\mathcal{I}}_{y-c}(x) - \|-\nabla_x f^{\mathcal{I}}_{y-c}(x)\|_2 \cdot \| \delta^{\mathcal{A}} \|_2 \cdot \cos(\alpha_c^{\mathcal{I}, \mathcal{A}}) 
    - \frac{L}{2} \cdot \| \delta^{\mathcal{A}} \|_2^2 \nonumber  \\
    & = f^{\mathcal{I}}_{y-c}(x) - \left(\| \nabla_x f^{\mathcal{I}}_{y-c}(x)\|_2  \cdot \cos(\alpha_c^{\mathcal{I}, \mathcal{A}}) + \frac{L}{2} \cdot \| \delta^{\mathcal{A}} \|_2  \right) \cdot \| \delta^{\mathcal{A}} \|_2 \label{eq:app_lower} \enspace .
\end{align}
If eq.~\eqref{eq:app_lower} is bigger than zero, the attack cannot be successful.
Hence,
\begin{align}
& f^{\mathcal{I}}_{y-c}(x) - \bigg( \| \nabla_x f^{\mathcal{I}}_{y-c}(x)\|_2  \cdot \cos(\alpha_c^{\mathcal{I}, \mathcal{A}}) + \frac{L}{2}\cdot \| \delta^{\mathcal{A}} \|_2  \bigg) \cdot \| \delta^{\mathcal{A}} \|_2  \overset{!}{>} 0  \label{eq:app_pre}\\
&f^{\mathcal{I}}_{y-c}(x)  > \bigg(\| \nabla_x f^{\mathcal{I}}_{y-c}(x)\|_2  \cdot \cos(\alpha_c^{\mathcal{I}, \mathcal{A}}) + \frac{L}{2} \cdot \| \delta^{\mathcal{A}} \|_2  \bigg) \cdot \| \delta^{\mathcal{A}} \|_2    \label{eq:app_calc_bound} \enspace .
\end{align}
\paragraph{Case 1: $ \| \nabla_x f^{\mathcal{I}}_{y-c}(x)\|_2  \cdot  \cos(\alpha_c^{\mathcal{I}, \mathcal{A}}) + \frac{L}{2} \cdot \| \delta^{\mathcal{A}} \|_2 > 0$.}
Transforming eq.~\eqref{eq:app_calc_bound} leads to 
\begin{align*}
\frac{f^{\mathcal{I}}_{y-c}(x)}{ \| \nabla_x f^{\mathcal{I}}_{y-c}(x)\|_2  \cdot \cos(\alpha_c^{\mathcal{I}, \mathcal{A}}) + \frac{L}{2} \cdot \| \delta^{\mathcal{A}} \|_2  } >&  \| \delta^{\mathcal{A}} \|_2 \enspace.
\end{align*}
\paragraph{Case 2 : $\| \nabla_x f^{\mathcal{I}}_{y-c}(x)\|_2  \cdot \cos(\alpha_c^{\mathcal{I}, \mathcal{A}}) + \frac{L}{2} \cdot \| \delta^{\mathcal{A}} \|_2  = 0$. }
In this case eq.~\eqref{eq:app_pre} is trivially fulfilled because $f^{\mathcal{I}}_{y-c}(x)>0$ per definition.
\paragraph{Case 3: $\| \nabla_x f^{\mathcal{I}}_{y-c}(x)\|_2  \cdot  \cos(\alpha_c^{\mathcal{I}, \mathcal{A}}) + \frac{L}{2} \cdot \| \delta^{\mathcal{A}} \|_2  < 0$.}
For this case we get, that
\begin{align*}
-\frac{f^{\mathcal{I}}_{y-c}(x)}{ | \| \nabla_x f^{\mathcal{I}}_{y-c}(x)\|_2  \cdot \cos(\alpha_c^{\mathcal{I}, \mathcal{A}}) + \frac{L}{2} \cdot \| \delta^{\mathcal{A}} \|_2 | } <&  \| \delta^{\mathcal{A}} \|_2 \enspace ,
\end{align*}
which is always guaranteed based on the initial assumption that the benign input was classified correctly, which concludes the proof.
\end{proof}

The following proposition relates to footnote~5 from the main paper. We show, that the interval, in which the expectation of the gradient norm lies, decreases to the true length of $\mu$ with reducing the covariance.

\begin{proposition}
Let X be an $n$-dimensional random vector following a multivariate normal distribution with mean vector $\mu$ and diagonal covariance matrix $\Sigma$. Then the expectation of $\| X \|_2$ can be upper and lower bounded by
\begin{equation*}
  \| \mu \|_2 \leq \mathbb{E}[\| X \|_2 ] \leq \sqrt{ \| \mu \|_2^2 + tr\left(\Sigma\right)} \enspace .
\end{equation*}
\end{proposition}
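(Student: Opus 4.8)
The plan is to obtain each of the two bounds from a single application of Jensen's inequality, exploiting that the Euclidean norm $\|\cdot\|_2$ is convex while the square root is concave. Throughout I write $X=(X_1,\dots,X_N)^T$ with independent coordinates $X_i\sim\mathcal{N}(\mu_i,\sigma_i^2)$, which is justified since $\Sigma$ is diagonal with entries $\sigma_1^2,\dots,\sigma_N^2$, so that $tr(\Sigma)=\sum_i\sigma_i^2$.

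For the upper bound I would write $\|X\|_2=\sqrt{\|X\|_2^2}$ and apply Jensen's inequality to the concave map $t\mapsto\sqrt{t}$, giving $\mathbb{E}[\|X\|_2]\le\sqrt{\mathbb{E}[\|X\|_2^2]}$. The second moment is then computed coordinatewise: $\mathbb{E}[\|X\|_2^2]=\sum_{i=1}^N\mathbb{E}[X_i^2]=\sum_{i=1}^N(\mu_i^2+\sigma_i^2)=\|\mu\|_2^2+tr(\Sigma)$, using $\mathbb{E}[X_i^2]=\mathrm{Var}(X_i)+(\mathbb{E}X_i)^2$. This yields $\mathbb{E}[\|X\|_2]\le\sqrt{\|\mu\|_2^2+tr(\Sigma)}$.

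For the lower bound I would apply Jensen's inequality to the convex map $x\mapsto\|x\|_2$, obtaining $\mathbb{E}[\|X\|_2]\ge\|\mathbb{E}[X]\|_2=\|\mu\|_2$. Equivalently, and more usefully for the strictness discussion, one can fix the unit vector $u=\mu/\|\mu\|_2$ (when $\mu\ne 0$) and use the pointwise inequality $\|X\|_2\ge\langle X,u\rangle$, which after taking expectations gives $\mathbb{E}[\|X\|_2]\ge\langle\mu,u\rangle=\|\mu\|_2$.

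The main work is upgrading both inequalities to the strict inequalities claimed, which is where the non-degeneracy of the Gaussian enters. The square root is strictly concave, so Jensen's inequality is strict unless $\|X\|_2^2$ is almost surely constant; since at least one $\sigma_i^2>0$, the random variable $\|X\|_2^2$ has a nondegenerate distribution, so the upper bound is strict. For the lower bound, the pointwise inequality $\|X\|_2\ge\langle X,u\rangle$ is an equality only on the ray $\{tu:t\ge 0\}$, which is a Lebesgue-null set; as $X$ has a density, the inequality is strict on a set of positive probability, forcing a strict inequality after integration (the case $\mu=0$ is immediate, since then $\|\mu\|_2=0<\mathbb{E}[\|X\|_2]$ whenever $\Sigma\ne 0$). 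Verifying that the equality cases of Jensen cannot occur for a genuinely random Gaussian is the only delicate point; the bounds themselves are routine.
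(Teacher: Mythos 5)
Your proof is correct and follows essentially the same route as the paper's: Jensen's inequality applied to the convex norm for the lower bound, and to the square (equivalently, the concave square root) for the upper bound, together with the computation $\mathbb{E}\left[\|X\|_2^2\right]=\|\mu\|_2^2+tr(\Sigma)$. You are in fact more careful than the paper, which asserts the strict inequalities while its Jensen steps only directly give the non-strict versions; your non-degeneracy arguments for when Jensen is strict close that small gap.
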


\begin{proof}
We first look at the lower bound which by convexity of the norm and Jensen inequality can be derived via
\begin{equation*}
    \mathbb{E}[\| X \|_2 ] \leq \| \mathbb{E}[ X ] \|_2  = \| \mu \|_2 \enspace .
\end{equation*}
Let $X'\sim \mathcal{N}(0,\textbf{1}_n)$, with $\textbf{1}_n$ an $n\times n$-dimensional unit matrix.
With Jensen inequality and concavity of the square-root function we derive the upper bound 
\begin{align*}
        &\mathbb{E}[ \| X \|_2 ]
        =\mathbb{E}\left[ \| \mu + X' \cdot 
         \Sigma^{1/2}
         \|_2 \right]\\
      &\leq   \left(
      \mathbb{E}\left[\left\| \mu + X' \cdot 
      \Sigma^{1/2} \right\|_2^2 \right] \right)^{1/2} \\
      &= \bigg ( \| \mu \|_2^2 + 2 \mu^T 
      \Sigma^{1/2}   \mathbb{E}[ X'] + \mathbb{E}\left[X'^T \Sigma^{{1/2}^{T}}  
      \Sigma^{1/2} X'\right]  \bigg )^{1/2} \\
      &= \sqrt{ \| \mu \|_2^2 + tr\left(\Sigma\right)  } \enspace .
\end{align*}
\end{proof}

\section{Additional information on datasets, models and training}\label{app:hyper}
In the following we describe additional details on the datasets, models and training procedures which were not stated in the main paper due to space restrictions. Additionally, please find the code for the results in the main paper attached in the supplementary material. %

\subsection{Datasets}
We used three well know datasets: FashionMNIST~\citep{fashionmnist},  CIFAR10 and CIFAR100~\citep{cifar10-100}, which consist out of 60,000 training and  10,000 test images of dimension $28\times 28$ or $32 \times 32 \times 3$ in case of CIFAR where each image is uniquely associated to one out of 10 or 100  possible labels. We took all datasets from the torchvision package with the predefined training and test split.

\subsection{Models trained on FashionMNIST}
\label{app:hyper_fashion}
For training the BNN and IM we used the exact same hyperparameters. First, we assumed a standard normal prior decomposed as matrix variate normal distributions for the BNN and approximated the posterior distribution via maximizing the evidence lower bound (ELBO). For the IM we
added a Kullback-Leibler distance from the trained parameter distribution to a standard matrix variate normal distribution as a regularization term. %
For both models we used a batch size of 100 and trained for 50 epochs with Adam~\citep{adam} and an initial learning rate of $0.001$. To leverage the difference between IM and BNN we used 5 samples to approximate the expectation in the ELBO/IM-objective. 
We used the same learning rate, batch size, optimizer and amount of epochs for training the stochastic input networks. During a forward pass in training we created and used five noisy versions of each input, where the noise was drawn from a centered Gaussian distribution with variance $0.05$ or $0.1$ and the average prediction was fed into the cross-entropy loss.

\subsection{Models trained on CIFAR10}
\label{app:hyper_cifar10}
As stated in the main part, we used the wide ResNet~\citep{zagoruyko2017wide} of depth 28 and widening factor 10 provided by \url{https://github.com/meliketoy/wide-resnet.pytorch} with dropout probabilities 0.3 and 0.6 and also used the learning hyperparameters provided with the code which are: training for 200 epochs with batch size 100, stochastic gradient descent as optimizer with momentum 0.9, weight decay 5e-4 and a scheduled learning rate decreasing from an initial 0.1 for epoch 0-60 to 0.02 for 60-120 and lastly 0.004 for epochs 120-200.

\section{Additional experimental results}

\label{app:further_experiments}
In this section we present the results which were not shown in the main part due to space restrictions.

\begin{figure}[thb]
\centering
\subfloat[IM]{\includegraphics[width=0.33\textwidth]{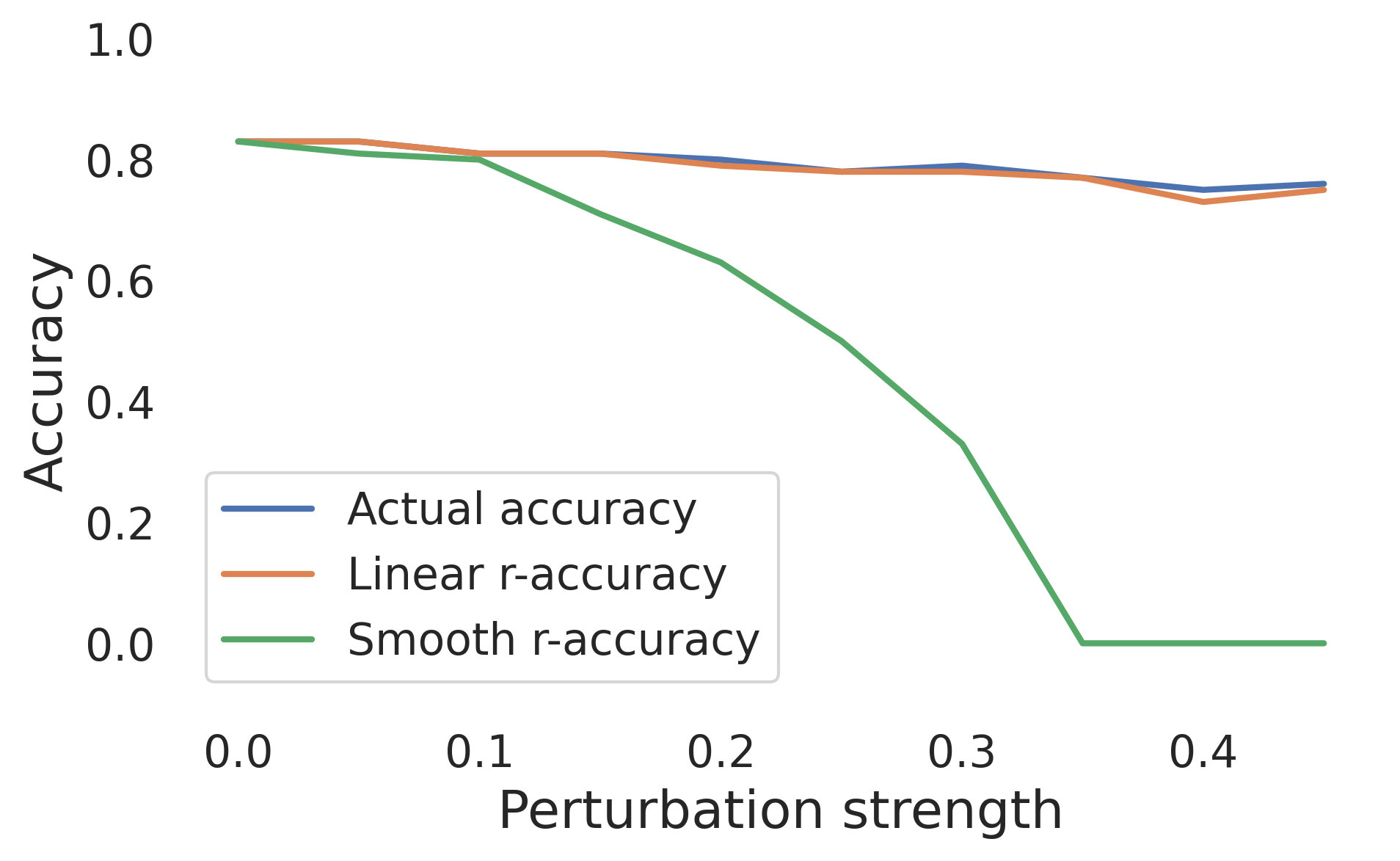} }%
\subfloat[dr 0.3]{\includegraphics[width=0.33\textwidth]{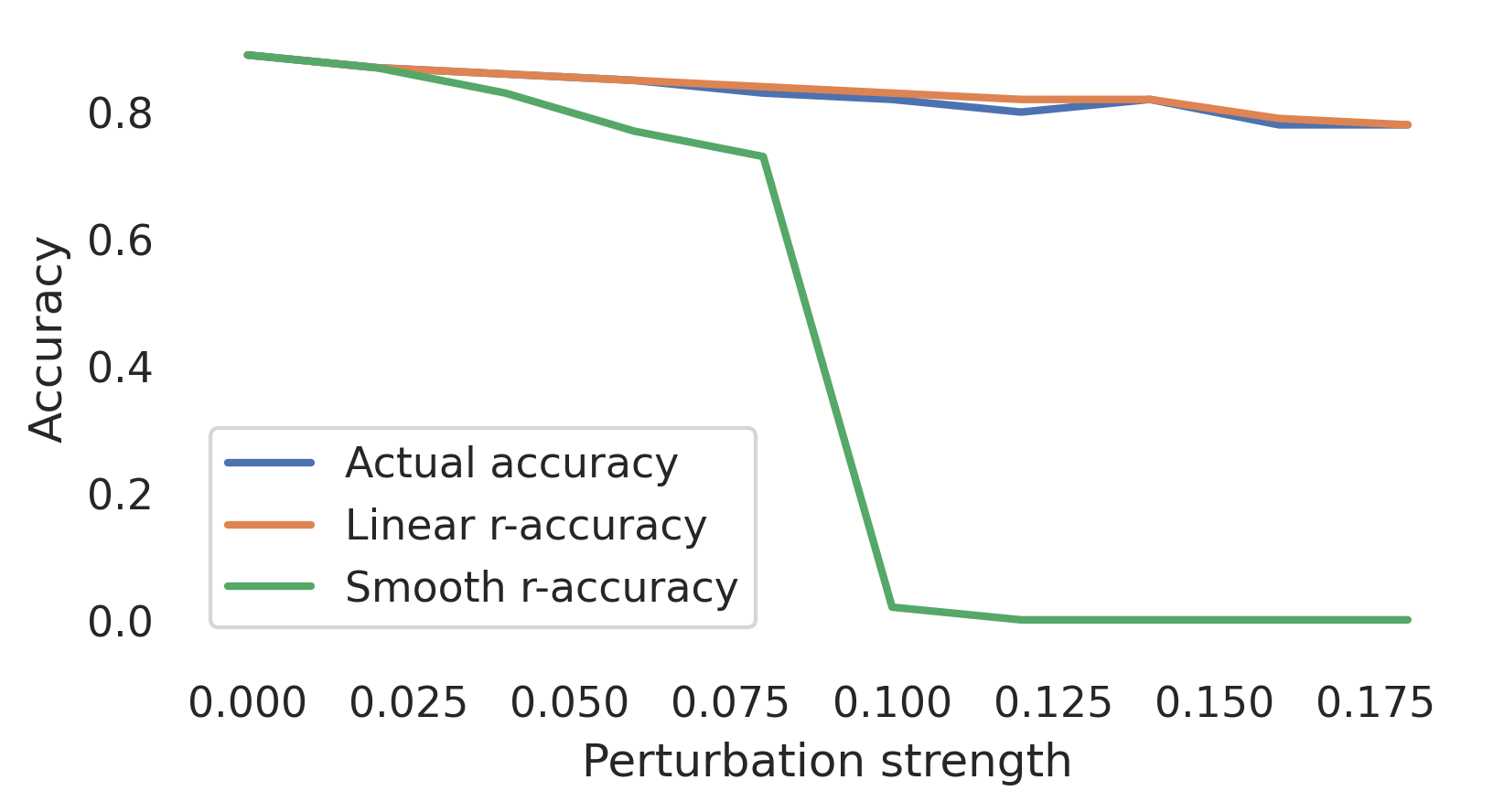} }%
\subfloat[dr 0.6]{\includegraphics[width=0.33\textwidth]{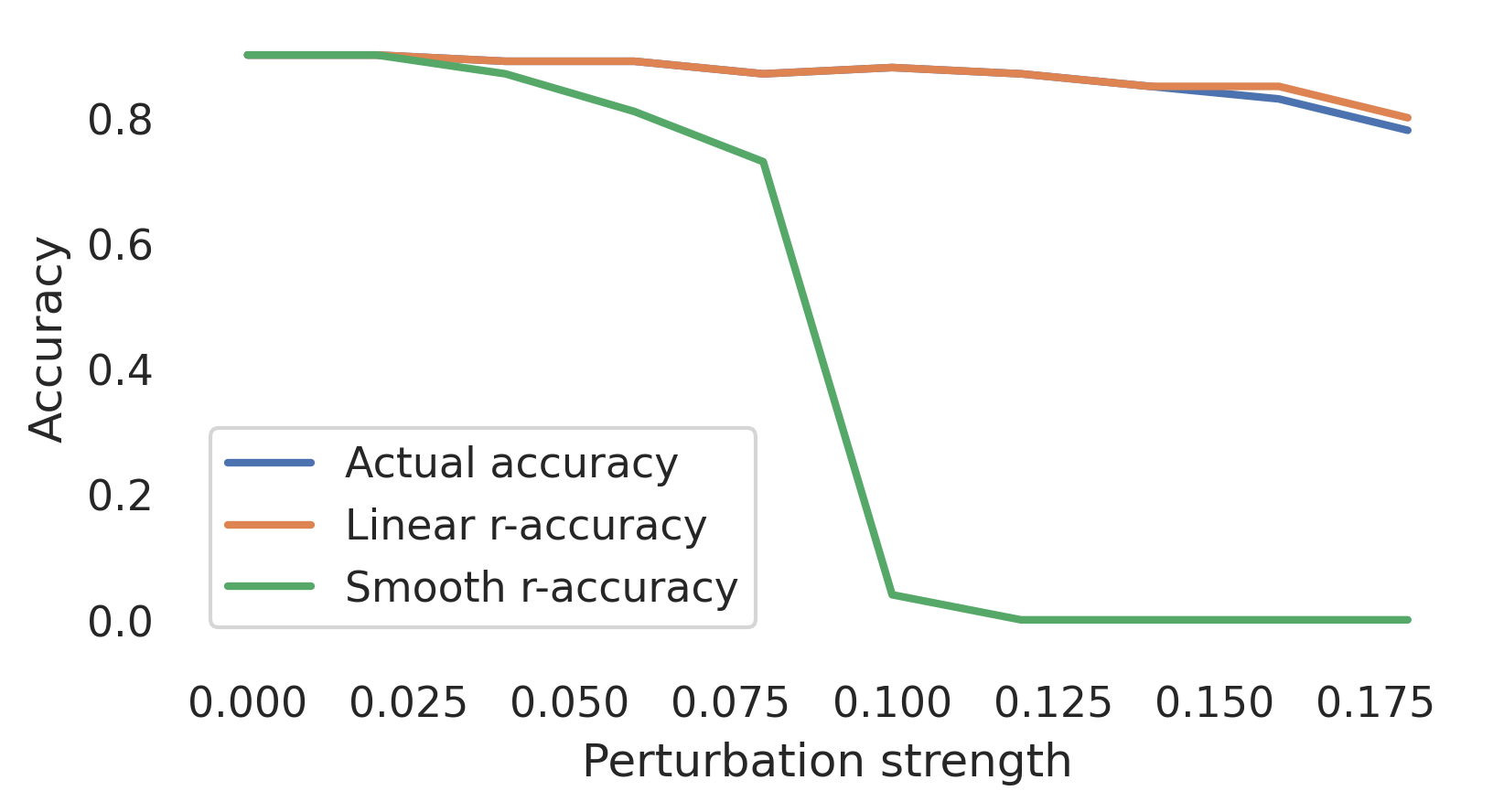} }%
\caption{Adversarial accuracy of the a) smoothed IM on FashionMNIST and b),c) smoothed ResNet with dropout probability 0.3 and 0.6 on CIFAR10 vs percentage of images for which $\min_c r^{\mathcal{I}}_c> \| \delta^{\mathcal{A}}\|_2$ (smooth) and $\min_c \tilde{r}^{\mathcal{I}}_c> \| \delta^{\mathcal{A}}\|_2$ (linear) for 100 images from the respective test sets. Attacks were conducted on the smoothed classifier with using 10 attack samples.
}
\label{fig:app_r_value_BNN_drop03}%
\end{figure} 

\subsection{Complementary experiments on accuracy of robustness conditions}
\label{app:com_accuracy_of_robustness_condition}
For completeness we attached the results on the transferability of our derived sufficient conditions to: the IM on FashionMNIST and the two ResNet with dropout probability 0.3 and 0.6 on CIFAR10. For the BNN we used the same setting as described in the main paper and derive similar results (c.f. figure~\ref{fig:app_r_value_BNN_drop03}): while the percentage of samples fulfilling the condition $\min_c r_c^{\mathcal{I}} > \| \delta^{\mathcal{A}} \|_2$ approaches zero with growing perturbation strength the percentage of samples fulfilling the condition from theorem 1 in the main paper closely matches the real adversarial accuracy in a narrow environment. For models on CIFAR10 we had to adapt the noise added for the smooth classifier to $0.01$ and reduce the amount of samples during inference to 50 such that it fits on one GPU.

\subsection{Complementary experiments on stronger attacks}
\label{app:comp_experiments_attack_sample}

We first present the results not shown in the main paper. That is, we investigate the accuracy under FGM attack with an increasing amount of samples used during the attack (c.f. figure~\ref{fig:app_acc_under_attack_BNN_drop03}). Similar to the observations in the main paper, the accuracy under attack is reduced by an increased amount of samples. However, 
we observe only a very small decrease in accuracy 
when increasing the amount of samples from
 100 and 1,000  for the BNN, from 1 to 5 or above for the SIN 0.05 and 
 when using 5 instead of
 10 or 100 samples for the attack on the ResNet trained with dropout probability 0.3.
 This observation is mirrored by the reduction of  $\cos(\alpha_c^{\mathcal{I}, \mathcal{A}})$ displayed in figure~\ref{fig:app_angle_BNN_drop03} where we observe an increase of the cosine boxplots which matches the decrease of the adversarial accuracy when taking more samples during the attack. %
 
\begin{figure}[]
\centering
\subfloat[BNN]{\includegraphics[width=0.33\textwidth]{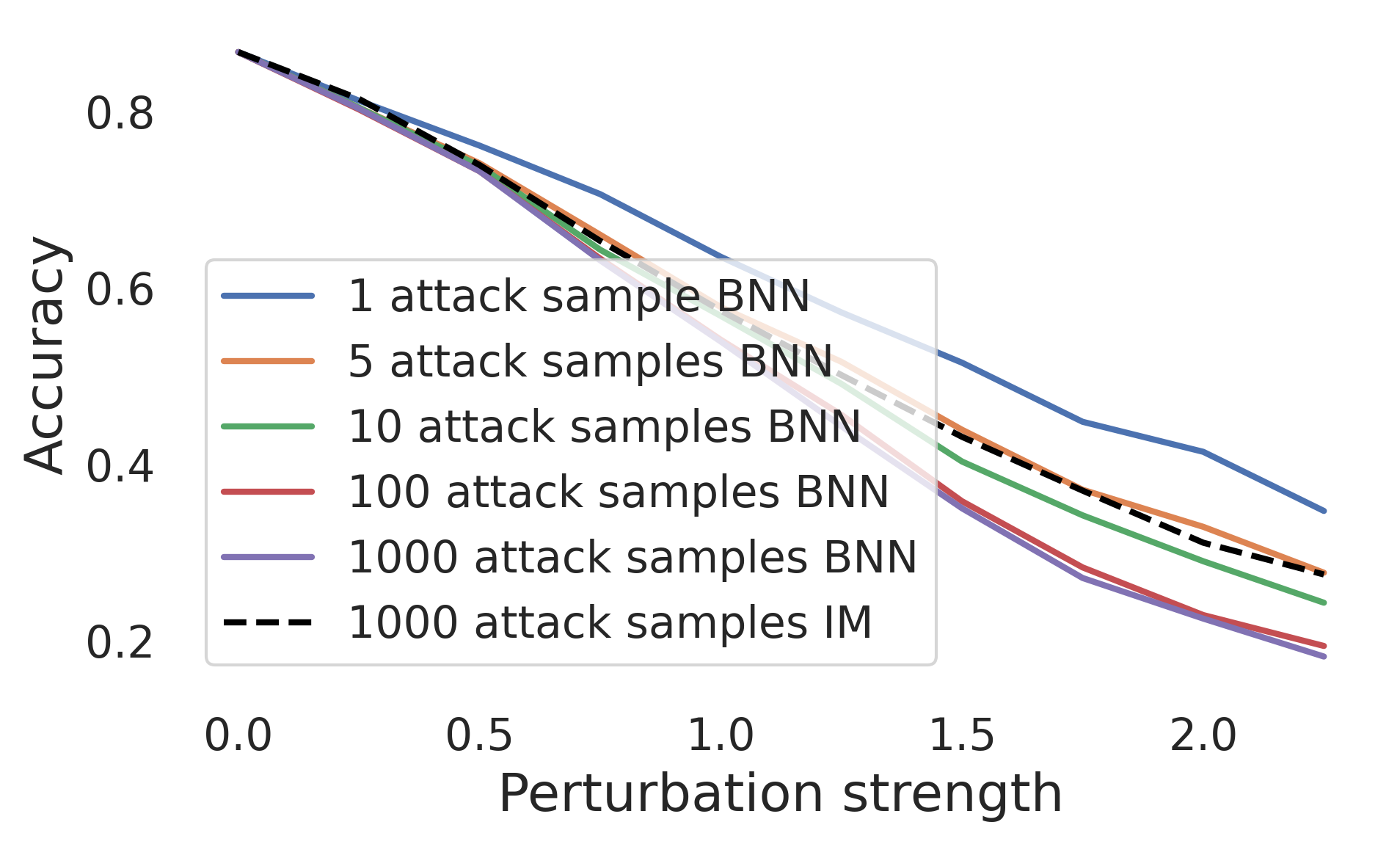} }%
\subfloat[SIN 0.05]{\includegraphics[width=0.33\textwidth]{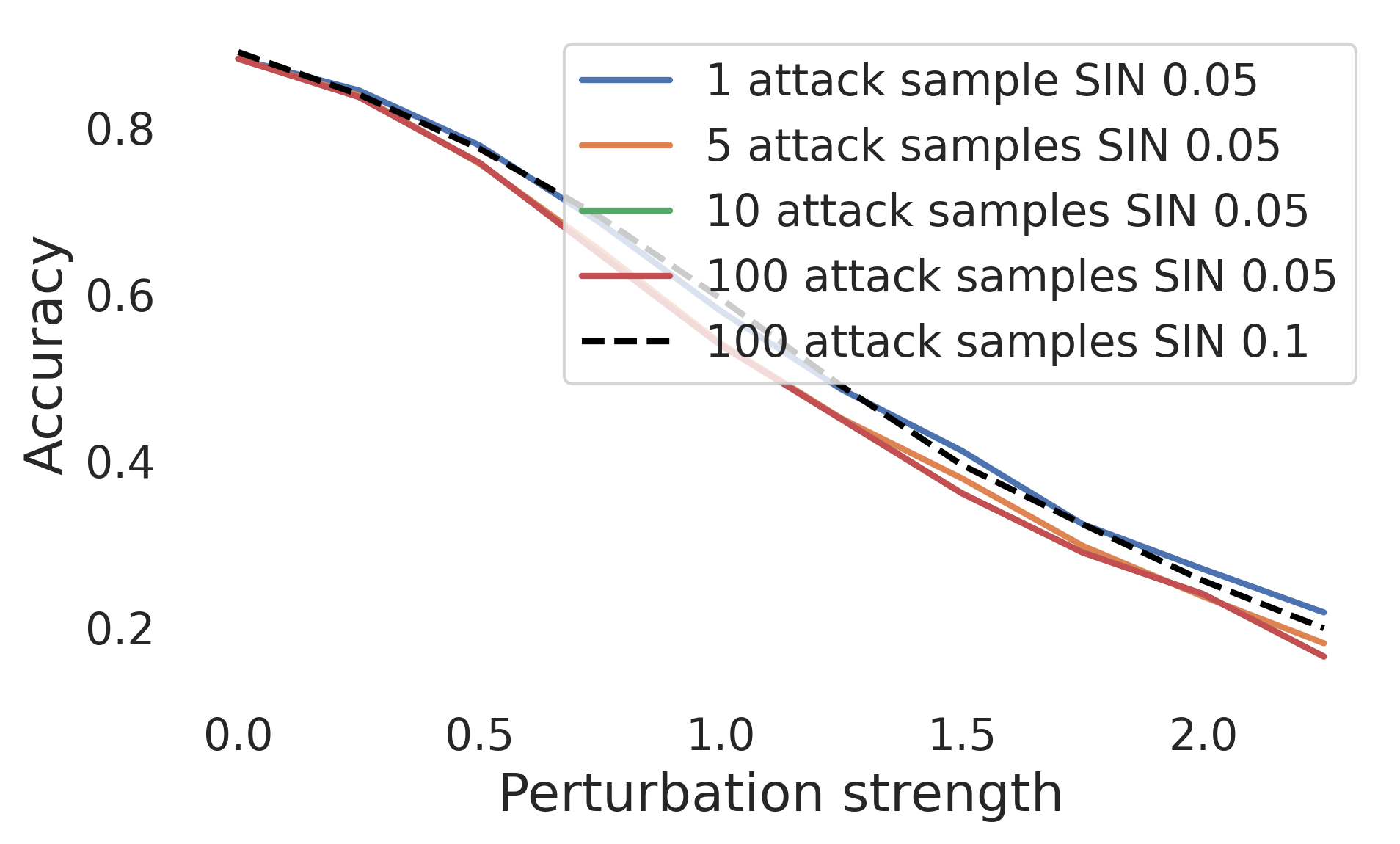} }%
\subfloat[dr 0.3]{\includegraphics[width=0.33\textwidth]{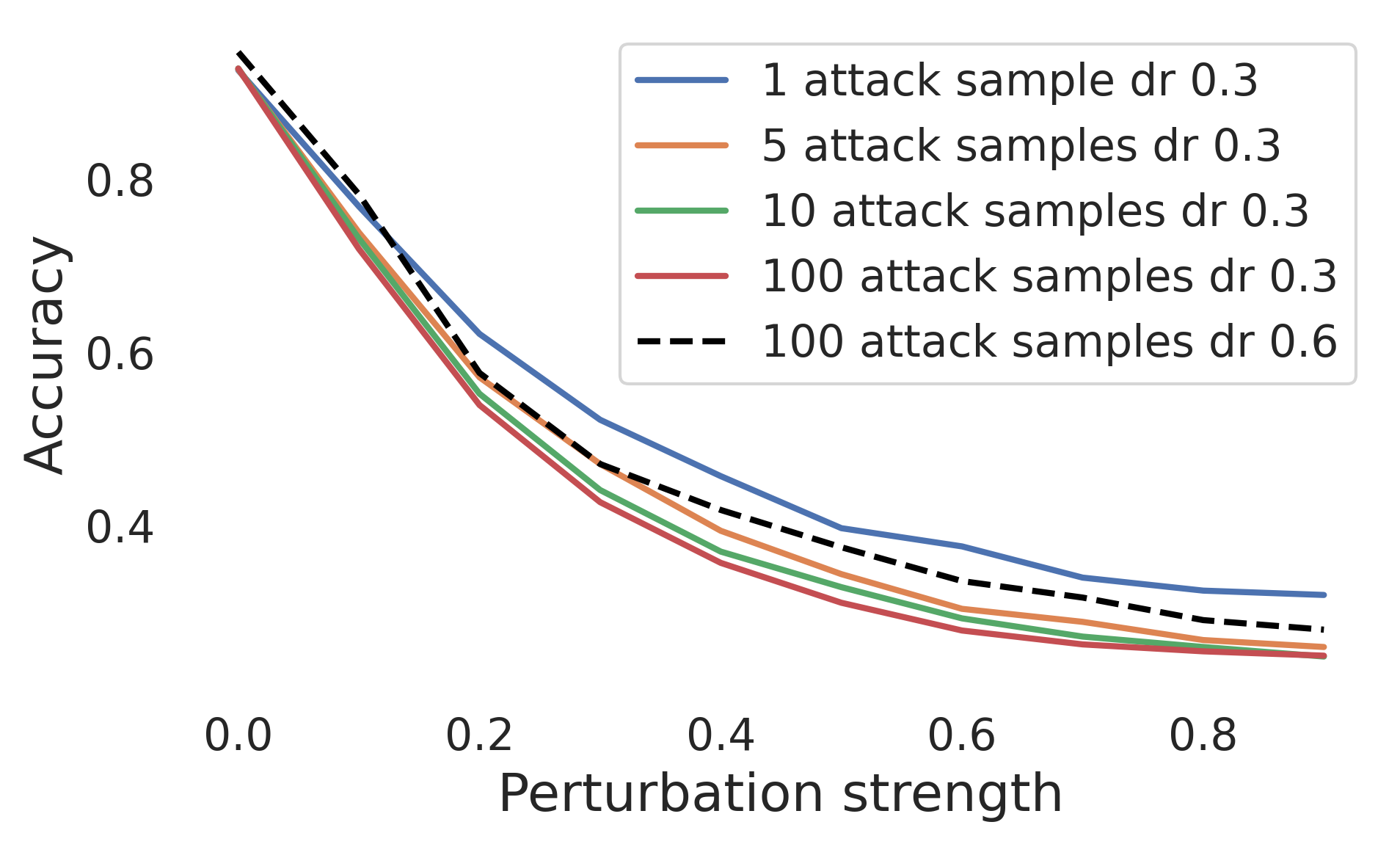} }%
\caption{Accuracy under FGM attack for a) the BNN and b) the SIN 0.05 on FashionMNIST and c) ResNet with dropout probability 0.3 on CIFAR10 for different perturbation strengths and amount of samples used for calculating the attack. During inference we used 100 samples.}%
\label{fig:app_acc_under_attack_BNN_drop03}%
\end{figure}

\begin{figure}[]
\centering
\subfloat[BNN]{\includegraphics[width=0.33\textwidth]{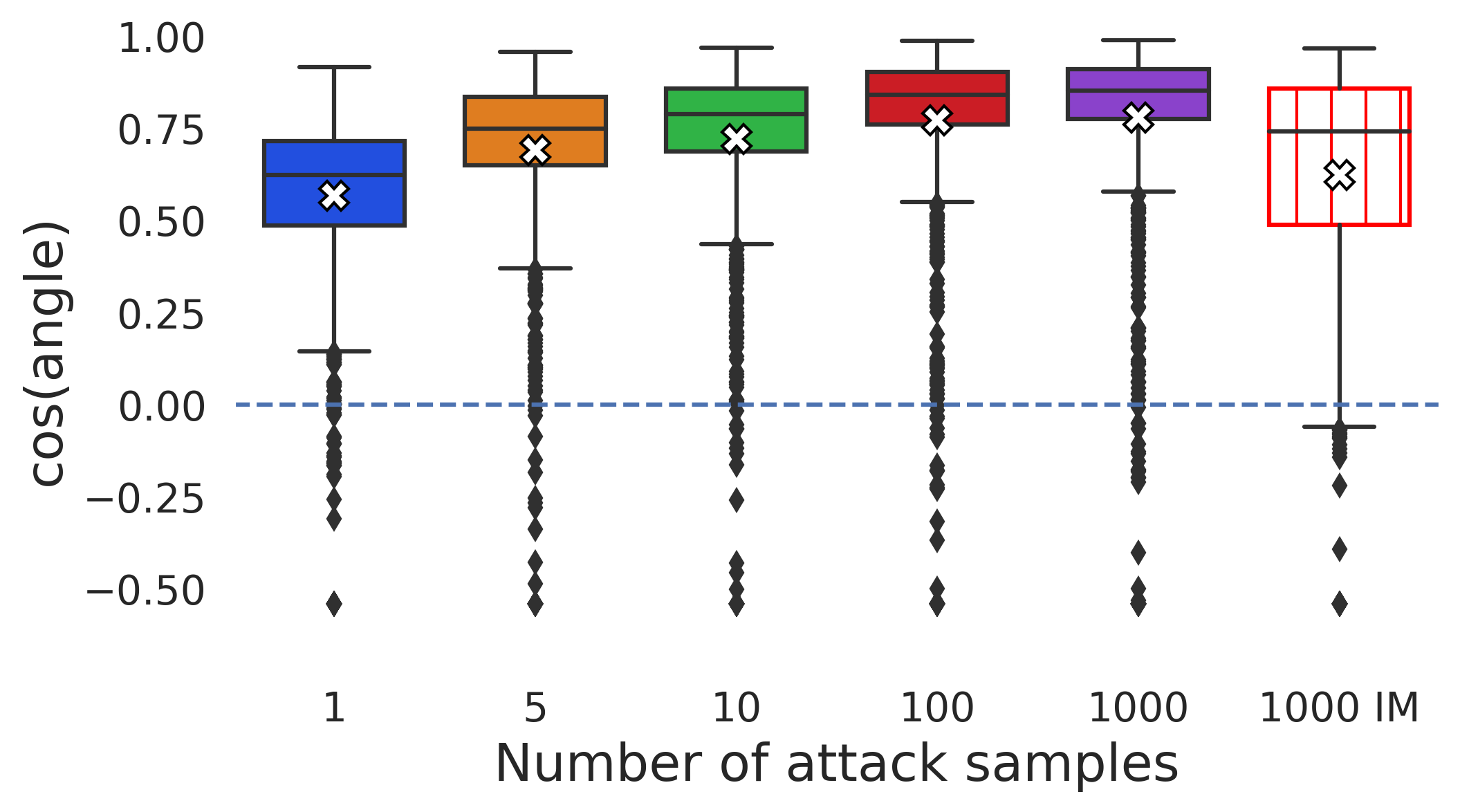}}%
\subfloat[SIN 0.05]{\includegraphics[width=0.33\textwidth]{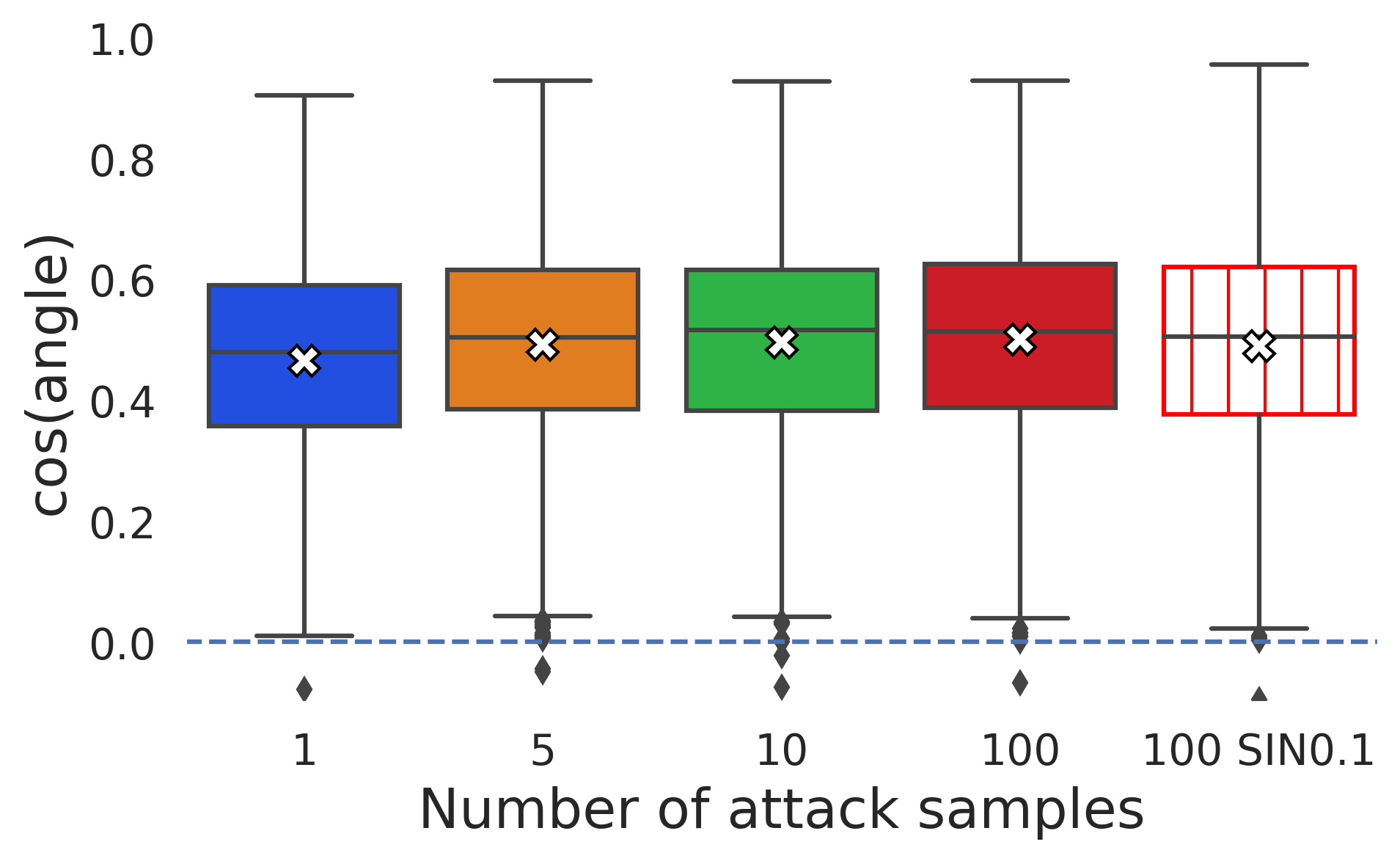}}%
\subfloat[dr 0.3]{\includegraphics[width=0.33\textwidth]{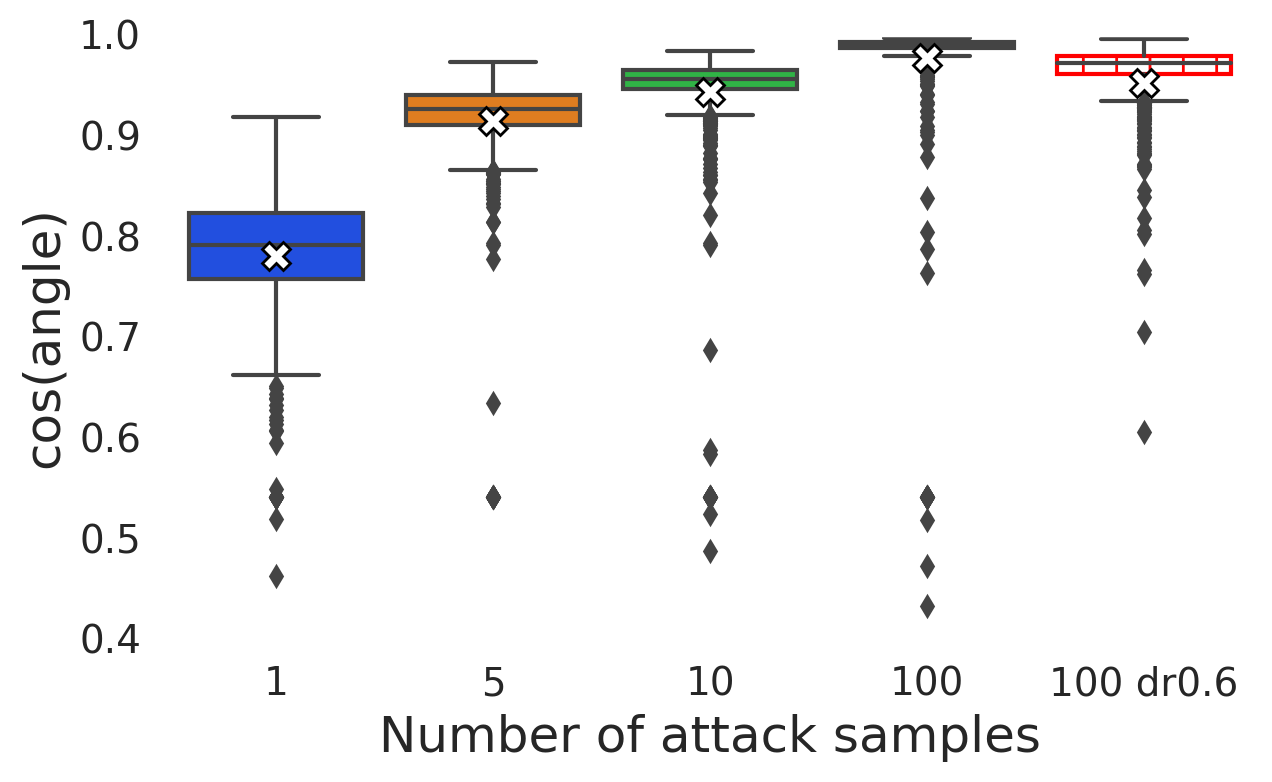}}%
\caption{Cosine of the angle for the first 1,000 test set images from the
FashionMNIST and CIFAR10 test set for an a) BNN, b) SIN 0.05 and c) ResNet trained with dropout probability 0.3 when attacked with
different amounts of samples and attack strength 1.5 and 0.3 with FGM respectively. White crosses indicate the mean value.}
\label{fig:app_angle_BNN_drop03}%
\end{figure}

\begin{figure}[h]
\centering
\subfloat[BNN]{\includegraphics[width=0.45\textwidth]{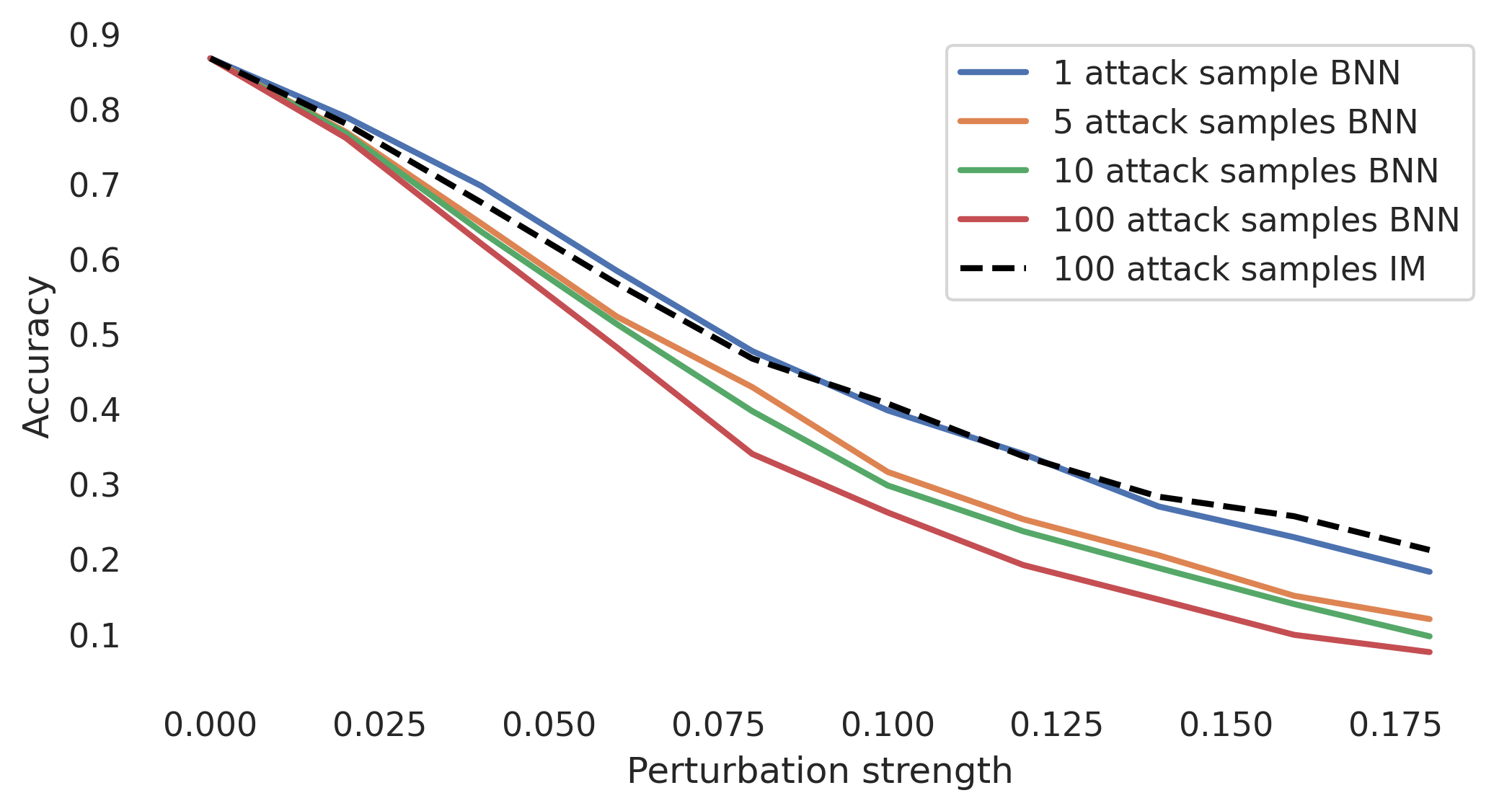} }%
\qquad
\subfloat[IM]{\includegraphics[width=0.45\textwidth]{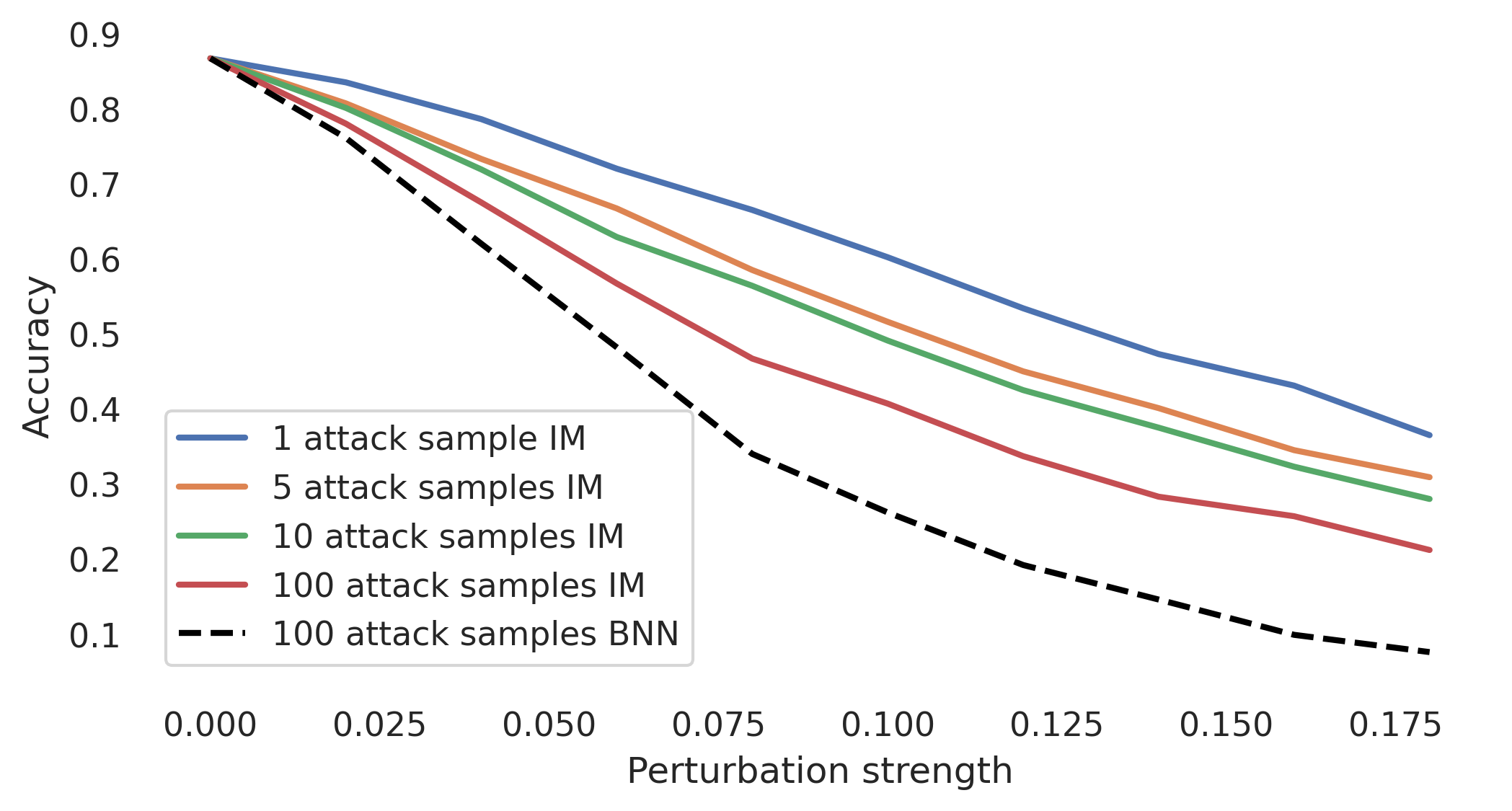} }%
\qquad
\subfloat[SIN 0.05]{\includegraphics[width=0.45\textwidth]{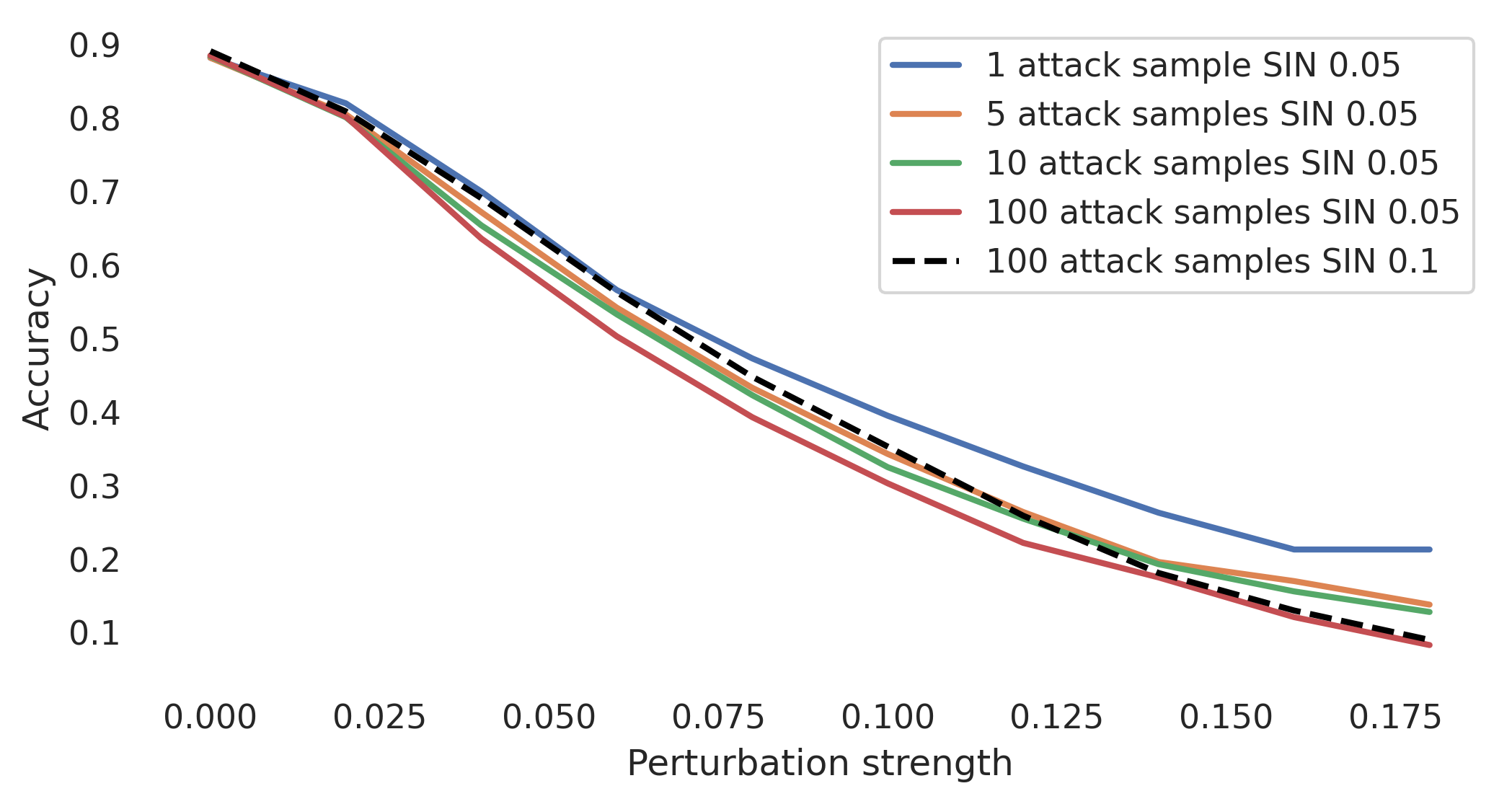} }%
\qquad
\subfloat[SIN 0.1]{\includegraphics[width=0.45\textwidth]{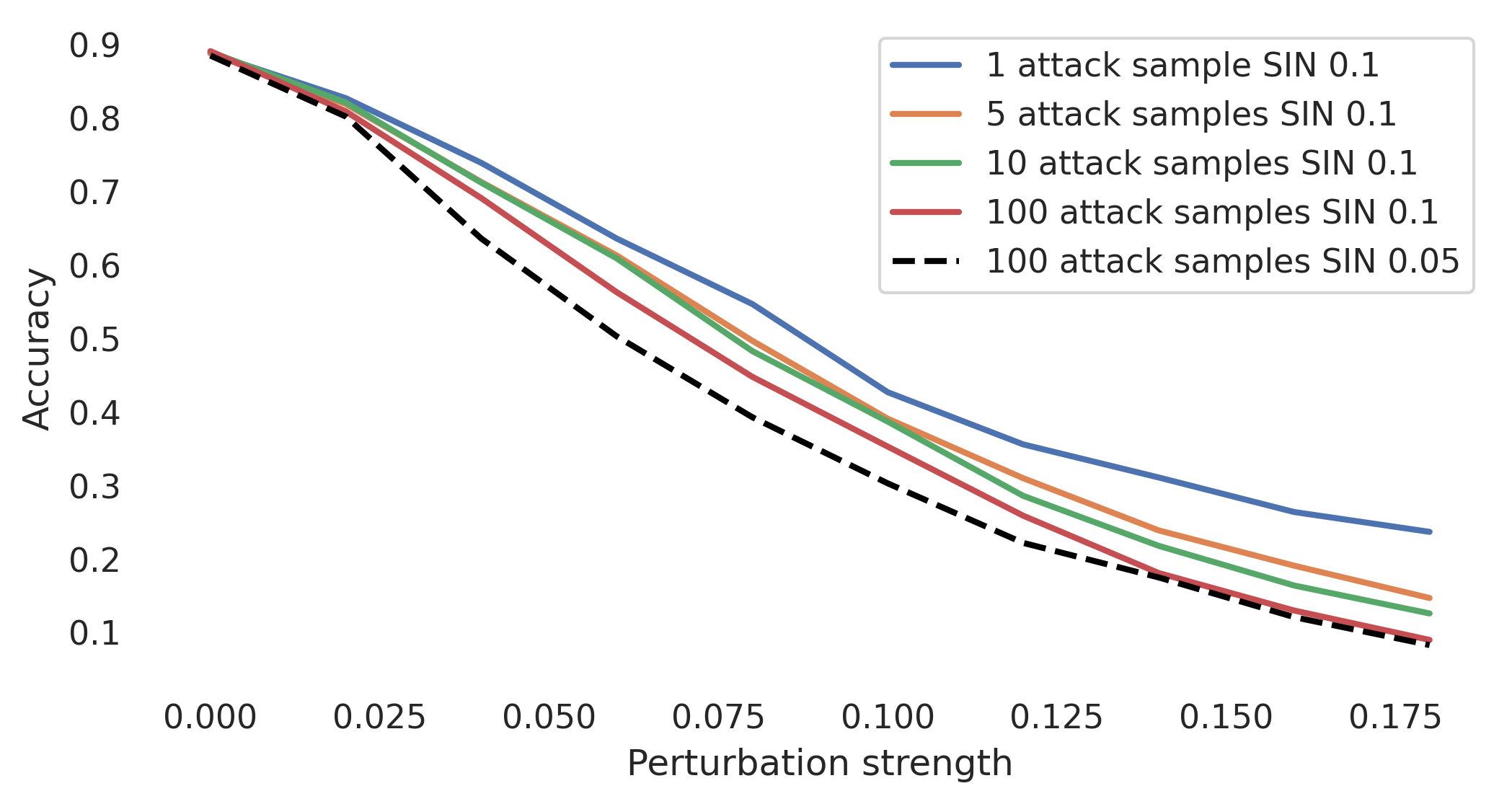} }%
\caption{Accuracy under FGSM attack under $\ell_\infty$-norm constraint for a) the BNN, b) the IM, c) SIN 0.05 and d) SIN 0.1 on the first 1,000 test set images from FashionMNIST for different perturbation strengths and amount of samples used for calculating the attack. Predictions during inference are based on 100 samples.}%
\label{fig:app_fgsm_l_infty_fmnist}%
\end{figure}

\begin{figure}[h]
\centering
\subfloat[droprate 0.3]{\includegraphics[width=0.45\textwidth]{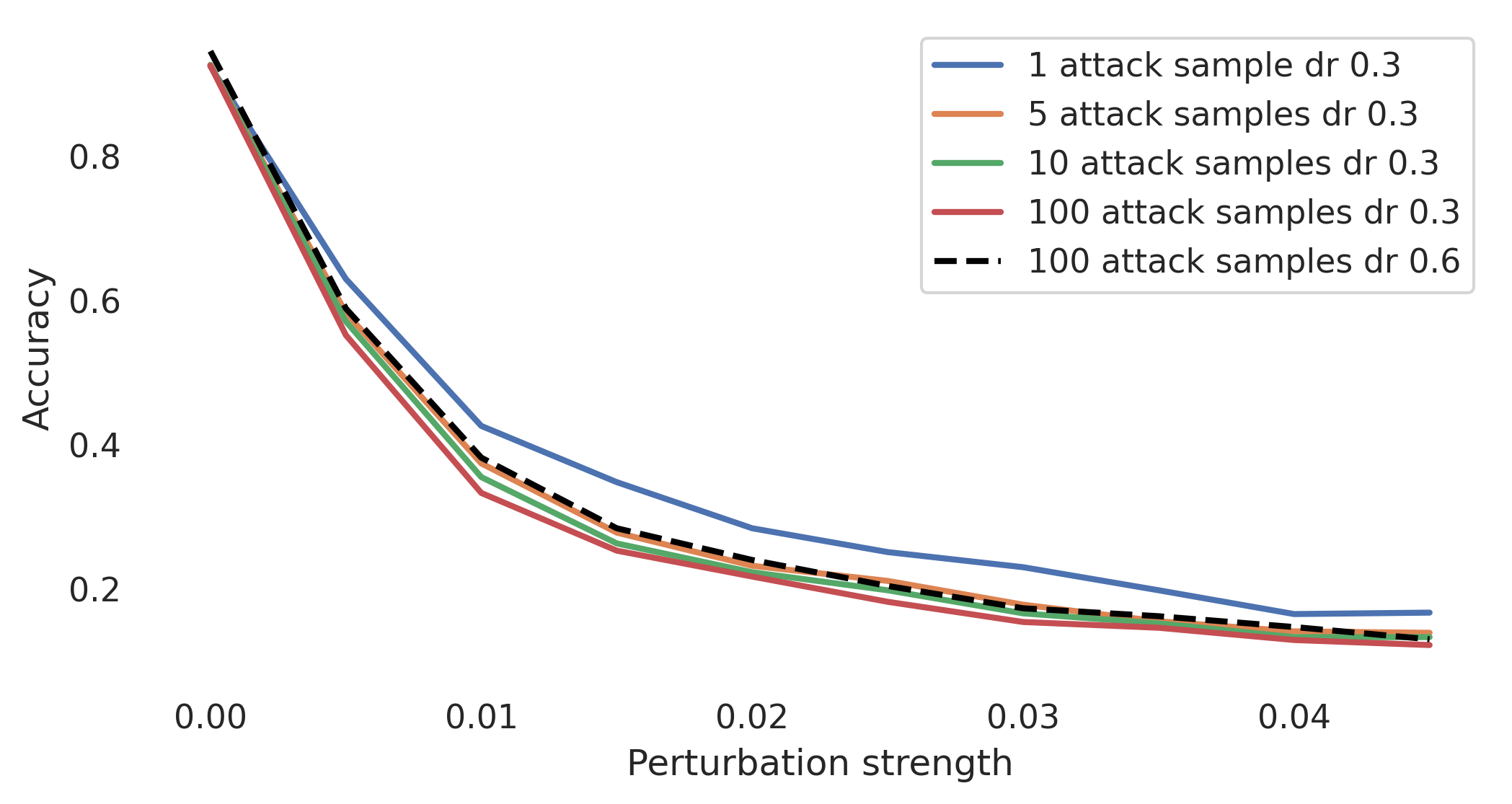} }%
\qquad
\subfloat[droprate 0.6]{\includegraphics[width=0.45\textwidth]{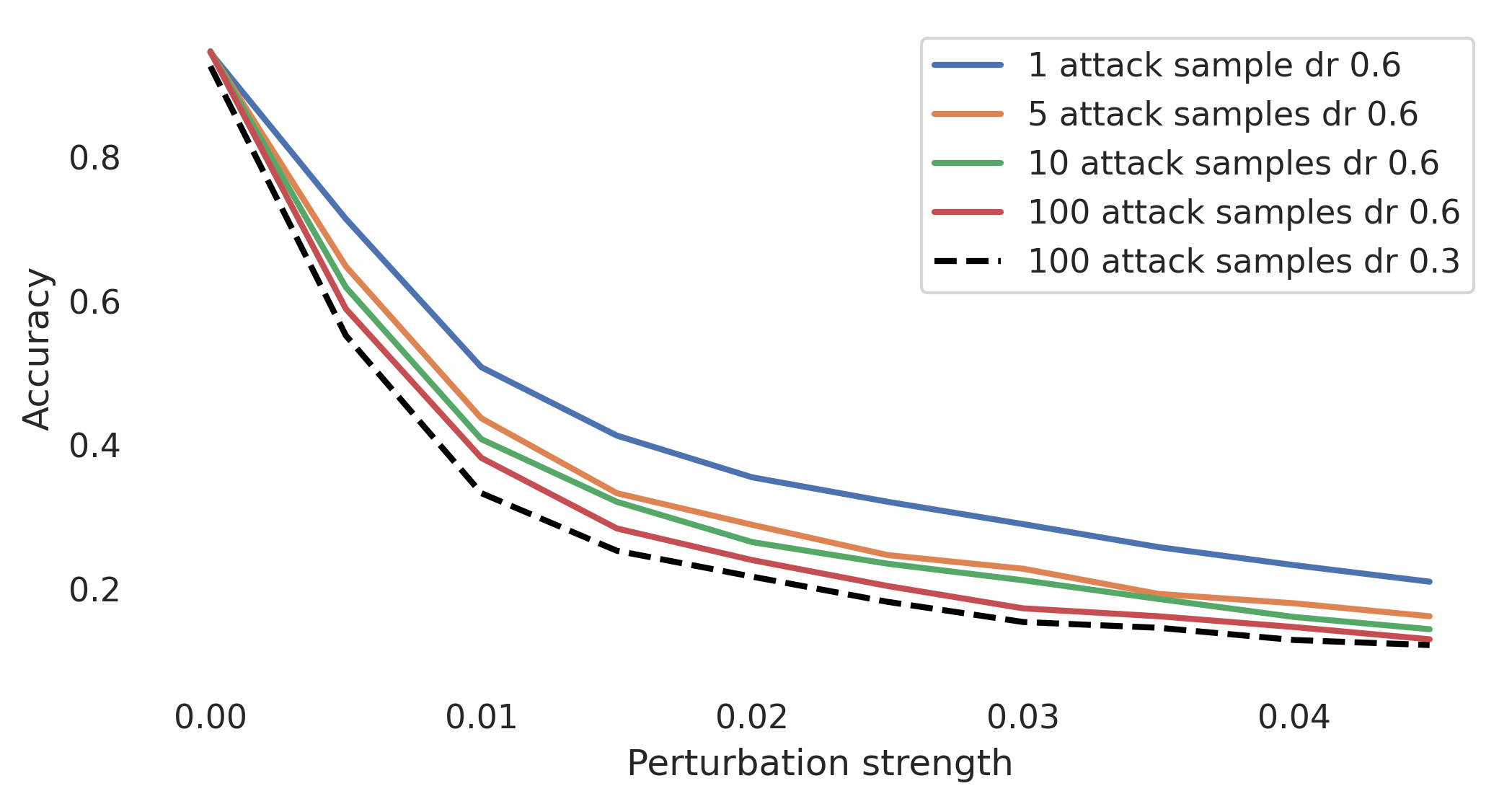} }%
\caption{Accuracy under FGSM attack under $\ell_\infty$-norm constraint for the ResNet model with a dropout probability a) of 0.3 and b) of 0.6  on the first 1,000 test set images from  CIFAR10 for different perturbation strengths and amount of samples used for calculating the attack. Predictions during inference are based on 100 samples.}%
\label{fig:app_fgsm_l_infty_cifar}%
\end{figure}

\subsubsection{Attacks with FGSM ($\ell_\infty$- norm)}
All adversarial examples in the main part of the paper were based on an $\ell_2$-norm constraint, which we chose for the nice geometric distance interpretation. However, the first proposed attack scheme~\citep{Goodfellow_fgsm} was based on $\ell_\infty$-norm, which we test in the following. In figure~\ref{fig:app_fgsm_l_infty_fmnist} and~\ref{fig:app_fgsm_l_infty_cifar}  we see the respective results on the different data sets. For all models the robustness is decreased with multiple samples and models with higher prediction variance also have a higher accuracy under this attack. Note, that the values of the perturbation strength are not comparable to the values under $\ell_2$- norm constraint, since $\|x \|_\infty \leq \| x \|_2$.

\newpage
\subsubsection{Attacks with PGD}
\label{app:pgd_results}
Projected gradient descent~\citep{madry2018towards} is a strong iterative attack, where multiple small steps of size $\nu$ of fast gradient method are applied. Specifically, we used the same $\ell_2$-norm length constraint on $\eta$ as in the experiments of the main part of the paper but chose step size $\nu = \eta/50$ and 100 iterations. Note that at each iteration a new network is sampled such that for an attack based on 1 samples, 100 different attack networks were seen, for an attack based on 5 samples 500 different networks and so on. In figure~\ref{fig:app_pgd_fmnist} and~\ref{fig:app_pgd_cifar} we see that the overall accuracy is decreased compared to the results for FGM, %
but still, IM has a higher accuracy under attack than the BNN and so does the ResNet with a higher dropout probability. Further, the attacks get stronger with taking more samples, so the general observations made in the main paper also hold for strong attacks and are not due to a sub-optimal attack. 

\begin{figure}[h!]
\centering
\subfloat[BNN]{\includegraphics[width=0.45\textwidth]{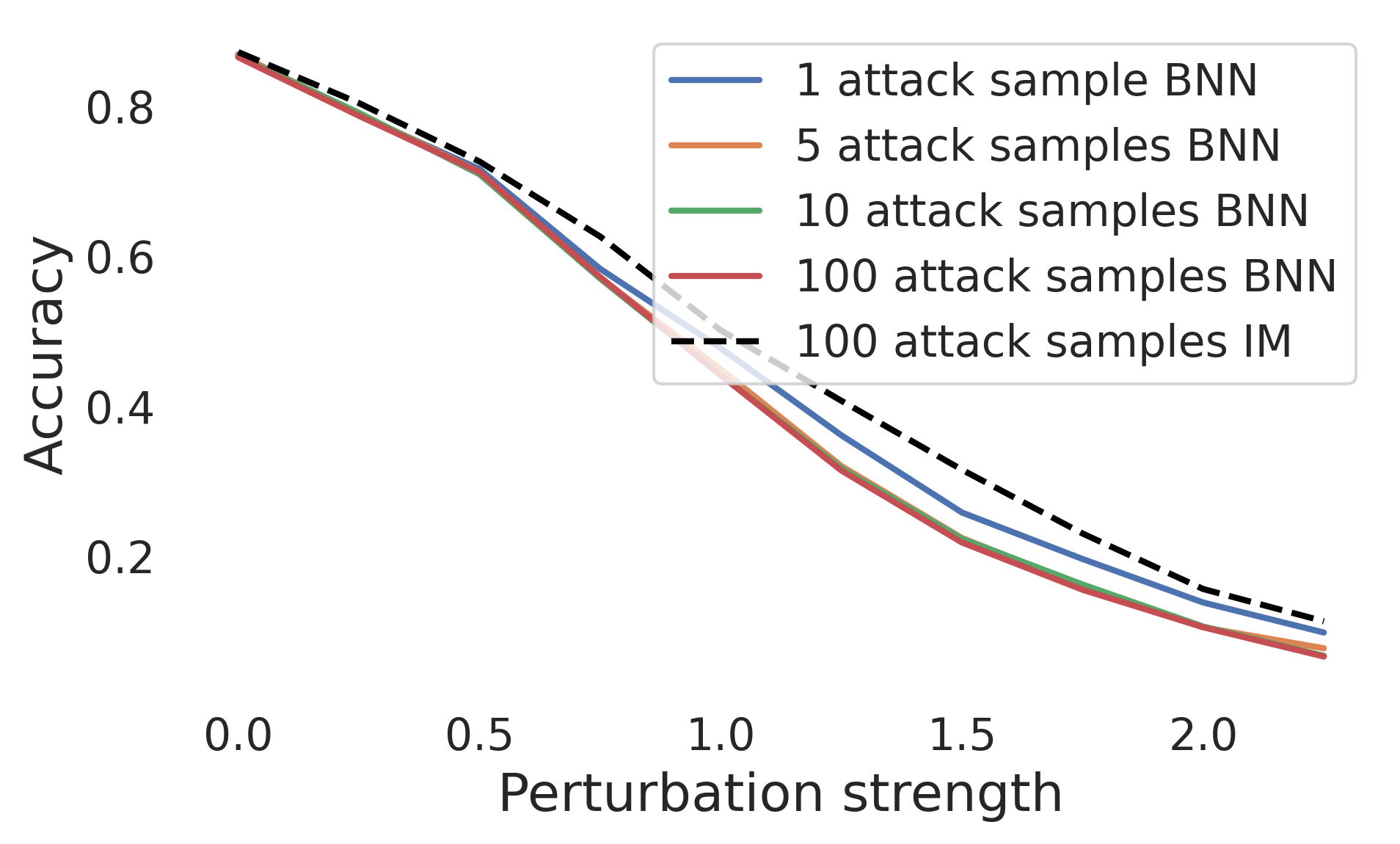} }%
\qquad
\subfloat[IM]{\includegraphics[width=0.45\textwidth]{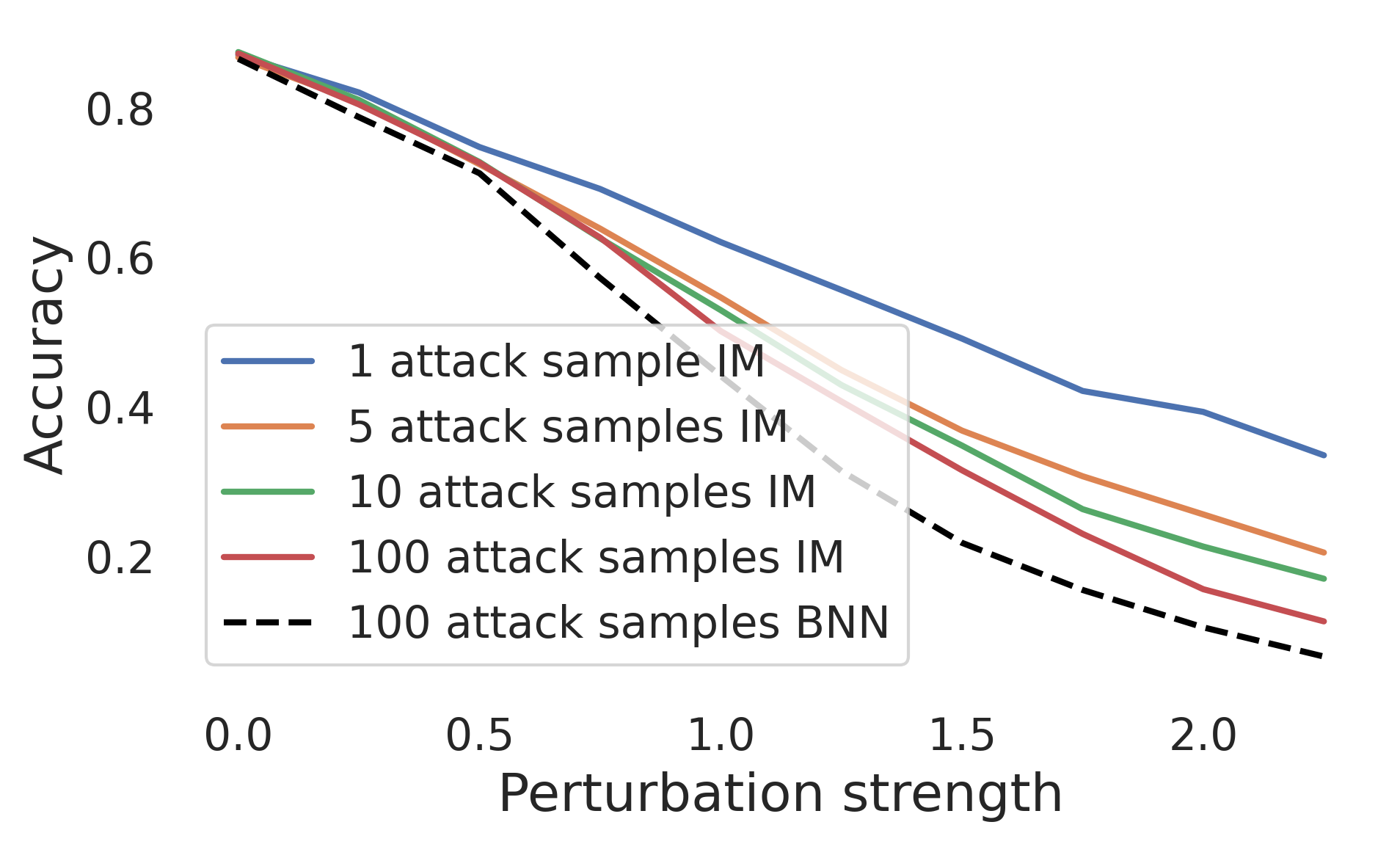} }%
\caption{Accuracy under PGD attack with 100 iterations for a) the BNN  and b) the IM on the first 1,000 test set images from FashionMNIST for different perturbation strengths and amount of samples used for calculating the attack. }%
\label{fig:app_pgd_fmnist}%
\end{figure}

\begin{figure}[htb]
\centering
\subfloat[droprate 0.3]{\includegraphics[width=0.45\textwidth]{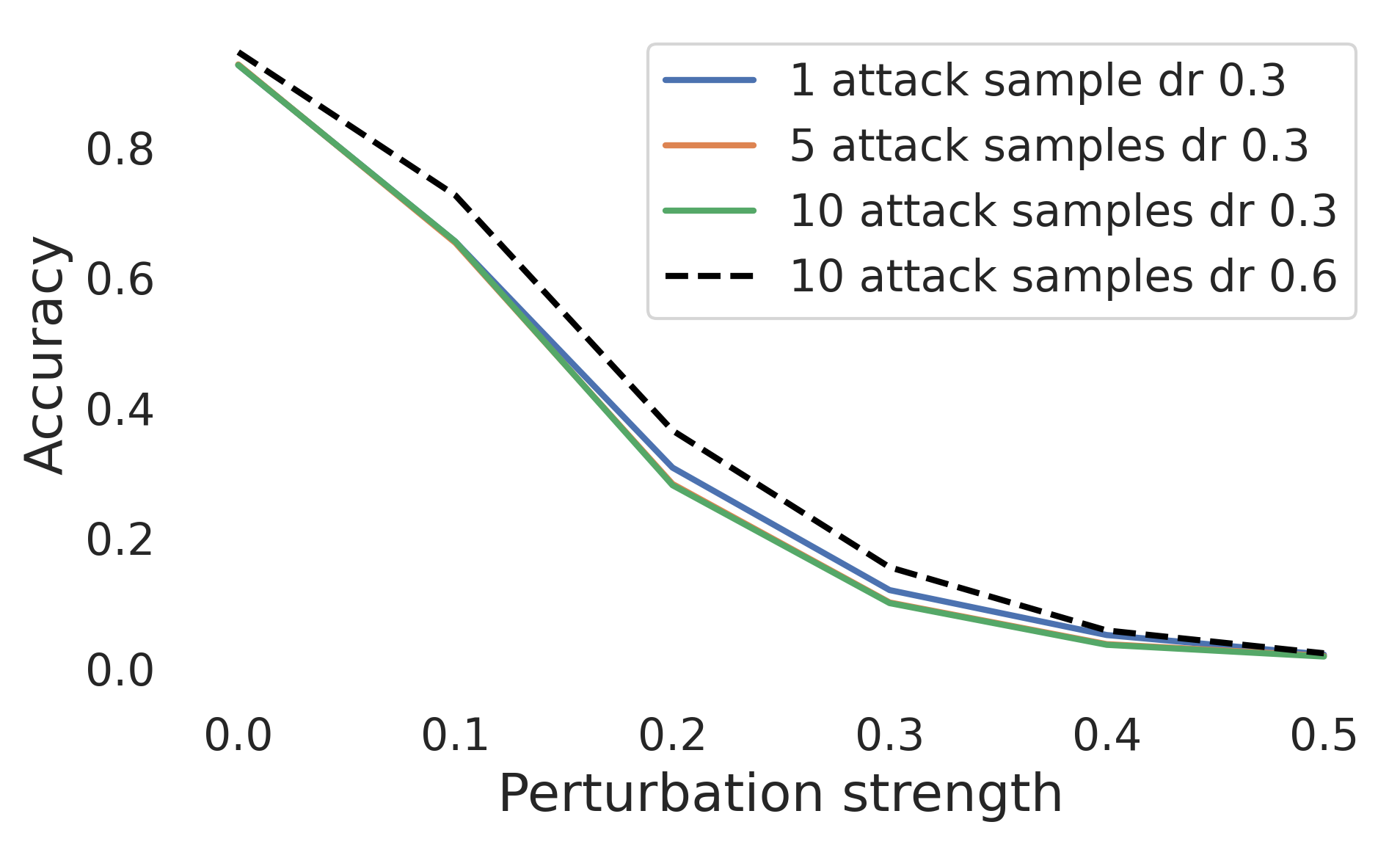} }%
\qquad
\subfloat[droprate 0.6]{\includegraphics[width=0.45\textwidth]{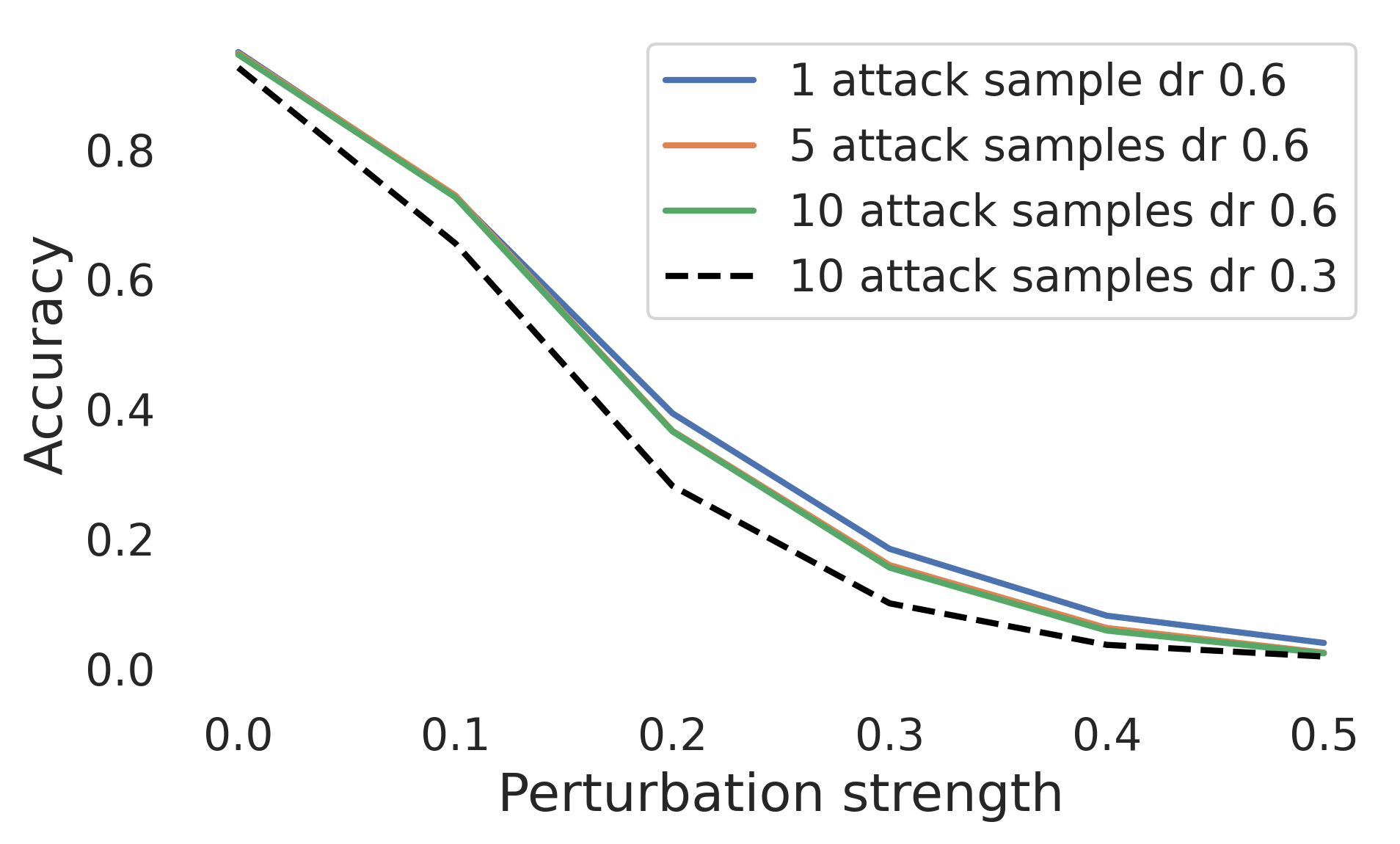} }%
\caption{Accuracy under PGD attack with 100 iterations for the ResNet with  a dropout probability of a) 0.3  and b) of 0.6 on CIFAR10 for different perturbation strengths and amount of samples used for calculating the attack.}%
\label{fig:app_pgd_cifar}%
\end{figure}

\subsection{Discussing the impact of extreme prediction values}
\label{app:comp_extrem_pred_and_length}
Attacks based on softmax predictions can be unsuccessful for deterministic and stochastic neural networks alike when encountering overly confident predictions. That is, predictions where the softmax output is equal to 1, since these lead to zero gradients.
A practical solution to circumvent this problem is to calculate the gradient based on the logits~\citep{carlini2017evaluating}. 
This approach is feasible for classifier whose predictions do not depend on the scaling of the output, that is, outputs which are equally expressive in both intervals $[0,1]$ and $[-\infty, \infty]$. In Bayesian neural networks, where $f_c(x, \Theta)$ is per definition a probability the shortcut over taking the gradient over logits leads to distorted gradients.
For completeness, we nevertheless look at the performance of an attack based on the logits for the two different models trained on FashionMNIST with softmax outputs: IM and BNN.
We conducted an adversarial FGM attack based on 100 samples, but instead of using the cross-entropy loss we used the Carlini-Wagner (CW) loss~\citep{carlini2017evaluating} on the averaged logits, given by:
\begin{equation*}
    CW(x, \Theta^{\mathcal{A}}) = \max\left(\max_{i \neq t}(  Z(x, \Theta^{\mathcal{A}})_i) - Z(x, \Theta^{\mathcal{A}})_t, 0\right) \enspace,
\end{equation*}
where $ Z(x, \Theta^{\mathcal{A}})_t = \frac{1}{S^A} \sum_{s=1}^{S^A}  Z(x, \theta_s)_t$ is an arbitrary averaged logit of an output node for input $x$.
In figure~\ref{fig:app_carlini_wagner_loss} it is shown, that the attack with the CW loss on the infinite mixture model improves upon the original attack scheme (FGM), whereas it did not improve the attack's success for the BNN. We additionally conducted an attack based on the logit margin loss $\mathcal{L}(x+\delta^\mathcal{A}, y) = - (\min_{c \neq y} f_{y-c}(x) )$ equivalent to the attack conducted on the SIN, but we found that it performs similar to the CW loss.

\begin{figure}[thb]
\centering
\subfloat[IM]{\includegraphics[width=0.45\textwidth]{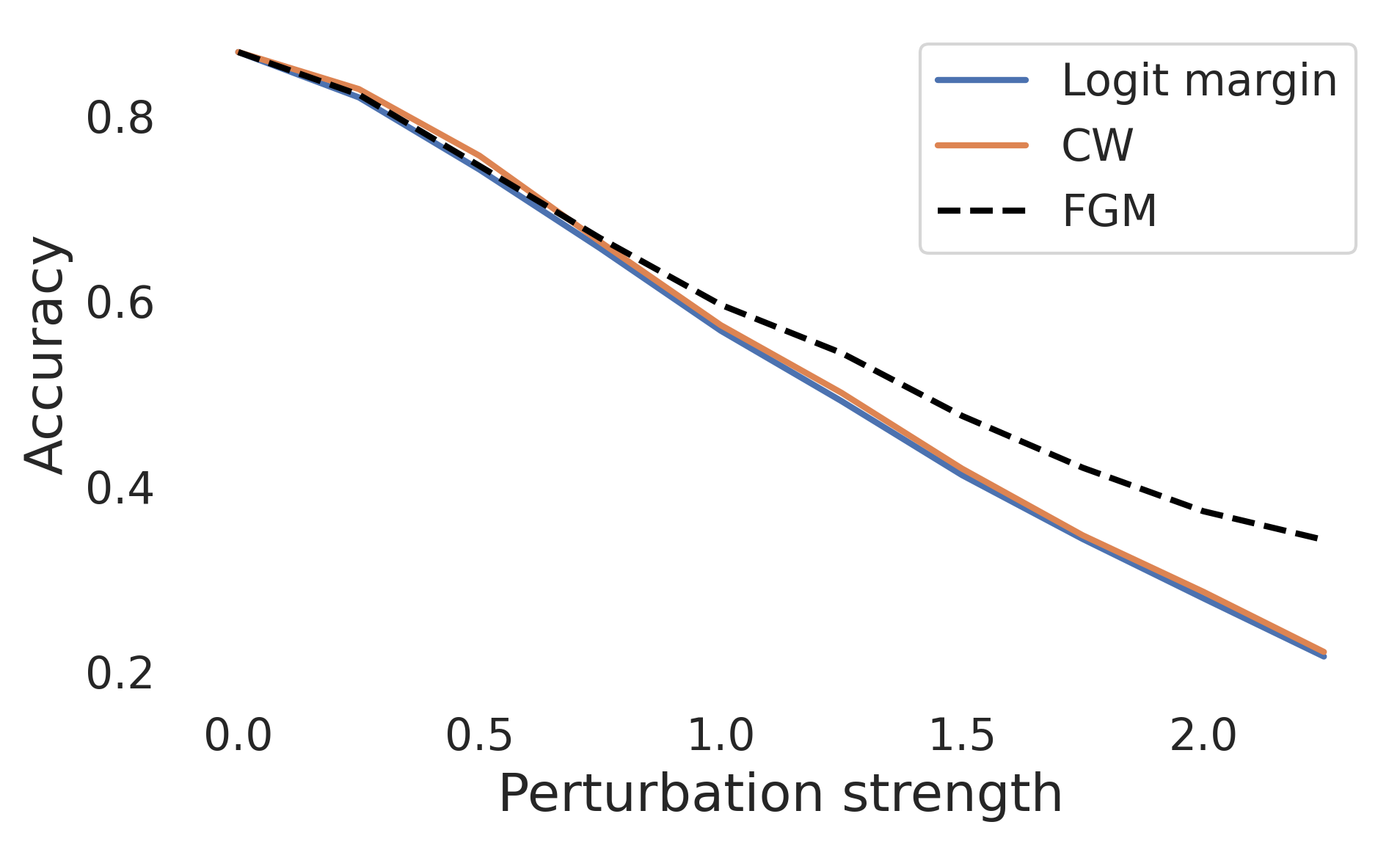} }%
\subfloat[BNN]{\includegraphics[width=0.45\textwidth]{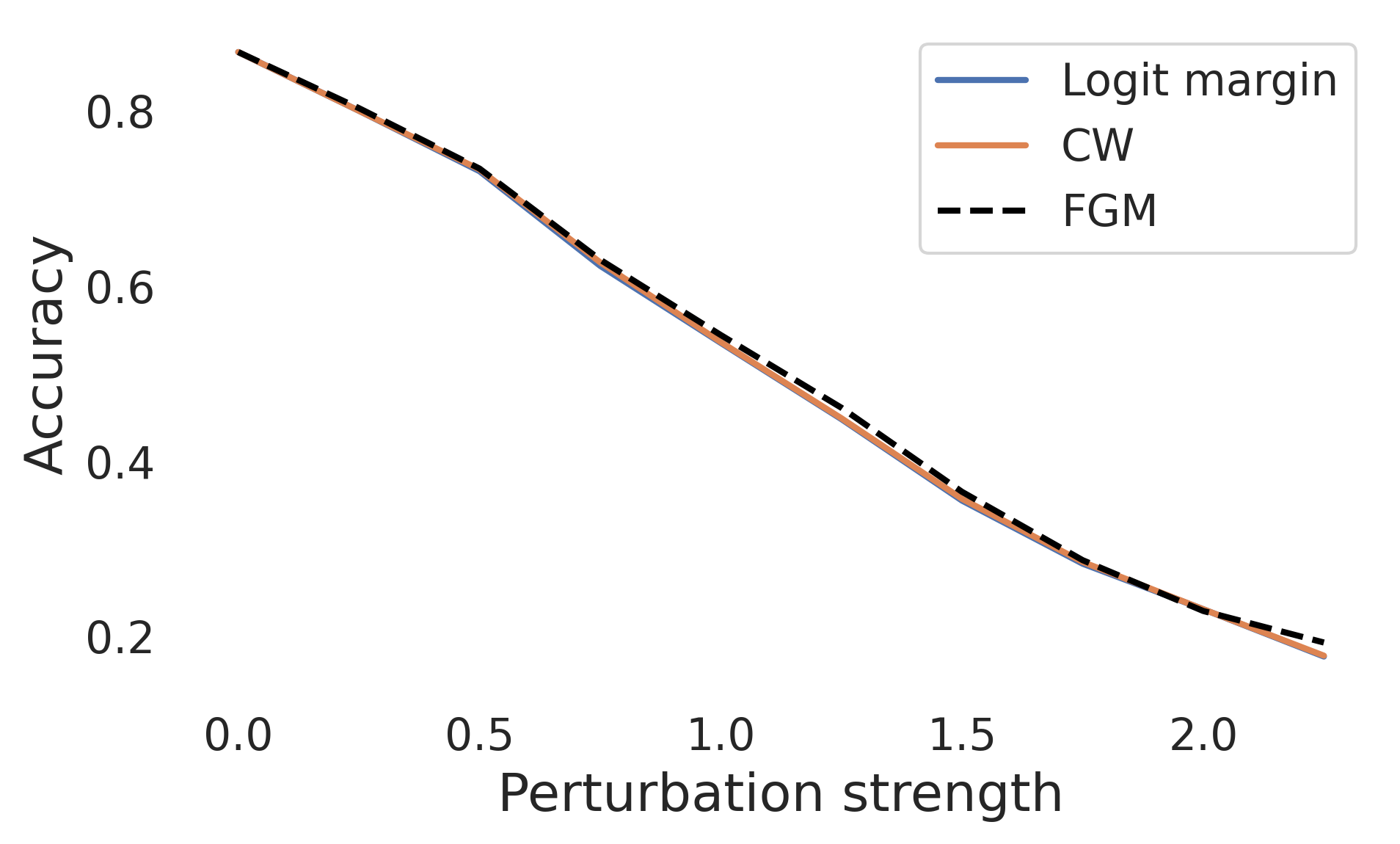} }%
\caption{Accuracy under attack for the first 1,000 test set images from FashionMNIST with varying perturbation strengths and different attack objectives. Each attack was calculated based on 100 samples and $\ell_2$-constraint for models trained on FashionMNIST.}
\label{fig:app_carlini_wagner_loss}
\end{figure}

\subsection{Complementary experiments on robustness in dependence of the amount of samples
used during inference}
\label{app:comp_inference}
In the main part of the paper we argued why, surprisingly, the amount of samples during inference does not influence the robustness, even though we see in figure~\ref{fig:app_angle_inference} that less sample lead to the smallest values for $\cos(\alpha_c^{\mathcal{I}, \mathcal{A}})$. 
As stated in the main part, the increased gradient norm when using only few samples seems to compensate the assumed benefits with regard to the angle for using few samples (c.f. figure~\ref{fig:app_norm_inference}). This can also be seen in table 2 and figure~6 from the main paper, where a (negative) effect on the robustness can only be observed for one inference sample, which also leads to the worst test set accuracy. 
The benign prediction margins are also hardly effected by the increased number of samples during prediction (c.f.~figure~\ref{fig:app_margin_inference}).

\begin{figure}[h]
\centering
\subfloat[IM]{\includegraphics[width=0.3\textwidth]{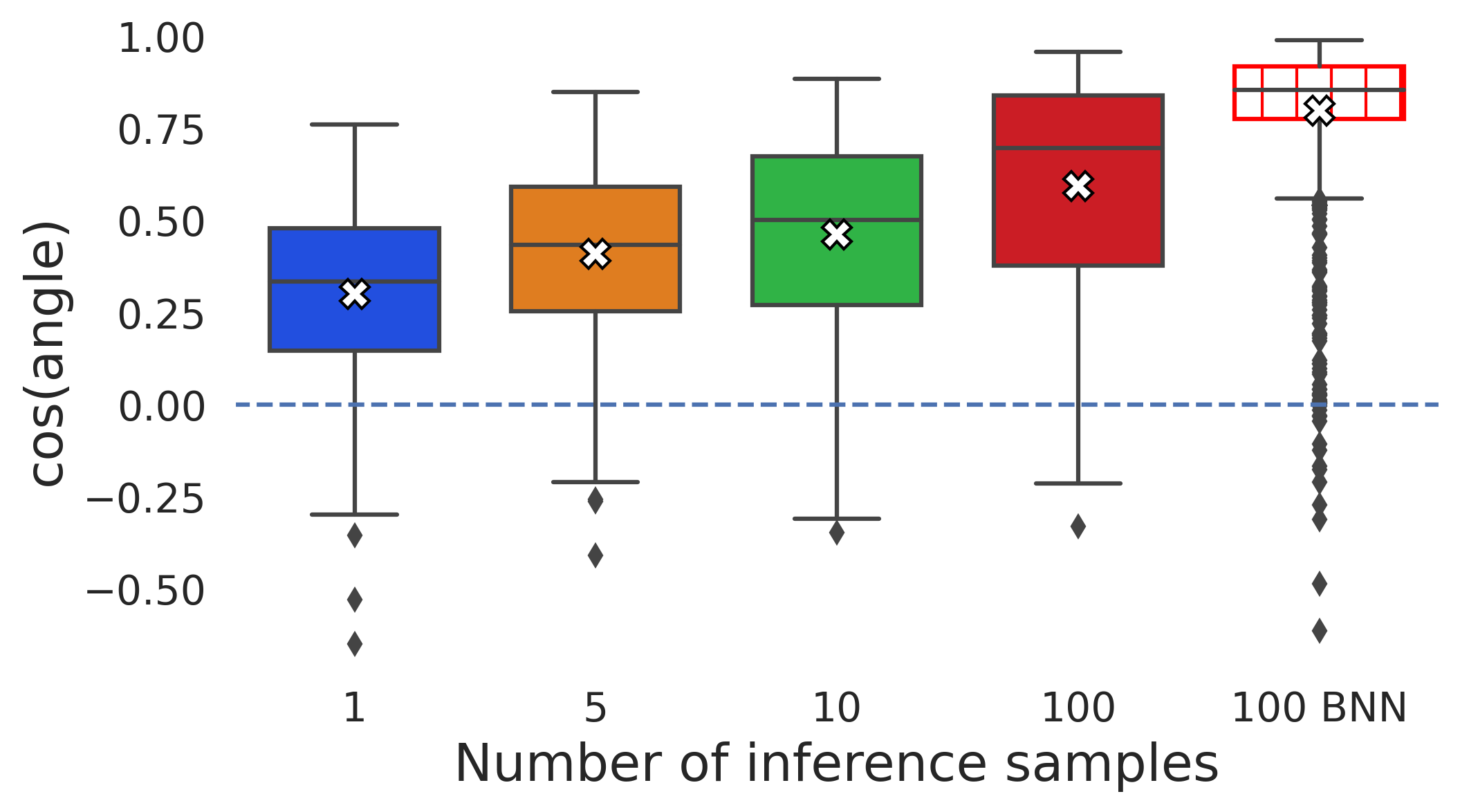} }%
\subfloat[SIN 0.1]{\includegraphics[width=0.3\textwidth]{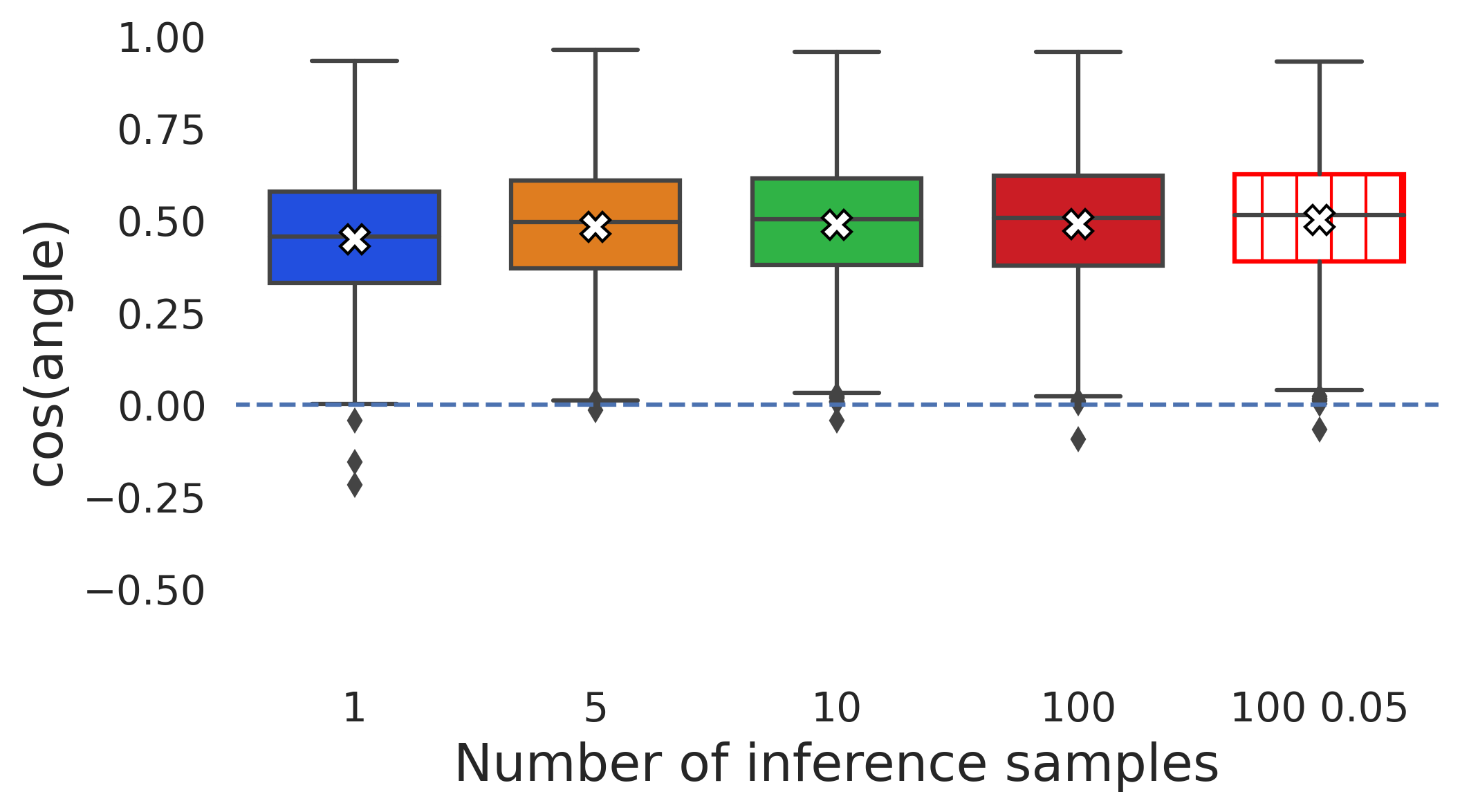} }%
\subfloat[droprate 0.6]{\includegraphics[width=0.3\textwidth]{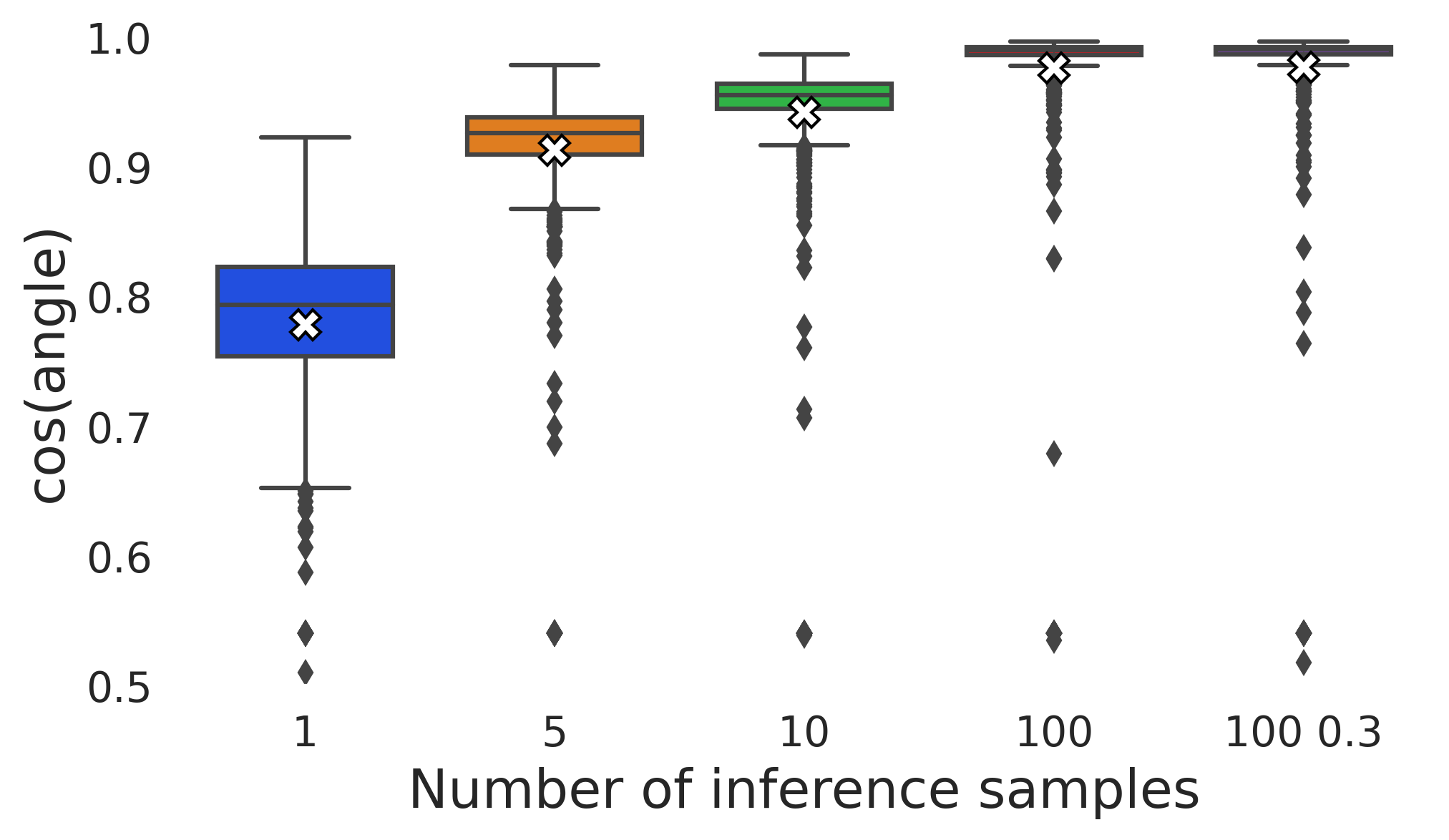} }
\qquad
\subfloat[BNN]{\includegraphics[width=0.3\textwidth]{images/cos_inference_BNN_1.png} }%
\subfloat[SIN 0.05]{\includegraphics[width=0.3\textwidth]{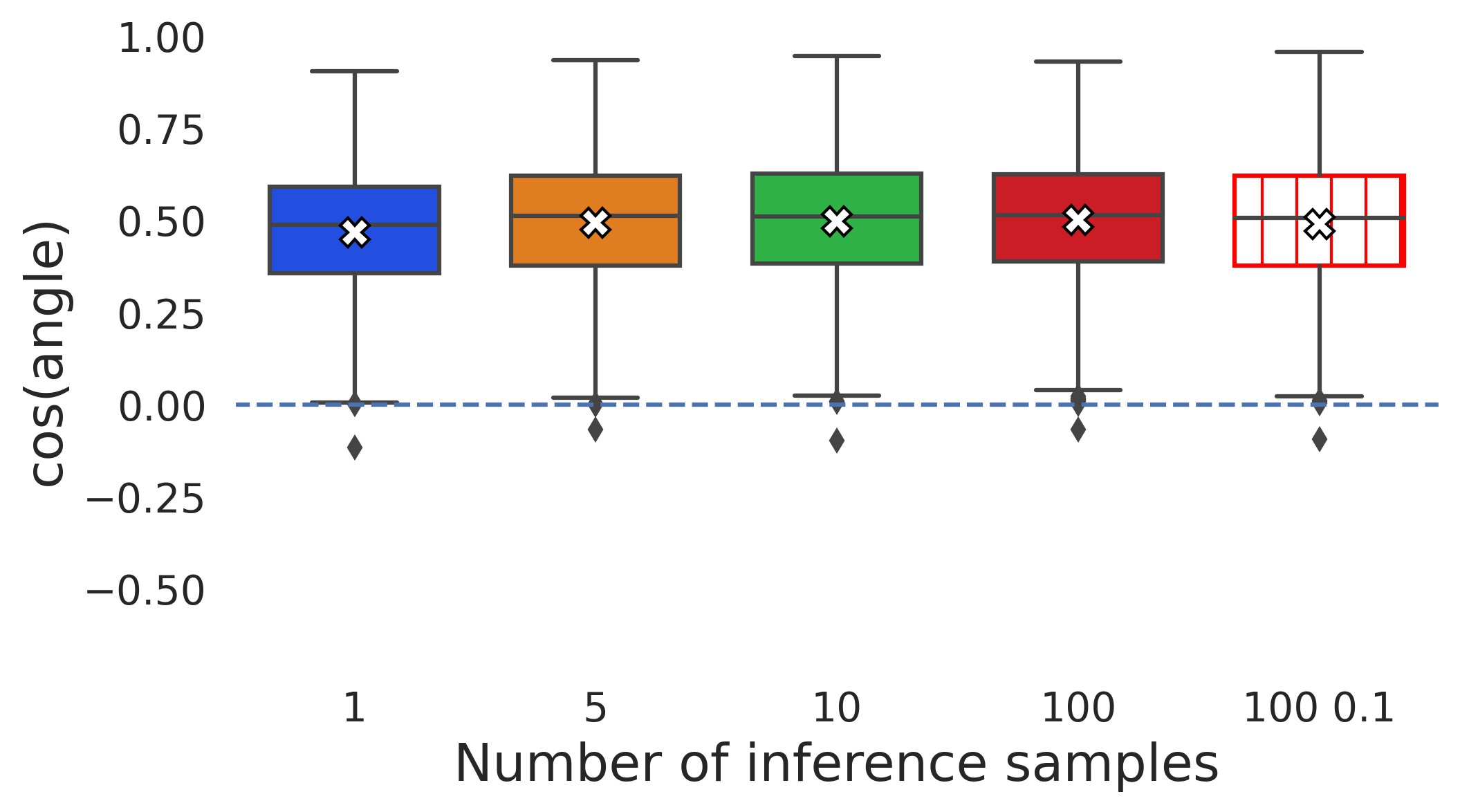} }%
\subfloat[droprate 0.3]{\includegraphics[width=0.3\textwidth]{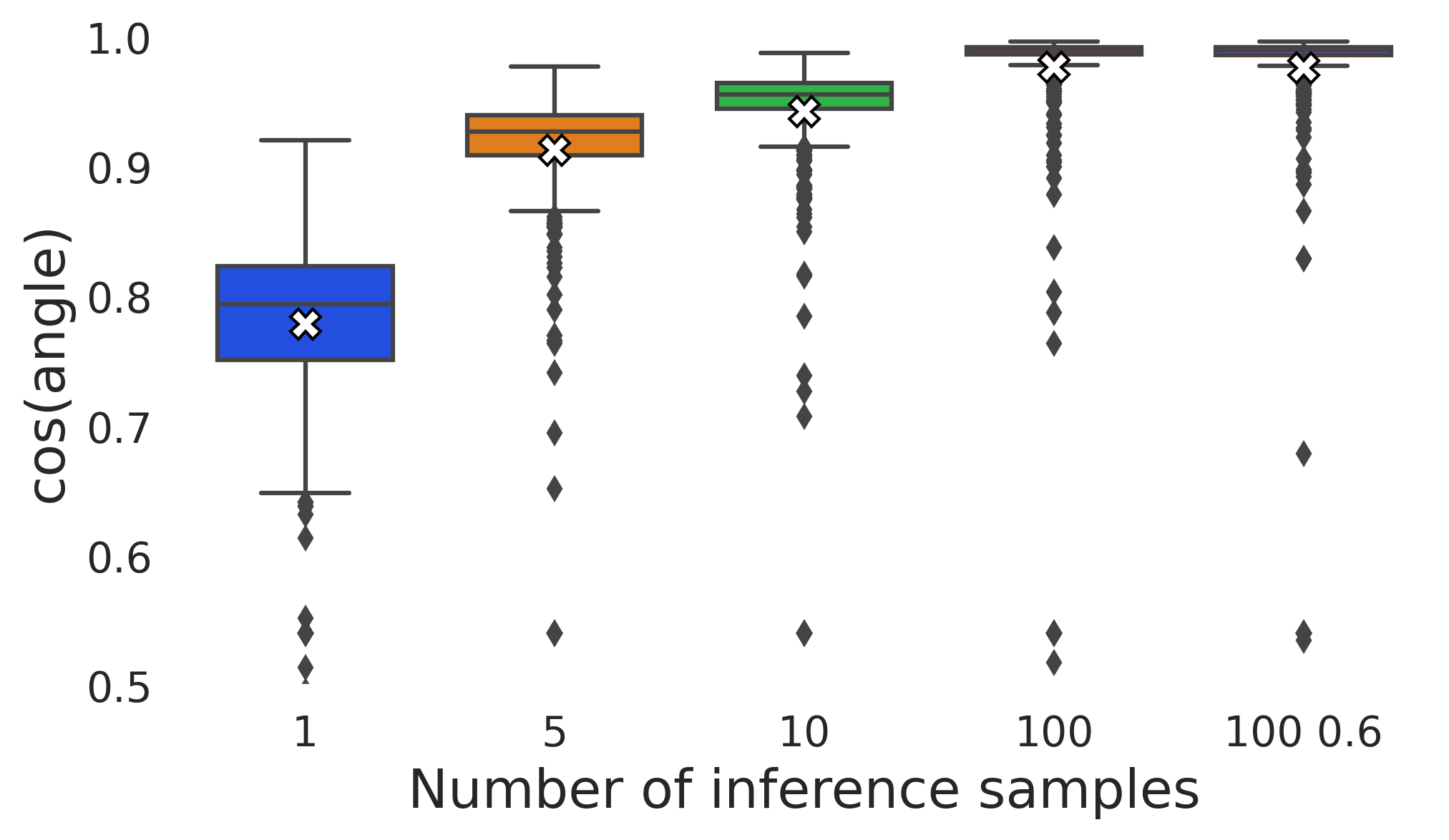} }
\caption{Cosine of the angle for models trained on FashionMNIST ( a), b), d), e) ) and trained on CIFAR10 ( c) and f) ) for different amounts of samples used during inference. Used attack direction $\delta$ was calculated based on 100 sample of FGM under $\ell_2$-norm constraint with $\eta=1.5$ and $0.3$ respectively.}%
\label{fig:app_angle_inference}%
\end{figure}

\newpage

\begin{figure}[h]
\centering
\subfloat[IM]{\includegraphics[width=0.3\textwidth]{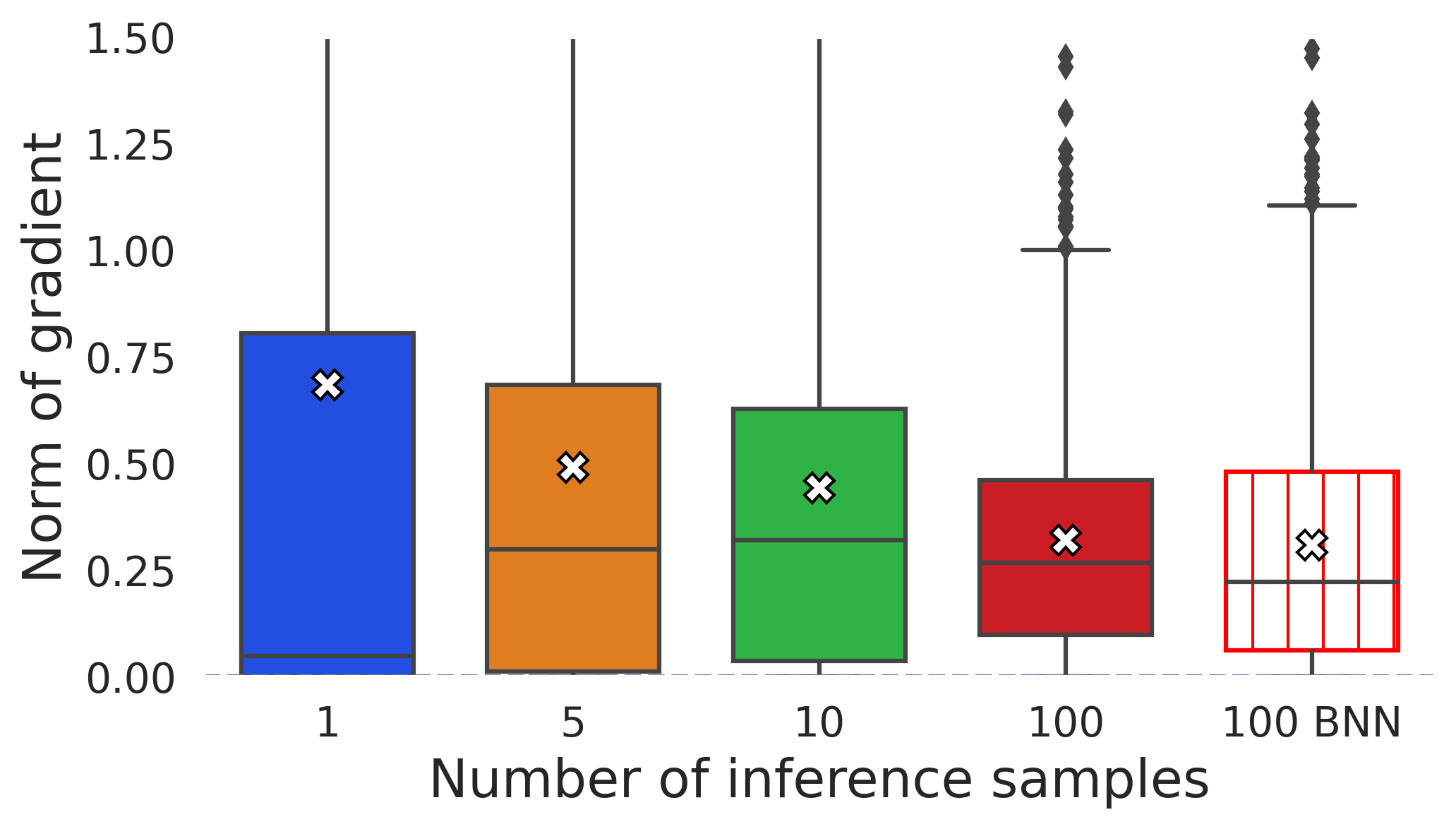} }%
\subfloat[SIN 0.1]{\includegraphics[width=0.3\textwidth]{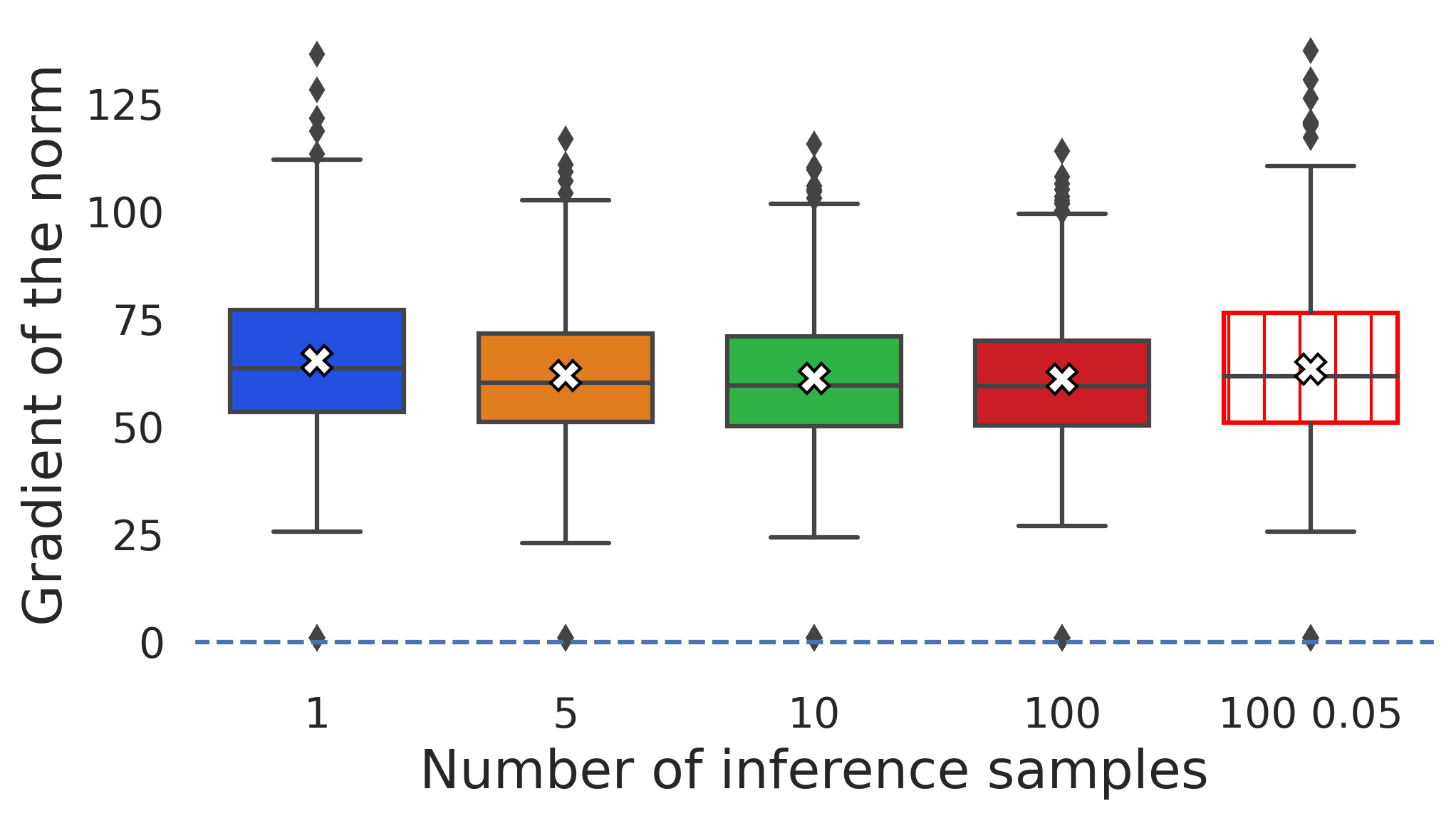} }%
\subfloat[droprate 0.6]{\includegraphics[width=0.3\textwidth]{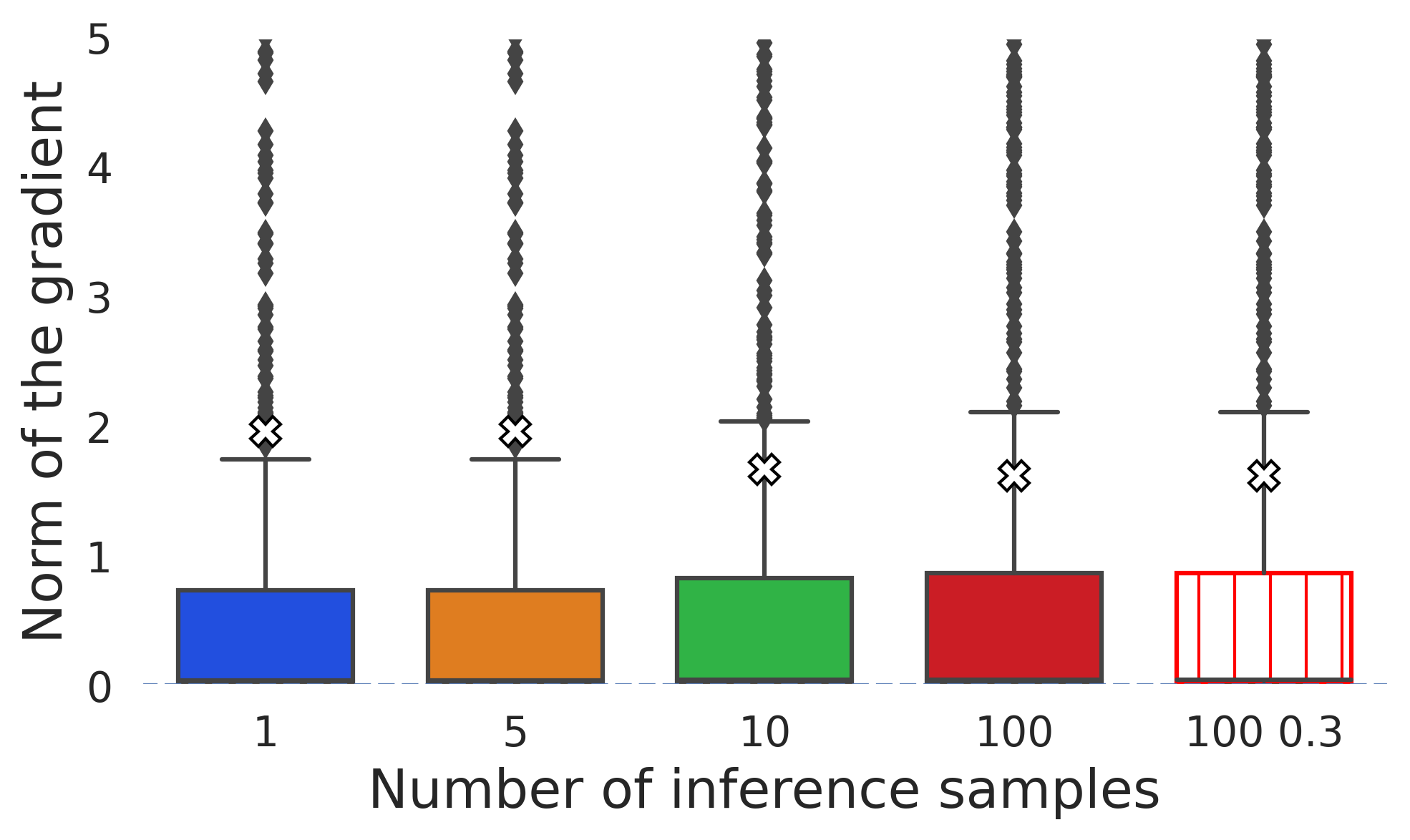} }
\qquad
\subfloat[BNN]{\includegraphics[width=0.3\textwidth]{images/norm_inference_BNN_1.png} }%
\subfloat[SIN 0.05]{\includegraphics[width=0.3\textwidth]{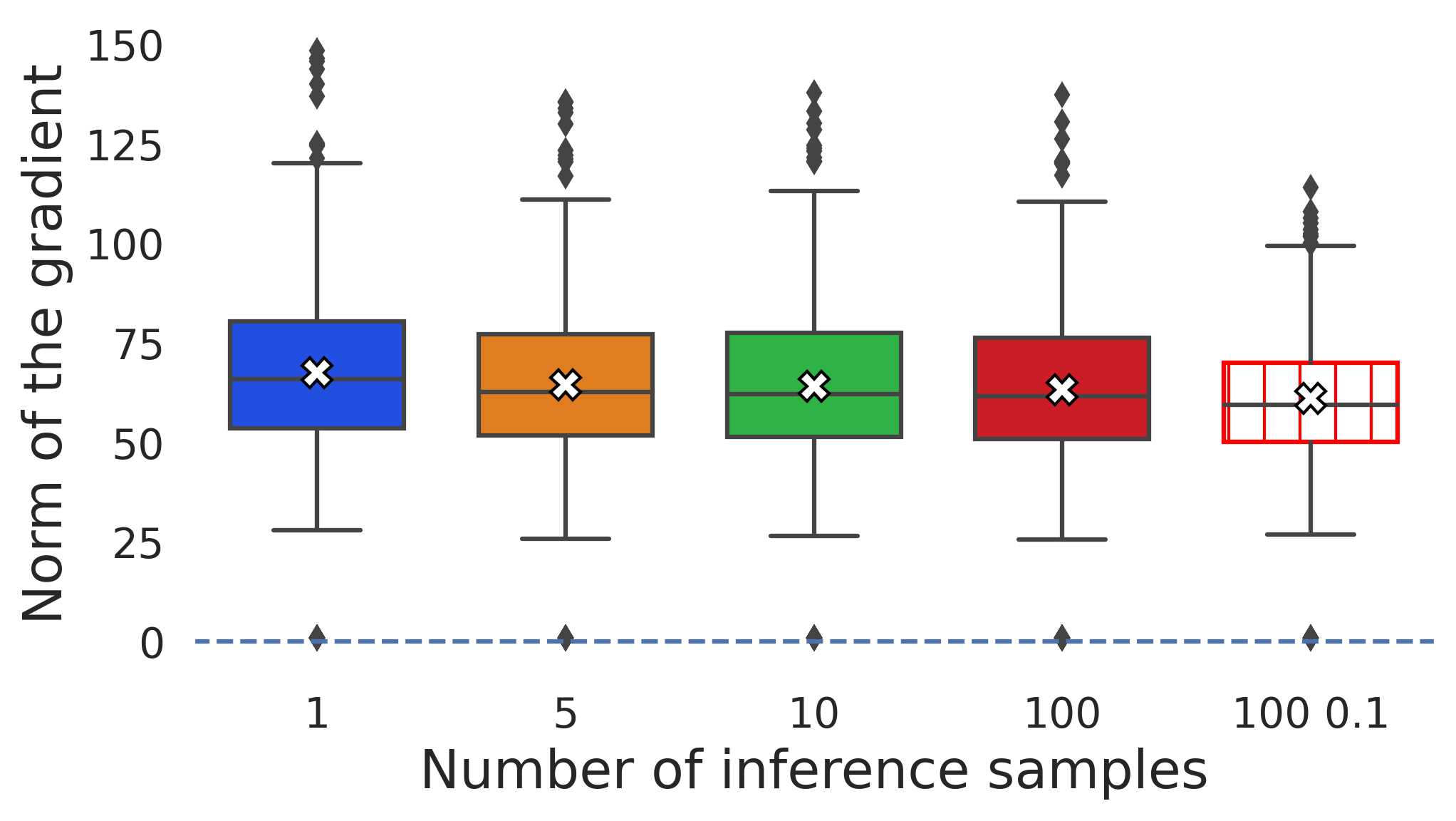} }%
\subfloat[droprate 0.3]{\includegraphics[width=0.3\textwidth]{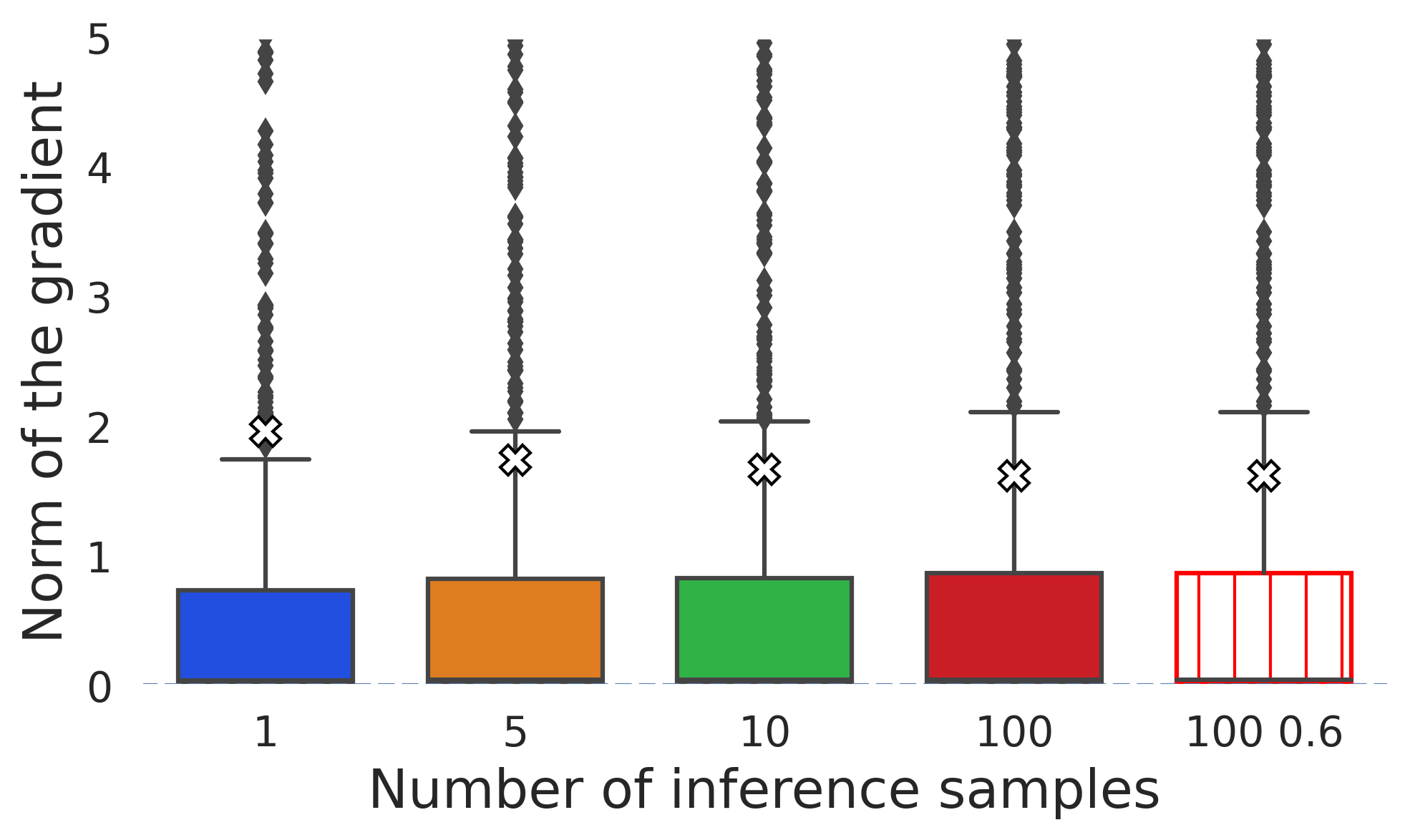} }
\caption{Norm of the gradient for models trained on FashionMNIST ( a), b), d), e) ) and trained on CIFAR10 ( c) and f) ) for different amounts of samples used during inference.}%
\label{fig:app_norm_inference}%
\end{figure}

\begin{figure}[h]
\centering
\subfloat[IM]{\includegraphics[width=0.3\textwidth]{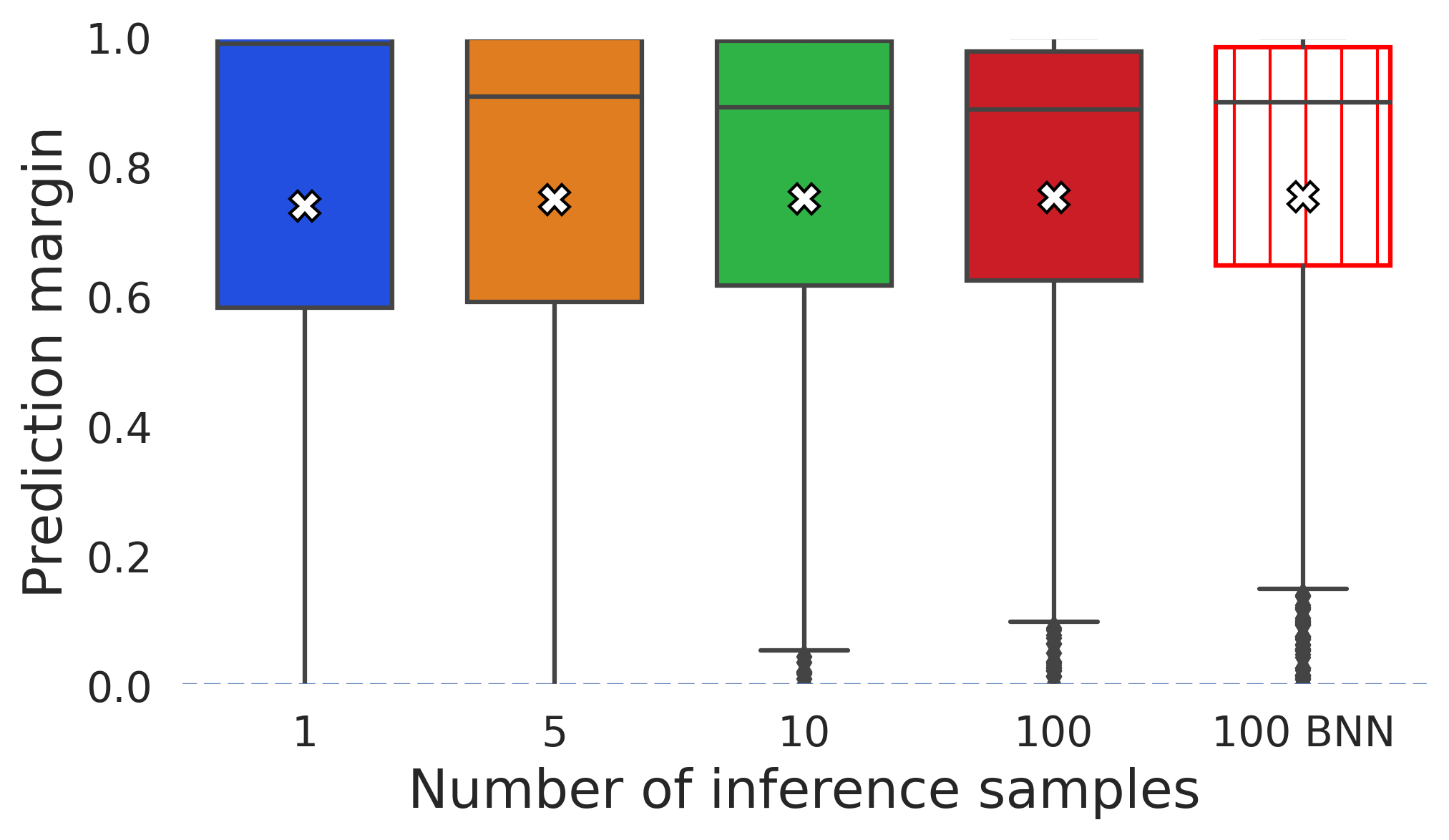} }%
\subfloat[SIN 0.1]{\includegraphics[width=0.3\textwidth]{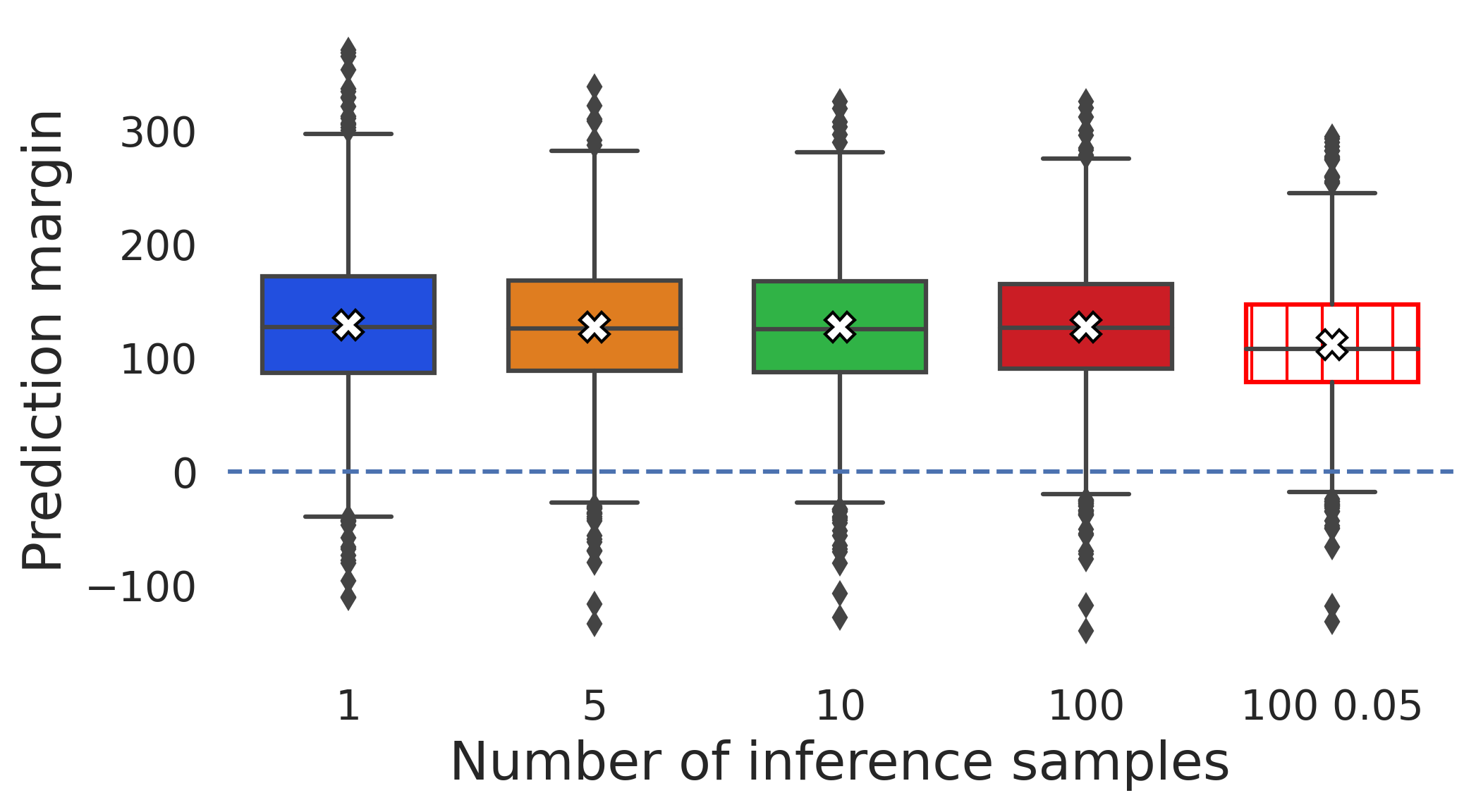} }%
\subfloat[droprate 0.6]{\includegraphics[width=0.3\textwidth]{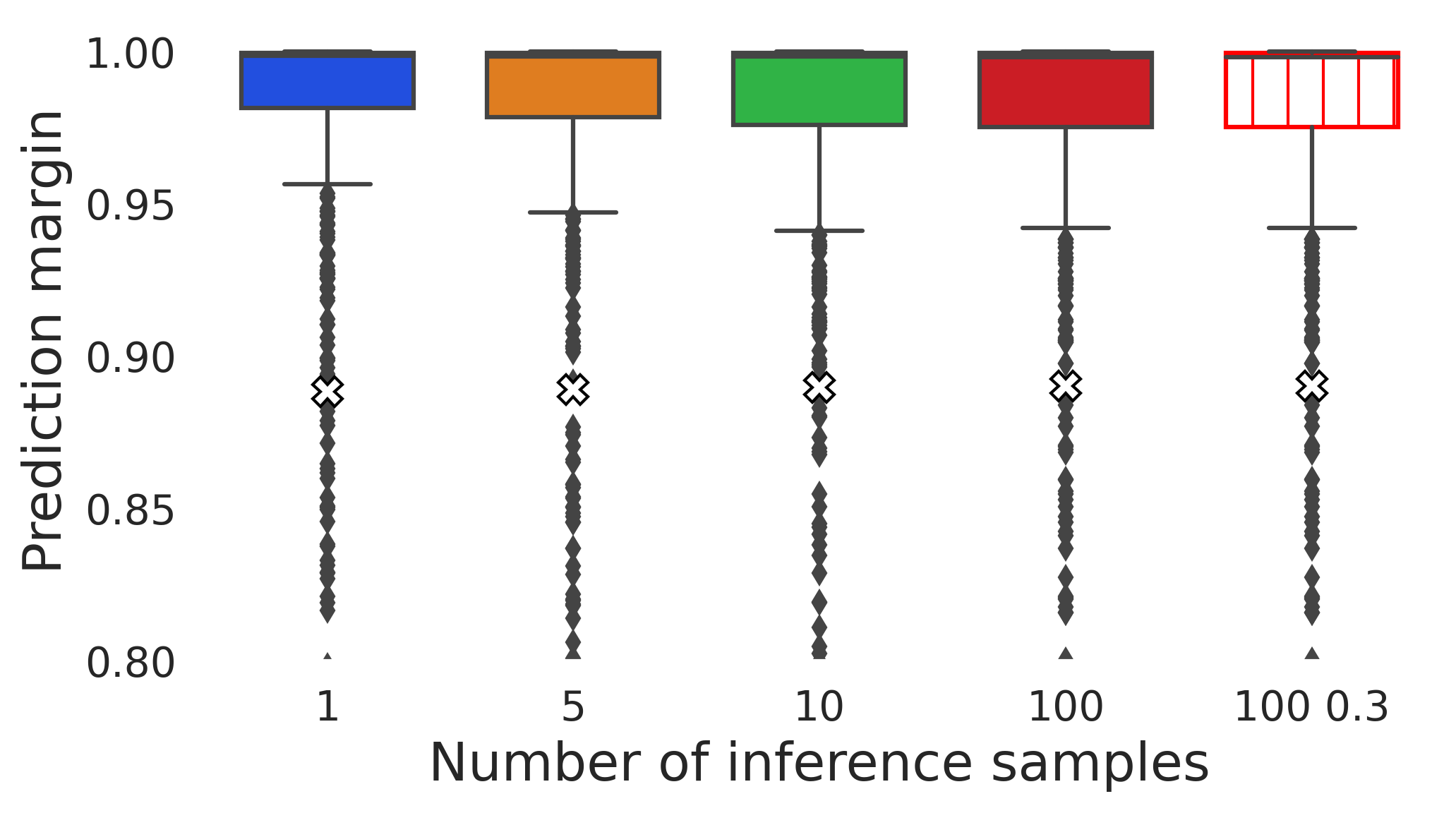} }
\qquad
\subfloat[BNN]{\includegraphics[width=0.3\textwidth]{images/margin_BNN.png} }%
\subfloat[SIN 0.05]{\includegraphics[width=0.3\textwidth]{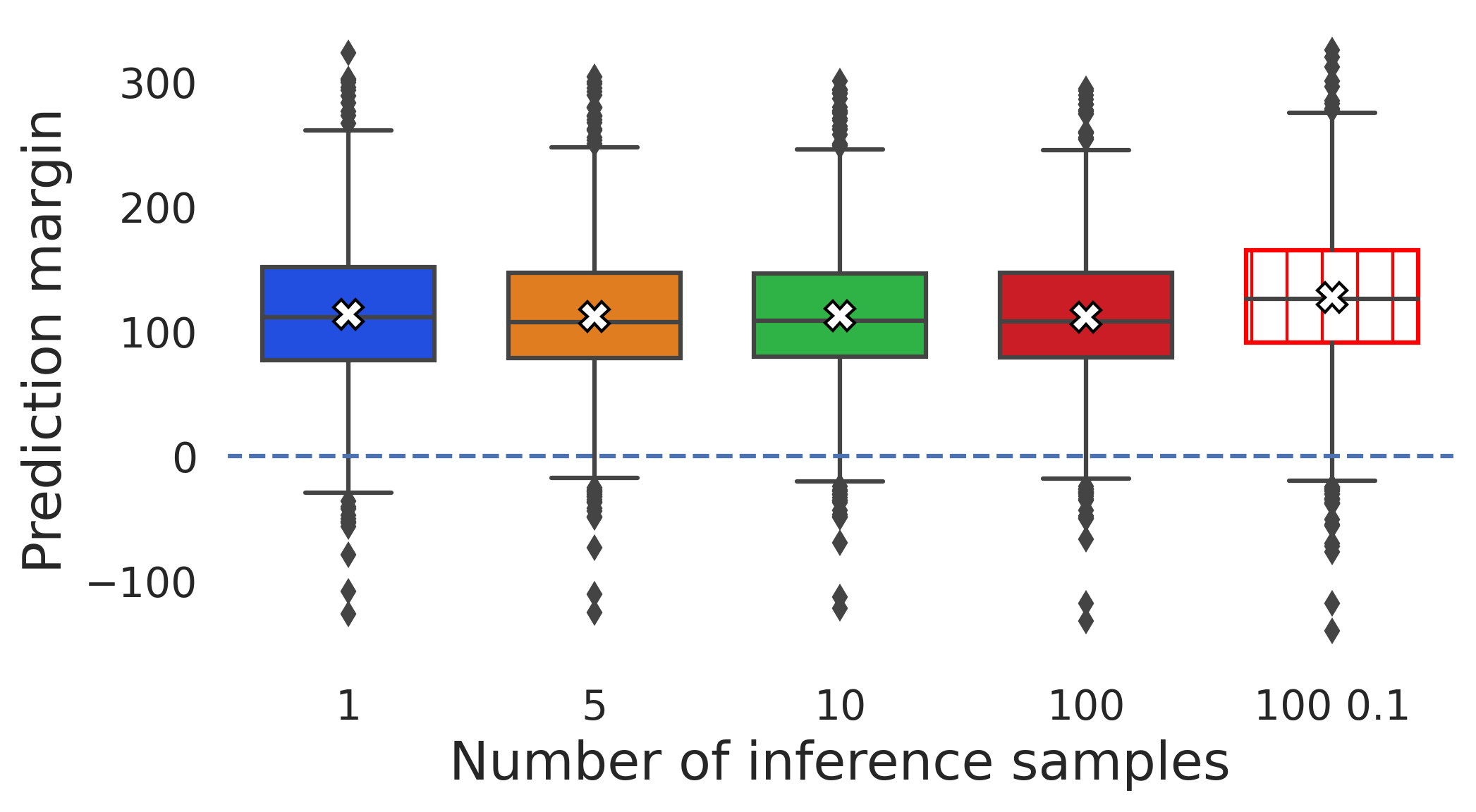} }%
\subfloat[droprate 0.3]{\includegraphics[width=0.3\textwidth]{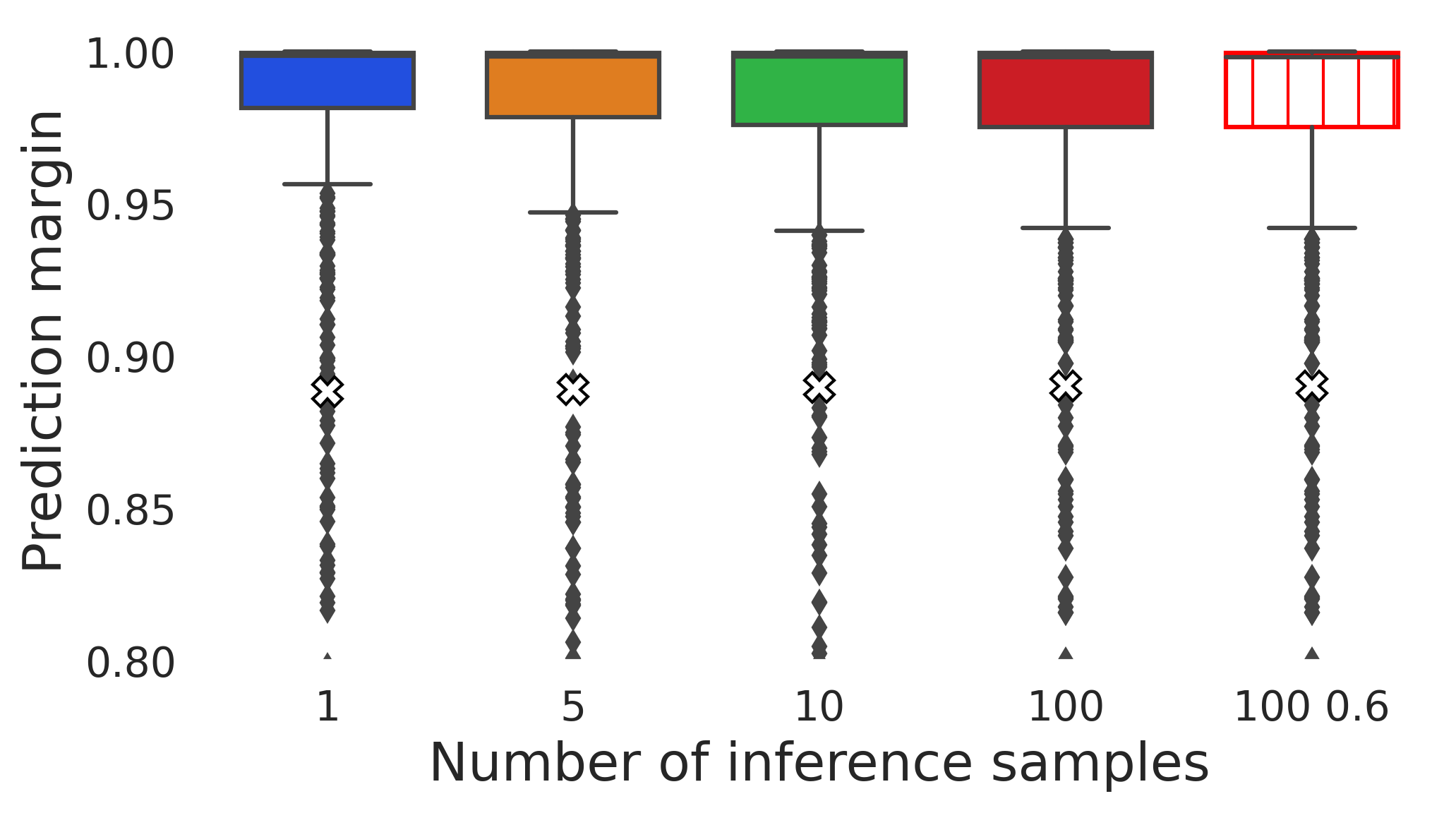} }
\caption{Prediction margin for the benign inputs for models trained on FashionMNIST ( a), b), d), e) ) and trained on CIFAR10 ( c) and f) ) for different amounts of samples used during inference. }%
\label{fig:app_margin_inference}%
\end{figure}

\newpage
\subsection{Experiments for CIFAR100}
\label{App:subsec:cifar100}
For the experiments on CIFAR100 we used yet another method to create stochastic neural networks, namely by applying a Laplace approximation~\citep{mackay92} to an already trained network. This is archived by adapting a Gaussian distribution over the network's parameters such that the mean is given by the maximum a posterior estimate and the covariance are calculated to match the local loss curvature. 
Specifically, we used the GitHub library from~\citet{laplace2021} on top of the adversarial trained wide ResNet70-16 with clean accuracy $69.15$ provided by~\citet{gowal2020uncovering}.
Because of the high amount of parameters in this network we used a last-layer diagonal Gaussian approximation for fitting our posterior distribution from which we sample the $\theta_i$'s for deriving an approximate expected prediction. As in the previous experiments we observe, that the adversarial accuracy decreases with more samples during attack, while the angle between the attack and negative gradient during inference decreases which leads to a cosine increase.

\begin{table}[h]
\label{tab:app_cifar100}
\caption{Adversarial accuracy decrease and $\cos(\alpha_c^{\mathcal{I},\mathcal{A}})$ increase on CIFAR100 with increased amount of samples during attack, where the attack was conducted with $\ell_{\infty}$ norm constraint and perturbation strength 8/225.}
\vskip 0.15in
\begin{center}
\begin{small}
\begin{sc}
\begin{tabular}{r|c|c}
\toprule
\# samples &	Adversarial accuracy &	Average $\cos(\alpha_c^{\mathcal{I},\mathcal{A}}) \pm $ std\\
\hline
1&	48.00&	0.2028 $\pm$  0.112 \\
5&	45.00&	0.2550 $\pm$  0.100 \\
10&	43.90&	0.2680 $\pm$   0.095 \\
\bottomrule
\end{tabular}
\end{sc}
\end{small}
\end{center}
\vskip -0.1in
\end{table}

\end{document}